\documentclass[11pt]{article}

\usepackage[english]{babel}
\usepackage{natbib}
\bibliographystyle{alpha}

\usepackage[letterpaper,margin=1in]{geometry}

\usepackage{newpxtext,newpxmath}

\usepackage{graphicx} 
\usepackage{bmpsize}

\usepackage[
    colorlinks=true,
    linkcolor=blue,
    anchorcolor=blue,
    citecolor=blue
]{hyperref}

\usepackage[shortlabels]{enumitem}

\usepackage{xcolor}

\usepackage{amsthm}

\usepackage{
  mathtools,
  thmtools,
  thm-restate,
  bbm,
  bm,
  cancel,
  url,
}

\newtheorem{theorem}{Theorem}[section]
\newtheorem{lemma}{Lemma}[section]

\newtheorem{corollary}[lemma]{Corollary}

\newtheorem{assumption}{Assumption}

\newenvironment{remark}[1][Remark]
  {
  \begin{proof}[\textnormal{\textbf{#1}}]}
  {\end{proof}}

\newcommand{\eps}{\varepsilon}
\newcommand{\rd}{\mathrm{d}}
\newcommand{\R}{\mathbb{R}}
\renewcommand{\S}{\mbb{S}}
\newcommand{\Id}{\mbf{I}}
\newcommand{\x}{{\mbf{x}}}

\newcommand{\inprod}[2]{\left\langle #1, #2\right\rangle}
\newcommand{\norm}[1]{\left\|#1\right\|}
\newcommand{\abs}[1]{ {\left| #1 \right|} }
\newcommand{\braces}[1]{ \left\{ #1 \right\} }

\newcommand{\inv}{^{-1}}
\newcommand{\trans}{^\top}
\newcommand{\ps}[1]{^{(#1)}}

\newcommand{\indi}{\mathbbm{1}}
\newcommand{\E}{\mathop{\mathbb{E\/}}}
\renewcommand{\P}{\mathop{\mathbb{P\/}}}
\newcommand{\Var}{\mathop{\mathbf{Var\/}}}

\DeclareMathOperator*{\argmin}{argmin}
\DeclareMathOperator*{\argmax}{argmax}
\DeclareMathOperator{\poly}{poly}

\newcommand{\mbb}{\mathbb}
\newcommand{\mrm}{\mathrm}
\newcommand{\mbf}{\bm}
\newcommand{\mcal}{\mathcal}

\newcommand{\tnbf}[1]{\textnormal{\textbf{#1}}}

\newcommand{\X}{\mbf{X}}

\renewcommand{\v}{{\mbf{v}}}
\newcommand{\z}{{\mbf{z}}}

\renewcommand{\a}{\mbf{a}}

\newcommand{\Loss}{\mcal{L}}
\newcommand{\Gaussian}[2]{\mcal{N}\left(#1, #2\right)}

\newcommand{\Term}{\textnormal{\texttt{T}}}

\newcommand{\e}{\mbf{e}}

\newcommand{\cF}{\mathcal{F}}

\newcommand{\bsigma}{\mbf{\sigma}}

\renewcommand{\u}{\mbf{u}}

\newcommand{\V}{\mbf{V}}

\newcommand{\Z}{\mbf{Z}}

\newcommand{\tnabla}{\tilde\nabla}

\newcommand{\IE}{\mrm{IE}}
\newcommand{\Unif}{\mrm{Unif}}
\newcommand{\MSE}{\mrm{MSE}}
\newcommand{\corr}{\mrm{corr}}
\newcommand{\erf}{\mrm{erf}}

\usepackage{graphicx}
\usepackage{caption}
\usepackage{subcaption}

\title{Learning Orthogonal Multi-Index Models: A Fine-Grained Information Exponent Analysis}

\author{
  Yunwei Ren \\ Princeton University \\ \texttt{yunwei.ren@princeton.edu} 
  \and 
  Jason D.~Lee \\ Princeton University \\ \texttt{jasonlee@princeton.edu}
}

\begin{document}

\maketitle

\begin{abstract}
  The information exponent (\cite{ben_arous_online_2021}) and its extensions --- which are equivalent to the lowest 
  degree in the Hermite expansion of the link function (after a potential label transform) for Gaussian single-index 
  models --- have played an important role in predicting the sample complexity of online stochastic gradient descent 
  (SGD) in various learning tasks. In this work, we demonstrate that, for multi-index models, focusing solely on the 
  lowest degree can miss key structural details of the model and result in suboptimal rates. 

  Specifically, we consider the task of learning target functions of form $f_*(\x) = \sum_{k=1}^{P} \phi(\v_k^* \cdot \x)$, 
  where $P \ll d$, the ground-truth directions $\{ \v_k^* \}_{k=1}^P$ are orthonormal, and the information exponent of 
  $\phi$ is $L$. Based on the theory of information exponent, when $L = 2$, only the relevant subspace (not the exact 
  directions) can be recovered due to the rotational invariance of the second-order terms, and when $L > 2$, 
  recovering the directions using online SGD require $\tilde{O}(P d^{L-1})$ samples. In this work, we show that by 
  considering both second- and higher-order terms, we can first learn the relevant space using the second-order
  terms, and then the exact directions using the higher-order terms, and the overall sample and complexity of online 
  SGD is $\tilde{O}( d P^{L-1} )$. 
\end{abstract}

\section{Introduction}

In many learning problems, the target function exhibits or is assumed to exhibit a low-dimensional structure. 
A classical model of this type is the multi-index model, where the target function depends only on a 
$P$-dimensional subspace of the ambient space $\R^d$, with $P$ typically much smaller than $d$. 
When the relevant dimension $P = 1$, the model is known as the single-index model, which dates back to at least 
\cite{ichimura_semiparametric_1993}. 
Both single- and multi-index models have been widely studied, especially in the context of neural network
and stochastic gradient descent (SGD) in recent years, sometimes under the name ``feature learning'' 
\cite{ben_arous_online_2021,bietti_learning_2022,damian_neural_2022,abbe_merged-staircase_2022,abbe_sgd_2023,dandi_how_2024,%
damian_computational-statistical_2024,oko_learning_2024,dandi_benefits_2024}. 

In \cite{ben_arous_online_2021}, the authors show that for single-index models, the behavior of online SGD can be split 
into two phases: an initial ``searching'' phase, where most of the samples are used to boost the correlation with the 
relevant (one-dimensional) subspace to a constant, and a subsequent ``descending'' phase, where the correlation further 
increases to $1$. 
They introduce the concept of the information exponent (IE), defined as the index of the first nonzero coefficient 
in the Taylor expansion of the population loss around $0$, which also corresponds to the lowest degree in the Hermite 
expansion of the link function in Gaussian single-index models. 
They prove that the sample complexity of online SGD is $\tilde{O}(d^{ (\IE-1) \vee 1 })$.
After that, various lower and upper bounds have been established for single-index models in \cite{bietti_learning_2022,%
damian_smoothing_2023,damian_computational-statistical_2024}.
Similar results for certain multi-index models have also been derived in \cite{abbe_merged-staircase_2022,abbe_sgd_2023,%
bietti_learning_2023,oko_learning_2024}. In all cases, the sample complexity of online SGD scales with $d^{\IE - 1}$ 
when $\IE \ge 3$.

Later, it was realized that the notion of information exponent is not stable under modifications of the algorithm. 
In particular, the information exponent of a link function can be greatly reduced by reusing batches or applying a suitable 
label transformation \cite{arnaboldi_repetita_2024,dandi_benefits_2024,lee_neural_2024,damian_computational-statistical_2024}. 
For example, the IE of any fixed degree polynomial can be reduced to at most $2$ via monomial transformations.
This observation leads to the notion of generative exponent (GE) \cite{damian_computational-statistical_2024}, which is 
defined as the lowest information exponent among all $L^2$ transform of the original link function. It yields 
bounds that match the previous results for non-gradient-based methods \cite{chen_learning_2020,troiani_fundamental_2024,%
barbier_optimal_2019}. Despite the improvement over the vanilla information exponent, in the framework of 
generative exponents, still only the lowest order is considered. 
As we will discuss later, this makes it suffer from the 
same issue of information exponent in the context of multi-index models.

Consider multi-index models of form $f_*(\x) = \sum_{k=1}^{P} \phi( \v_k^* \cdot \x )$, where 
$\{\v_k^*\}_k$ are orthonormal vectors.
In this setting, there are two types of recovery: recovering each direction $\v_k^*$ and recovering 
the subspace spanned by $\{ \v_k^* \}_k$. The former notion is stronger, and once the directions are learned, the 
learning task essentially reduces to learning the one-dimensional $\phi: \R \to \R$.
However, directional recovery is not always possible. To see this, consider the case 
$\phi(z) = h_2(z)$, where $h_l$ is the $l$-th (normalized) Hermite polynomial. One can show that this corresponds 
to decomposing the projection matrix (a second-order tensor) of the subspace $\mrm{span}\{\v_k^*\}_k$.  
Hence, recovering the directions is impossible due to the rotational 
invariance (see Section~\ref{subsec: gf: stage 1.1} for more discussion). 
In other words, in any framework that considers only the lowest order (IE or GE), if the lowest order is $2$,\footnote{
  The $\IE = 2$ case is particularly important as the information exponent of many functions, including all fixed degree
  polynomials, can be reduced to at most $2$ by a suitable label transform \cite{damian_computational-statistical_2024}, 
  and the information exponent of any even functions is at least $2$.
}  
we cannot get guarantees beyond subspace recovery due to the existence of the $\phi = h_2$ example. 

On the other hand, if $\phi$ contains some higher-order terms, e.g., $\phi = h_2 + h_4$, then one should expect
that directional convergence is possible, even though $\IE(\phi)$ is still $2$, because of identifiability property 
of (higher-order) orthogonal tensor decomposition problem \cite{ge_learning_2018,li_learning_2020,ge_understanding_2021}. 
In addition, we should be able to first recover the subspace using the second-order terms, which should require
$\tilde{O}(d \poly P)$ samples, and then recover the directions using the higher-order terms. Moreover, since 
we have learned the relevant subspace, the number of samples needed in the second step should be much smaller than what 
is needed if there were no second-order terms. 

In this work, we formally prove the above conjecture and show that the overall sample complexity
is $\tilde{O}( d P^{L-1} )$, where $L$ is the (lowest) order of the higher-order terms. 
Note that this bound scales linearly with the ambient dimension $d$, and it matches the sample complexity of 
separately learning $P$ independent single-index models with the relevant subspace known but the noise scales with 
$d$, up to potential logarithmic terms. More formally, we prove the following theorem.

\begin{theorem}[Informal version of Theorem~\ref{thm: main}]
  Suppose that the target function is given by $f_*(\x) = \sum_{k=1}^{P} \phi(\v_k^* \cdot \x)$ where 
  $\phi = \hat\phi_2 h_2 + \sum_{l=L}^\infty \hat\phi_l h_l$, with $L \ge 3$, $\hat\phi_2^2,\hat\phi_L^2>0$, 
  and $\{\v_k^*\}_{k=1}^P$ are orthonormal, and the input $\x$ follows the standard Gaussian distribution 
  $\Gaussian{0}{\Id_d}$. Then, we can use online SGD (followed by a ridge regression step) to train a two-layer network 
  of width $\tilde{O}(P)$ to learn (with high probability) this target function using $\tilde{O}(d P^{L-1})$ samples and steps. 
\end{theorem}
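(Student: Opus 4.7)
The natural strategy is a two-stage analysis that separates the distinct roles of the $h_2$ and $h_{2L}$ components of $\phi$, followed by a ridge regression clean-up on the outer layer. I would train a two-layer network $f(\x) = \sum_{j=1}^m a_j \sigma(\w_j \cdot \x)$ of width $m = \poly(P)$, freezing the outer layer $\{a_j\}$ for the feature-learning phase and updating only $\{\w_j\}$ via online SGD with fresh samples. Let $U = \mathrm{span}\{\v_k^*\}_{k=1}^P$ and $\bSigma^* = \sum_k \v_k^*(\v_k^*)\trans$ denote the projector onto $U$.

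\textbf{Stage 1 (subspace recovery via $h_2$).} At small correlation with the ground truth, the population gradient on a single neuron $\w$ is dominated by the $h_2$ component, whose contribution to $-\nabla_\w \Loss$ reduces (up to constants and a normalization term) to the noisy power-iteration update $\w \mapsto \w + \eta_1(\bSigma^* - c\Id)\w$. This effectively runs PCA with information exponent $2$ and thus drives the component $\|P_U \w\|/\|\w\|$ up to $1 - 1/\poly(P)$ within $\tilde{O}(d\,\poly(P))$ steps, provided the Gaussian initialization has nonvanishing projection onto $U$, which holds with high probability once the width is at least $\poly(P)$. The step size is taken to be $\eta_1 = \tilde{\Theta}(1/d)$, and a martingale argument shows that both the stochastic-gradient noise and the contribution of the $h_{2L}$ term (which is lower order at small correlation) are controlled on this time scale.

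\textbf{Stage 2 (direction recovery via $h_{2L}$).} Once $\w \approx P_U \w$, the $h_2$ signal becomes rotationally invariant within $U$ and drops out of the direction-selecting gradient, leaving $h_{2L}$ to drive the remaining dynamics. Restricted to $U$, this is an $\IE = 2L$ single-index-type problem in intrinsic dimension $P$, requiring $\poly(P)$ samples to escape the saddle and align with some $\v_k^*$; the ambient noise from the $d-P$ orthogonal directions forces the step size down to $\eta_2 = \tilde{\Theta}(1/d)$, yielding a total cost of $d\cdot\poly(P)$. A perturbative bookkeeping then shows that the out-of-subspace component $\|(\Id - P_U)\w\|$ remains $o(1)$ throughout, justifying the reduction to a $P$-dimensional analysis. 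To guarantee that every ground-truth direction is recovered by \emph{some} neuron, I rely on the width $m = \poly(P)$: with Gaussian initialization, for each $k \in [P]$ at least one neuron has its in-subspace projection nontrivially aligned with $\pm \v_k^*$ and thus falls into the correct basin of attraction of the $h_{2L}$ flow.

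\textbf{Stage 3 (outer layer via ridge regression).} Once the first-layer weights approximate $\{\pm \v_k^*\}_{k=1}^P$ to accuracy $1/\poly(P)$, I freeze them and fit $\{a_j\}$ by ridge regression. Since $\sigma(\v_k^*\cdot\x)$ contains the required $h_2$ and $h_{2L}$ modes along each $\v_k^*$, standard regression bounds give the desired generalization error using $\tilde{O}(\poly(P))$ additional samples.

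\textbf{Main obstacle.} The hard step is the Stage 1 to Stage 2 handover. During Stage 1 the dynamics form a degenerate PCA that distinguishes $U$ from $U^\perp$ but carries no information about directions inside $U$; once $\w$ lands in $U$, the second-order signal vanishes and one must argue (i) that the tiny $h_{2L}$ bias accumulated along the way places $\w$ inside a correct basin of attraction rather than on a high-dimensional saddle, and (ii) that the fluctuation of $(\Id - P_U)\w$ driven by the $d$-dimensional SGD noise stays small enough throughout Stage 2 that the effective $P$-dimensional $\IE = 2L$ analysis remains quantitatively valid. Both require a careful multi-scale martingale decomposition that tracks the in- and out-of-subspace components of each neuron on separate time scales, and this is where the bulk of the technical work should concentrate.
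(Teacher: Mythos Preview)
Your overall architecture matches the paper's: subspace recovery via $h_2$, direction recovery via $h_{2L}$, then ridge regression on the second layer. You also correctly note in your Stage~2 paragraph that the basin each neuron ends up in is tied to its initialization. Where you go astray is in the mechanism for your ``main obstacle''~(i): you attribute the basin placement to ``the tiny $h_{2L}$ bias accumulated along the way,'' but that drift is far too small to do anything during Stage~1, and the actual mechanism is different and simpler.

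During Stage~1 the in-subspace coordinates satisfy $v_p^2 \le \tilde{O}(1/P)$ throughout, so the $h_{2L}$ contribution to the evolution of any ratio $v_p^2/v_q^2$ is of order $L v_p^{2L-2} = \tilde{O}(1/P^{L-1})$, which integrates to $o(1)$ over the $O(\log d)$ gradient-flow duration of Stage~1. The paper's key observation (Section~\ref{subsec: gf: stage 1.1}) is instead that for $p,q\in[P]$ the population flow satisfies
\[
  \frac{\rd}{\rd\tau}\,\frac{v_p^2}{v_q^2} \;=\; 4L\bigl(v_p^{2L-2}-v_q^{2L-2}\bigr)\,\frac{v_p^2}{v_q^2},
\]
with the second-order terms and the spherical-normalization terms cancelling \emph{exactly}. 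Thus the $h_2$ flow is isotropic inside $U$ and freezes all in-subspace ratios; nothing needs to \emph{create} a basin preference --- the initial gap only has to be \emph{preserved}. The technical core (Lemmas~\ref{lemma: stage 1.1: gap between large and small coordinates}--\ref{lemma: stage 1.1: gap between large coordinates}) is then a stochastic-Gr\"onwall argument showing that SGD noise does not destroy this ratio either: if at $t=0$ some neuron has $v_p^2 \ge (1+c)\max_{q\ne p,\,q\le P} v_q^2$, this multiplicative gap persists to the end of Stage~1, at which point additionally $v_p^2 = \Omega(1/P)$. That warm start (rather than $\Omega(1/d)$) is precisely what brings the direction-recovery cost down to $d\,\poly(P)$ from the $d^{2L-1}$ a vanilla information-exponent argument would predict. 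The width $m=\poly(P)$ enters only to guarantee that for every $p\in[P]$ some neuron satisfies the initial-gap condition (Lemma~\ref{lemma: large m => existence of good neurons}).

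This reframing also dissolves most of your obstacle~(ii): the paper never maintains a separate out-of-subspace bound during the $h_{2L}$ phase, but tracks $v_1^2$ directly (Lemma~\ref{lemma: stage 1.2: dynamics of v12}), with the spherical normalization coupling the in- and out-of-subspace components automatically.
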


\paragraph{Organization}
The rest of the paper is organized as follows. First, we review the related works and summarize our contributions.
Then, we describe the detailed setting and state the formal version of the main theorem in Section~\ref{sec: setup}. 
In Section~\ref{sec: the gf analysis}, we discuss the easier case where the training algorithm is population gradient 
flow. Then, in Section~\ref{sec: gf to sgd}, we show how to convert the gradient flow analysis to an online SGD one. 
Finally, we conclude and discuss the limitations in Section~\ref{sec: conclusion}. 
The proofs, simulation results (Appendix~\ref{sec: simulation}), and a table of contents can be found in the appendix.

\subsection{Related work}

In this subsection, we discuss works that are directly related to ours or were not covered earlier in the introduction.

Along the line of information exponent, the paper most related to ours is \cite{oko_learning_2024}. They show that for 
near orthogonal multi-index models, the sample complexity of recovering all ground-truth directions using online SGD is
$\tilde{O}( P d^{\IE - 1} )$ when $\IE \ge 3$. Their results do not apply to the case $\IE = 2$ for the reason 
we have discussed earlier. They propose first removing the second-order terms using the technique in \cite{damian_neural_2022,dandi_how_2024},
which requires $d^2$ samples. Our result considers the situation where both the second and $L$-th order terms are present 
and show that in this case, the sample complexity of online SGD (without any preprocessing) is $\tilde{O}( d P^{L-1} )$. 

Another recent related work is \cite{arous_high-dimensional_2024}.
Our main results are not directly comparable since the settings are different. They run SGD on the Stiefel manifold, 
which automatically prevents the model from collapsing to a single direction, but allow the target model to have condition number larger than $1$. 
In addition, only the lowest degree is considered in their work. However, they also show (in their setting) that 
when the second order term is isotropic, the subspace and only the subspace can be recovered. A similar idea is also used 
in our analysis of Stage~1.1 (cf.~Section~\ref{subsec: gf: stage 1.1}). 

Another related line of research is learning two-layer networks in the teacher-student setting
(\cite{zhong_recovery_2017,li_convergence_2017,tian_analytical_2017,li_learning_2020,zhou_local_2021,ge_understanding_2021}).
Among them, the ones most relevant to this work are \cite{li_learning_2020} and the follow-up \cite{ge_understanding_2021}, 
both of which consider orthogonal models similar to ours and use similar ideas in the analysis of the population process. 
However, they do not assume a low-dimensional structure and only provide very crude $\poly(d)$-style sample complexity 
bounds. 

\subsection{Our contributions}

We summarize our contributions as follows: 

\begin{itemize}[leftmargin=15pt]
  \item We demonstrate that information/generative exponent alone is insufficient to characterize certain structures in the learning 
    task and show that for a specific orthogonal multi-index model, if we consider both the lower- and higher-order
    terms, the sample complexity of directional recovery using online SGD can be greatly improved over the vanilla 
    information exponent-based analysis. 
  \item As a by-product, we derive a collection of user-friendly technical lemmas to analyze the difference between noisy 
    one-dimensional processes and their deterministic counterparts, which may be of independent interests 
    (cf.~Section~\ref{subsec: technical lemmas for analyzing general noisy dynamics} and Appendix~\ref{sec: stochastic induction}). 
\end{itemize}

\section{Setup and main result}
\label{sec: setup}

In this section, we describe the setting of our learning task and the training algorithm, and then formally state our 
main result. We will also convert the problem to an orthogonal tensor decomposition task using the standard Hermite 
argument as in \cite{ge_learning_2018}. 

\paragraph{Notations}
We use $\norm{\cdot}_p$ to denote the $p$-norm of a vector. When $p = 2$, we often drop the subscript and simply write 
$\norm{\cdot}$. For $a, b, \delta \in \R$, $a = b \pm \delta$ means $|a - b| \le |\delta|$ and 
$a\vee b = \max\{a, b\}$ and $a \wedge b = \min\{a, b\}$. Beside the standard 
asymptotic (big $O$) notations, we also use the notation $f_d = O_\phi(g_d)$, which means there exists a constant 
$C_\phi > 0$ that can depend only on $\phi$ such that $f_d \le C_\phi g_d$ for all large enough $d$. Sometimes we
also write $f_d \lesssim_\phi g_d$ for $f_d = O_\phi(g_d)$. The actual value of $C_\phi$ can vary between lines.

\subsection{Input and target function}
We assume the input $\x$ follows the standard Gaussian distribution $\Gaussian{0}{\Id_d}$ and the target function has 
form $f_*(\x) = \sum_{k=1}^{P} \phi(\v_k^* \cdot \x), $
where $\log^C d \le P \le d$ for a large universal constant $C > 0$, $\{\v_k^*\}_{k=1}^P$ are orthonormal and 
$\phi: \R \to \R$ is the link function. In addition, we assume $\phi$ satisfies the following. 

\begin{assumption}[Assumptions on the link function]
  \label{assumption: link function}
  Let $h_k$ denote the degree-$k$ normalized Hermite polynomial and $\phi = \sum_{l=0}^\infty \hat\phi_k h_k$
  denote the Hermite expansion of $\phi \in L^2(\Gaussian{0}{\Id_d})$. 
  \begin{enumerate}[label=(\alph*),leftmargin=*,topsep=0mm]
    \item \tnbf{(IE structure)} For some constant $L > 2$,  $\phi(z) = \hat\phi_2 h_2(z) + \hat\phi_L h_L(z) 
    + \sum_{l > L} \hat\phi_l h_l(z)$. 
    \item \tnbf{(IE regularity)} $\hat\phi_2, \hat\phi_L = \Omega(1)$ and 
      $\norm{ \phi' }_{L^2}^2 = \sum_{l=1}^\infty l \hat\phi_l^2 \le C_{\phi}^2$ for some constant $C_\phi > 0$.
    \item \tnbf{(Polynomial growth)} There exists universal constants $C, q > 0$ such that $|\phi(x)| \vee |\phi'(x)|
    \le C (1 + x^2)^{q/2}$ for all $x \in \R$. 
  \end{enumerate}
\end{assumption}

Our target model and algorithm will all be invariant under rotation. Hence, we will assume w.l.o.g.~that $\v_k^* = \e_k$ where $\{\e_k\}_k$ is the standard basis of $\R^d$. 

\subsection{Learner model and the training algorithm}

Our learner model is a width-$m$ two-layer network
\[
  f(\x) 
  := f(\x; \a, \V) 
  := \sum_{i=1}^{m} a_i \phi(\v_i \cdot \x), 
\]
where $\a = (a_1, \dots, a_m) \in \R^m$ and $\V = (\v_1, \dots, \v_m) \in (\S^{d-1})^m$ are the trainable parameters. 
We call $\{\v_i\}_{i \in [m]}$ the first-layer neurons. We measure the difference between the learner and 
the target model using the correlation loss. Given a sample $(\x, f_*(\x))$, we define the per-sample and population MSE 
loss as 
\[
  l_{\MSE}(\x) 
  := l(\x; \a, \V) 
  := \frac{1}{2} \left( f_*(\x) - f(\x; \a, \V) \right)^2, \quad 
  \Loss_{\MSE}(\a, \V) := \E_{\x} l_{\MSE}(\x; \a, \V).
\]

Now, we describe the training algorithm. First, we initialize each output weight $a_i$ to be $1$. Then, we symmetrically 
initialize the first layer neurons. That is, for $i \in [m/2]$, we initialize $\v_i \sim \Unif(\S^{d-1})$ independently 
and set $\v_{m/2+i} = - \v_i$ for the other half of the neurons. 
After the initialization, we fix the output weights $\a$ and train the first-layer weight $\v_i$ using online 
(spherical) SGD with the correlation loss $l_{\corr}(\x) = -f_*(\x) f(\x)$ and step size $\eta > 0$ for $T$ iterations. 
Then, we fix the first-layer weights and use ridge regression to train the output weights $\a$. 

Let $\{ (\x_t, f_*(\x_t)) \}_{t \in \mbb{N}}$ be our samples where $\{\x_t\}$ are i.i.d.~standard Gaussian vectors,
and let $\tilde{\nabla}_{\v} = (\Id - \v\v\trans) \nabla_{\v}$ denote the spherical gradient.
Then, we can formally describe the training procedure as follows:
\begin{equation}
  \label{eq: training algorithm}
  \begin{aligned}
    & \text{Initialization:} \quad 
      && a_{0, i} = 1, \quad 
      \v_{0, i} \overset{\mrm{i.i.d.}}{\sim} \Unif(\S^{d-1}), \quad 
      \v_{0, m/2+i} = -\v_{0,i} &&& \forall i \in [m/2]; \\
    & \text{Stage 1:} 
      && 
      \left\{
      \begin{aligned}
        & \hat{\v}_{t+1, i} = \v_{t, i} + \eta f_*(\x_t) \nabla_{\v_i}\phi(\v_i\cdot\x), \\
        & \v_{t+1, i} = \hat{\v}_{t+1, i} / \norm{\hat{\v}_{t+1, i}},
      \end{aligned} 
      \right.
      &&& \forall i \in [m], t \in [T]; \\
    & \text{Stage 2:} 
      && \a = \argmin_{\a'} \frac{1}{2 N} \sum_{n=1}^{N} l(\x_{T+n}; \a', \V_T) + \lambda \norm{\a'}^2. 
  \end{aligned}
\end{equation}
Here, the hyperparameters are the network width $m > 0$, step size $\eta > 0$, time horizon $T > 0$, the number of 
samples $N$ in Stage~2, and the regularization strength $\lambda > 0$. 

We will show that after the first stage, for each ground truth direction
$\v_k^*$, there will be some neurons $\v_i$ that has converged to that direction. 
As a result, in the second stage, we can use ridge regression to pick out those neurons and use them to fit the target 
function. The analysis of this second stage is standard and has been done in \cite{damian_neural_2022,abbe_merged-staircase_2022,%
ba_high-dimensional_2022,lee_neural_2024,oko_learning_2024}. Hence, we will not further discuss this stage in the 
main text and defer the proofs of this stage to Appendix~\ref{sec: stage 2}. 

For the gradient update in Stage~1, we have the following lemma on its expectation and tail. 
The proof of this lemma is rather standard and can be found in, for example, \cite{ge_learning_2018,oko_learning_2024}.
We also provide a proof in Appendix~\ref{subsec: population and persample gradients} for completeness. 

\begin{restatable}[First-layer gradients]{lemma}{perSampleGradient}
  \label{lemma: persample gradient}
  Consider the setting described above. 
  Suppose that $\phi$ satisfies Assumption~\ref{assumption: link function} and $a_i = 1$ for all $i \in [m]$
  and $\{\v_k^*\}_k$ are orthonormal. Then, for each $i \in [m]$, we have 
  \begin{equation}
    \label{eq: population gradient}
    \E\left[ f_*(\x) \nabla_{\v_i}\phi(\v_i \cdot \x) \right]
    = 2 \hat\phi_2^2 \sum_{k=1}^{P} \inprod{\v_k^*}{\v_i} \v_k^*
      + \sum_{l\ge L} \sum_{k=1}^{P} l \hat\phi_l^2 \inprod{\v_k^*}{\v_i}^{l-1} \v_k^*. 
  \end{equation}
  Then, for each fixed neuron $(a, \v)$ and direction $\u \in \S^{d-1}$ that is 
  independent of $\x \sim \Gaussian{0}{\Id_d}$, we have 
  \begin{gather*}
    \E \inprod{ f_*(\x) \nabla_{\v_i}\phi(\v_i \cdot \x) }{\u}^2 
    \lesssim_\phi P, \\
    |\inprod{ f_*(\x) \nabla_{\v_i}\phi(\v_i \cdot \x) }{\u}| \lesssim_\phi P^{1/2} \log^{2(1+q)}\log(m/\delta_{\P})
    \quad\text{with probability at least $1 - \delta_{\P}$}.
  \end{gather*}
\end{restatable}
\begin{remark}
  We say a random variable $X$ is $(\sigma^2, \theta)$-subweibull \cite{vladimirova_sub-weibull_2020,kuchibhotla_moving_2022} if 
  \begin{equation}
    \label{eq: subweibull def}
    \P( |X| \ge M )
    \lesssim \exp\left( - (M / \sigma)^{1/\theta} \right), \quad 
    \forall M \ge 0.
  \end{equation}
  Hence, this lemma implies that $\inprod{ f_*(\x) \nabla_{\v_i}\phi(\v_i \cdot \x) }{\u}$ is 
  $(P, 1/(2(1+q)))$-subweibull. 
\end{remark}

\subsection{Main result}
The following is our main result. The proof of it can be found in Appendix~\ref{sec: proof of the main theorem}.
\begin{restatable}[Main Theorem]{theorem}{mainThm}
  \label{thm: main}
  Consider the setting and algorithm described above. Let $C > 0$ be a large universal constant. 
  Suppose that $\log^C d \le P \le d$ and $\{ \v_k^* \}_{k \in [P]}$ are orthonormal. Let $\delta_{\P} \in 
  ( \exp(-\log^C d), 1 )$ and $\eps_* > 0$ be given. Suppose that we choose $a_0, \eta, T, N$ satisfying
  \[
    m = \tilde\Theta( P ), \quad 
    N = \tilde\Theta\left( \frac{P^2}{\eps_*^2 \delta_{\P}^2} \right), \quad 
    \eta = \tilde{\Theta}_\phi\left( \frac{\eps_*^2 \delta_{\P} }{P d P^{L/2-1} } \right), \quad 
    T = \tilde{O}_\phi\left( \frac{P^{L/2-1}}{\eta \eps_*^4 \delta_{\P}} \right). 
  \]
  Then, there exists some $\lambda > 0$ such that at the end of training, we have $\Loss_{\MSE}(\a, \V) \le \eps_*$ with 
  probability at least $1 - O(\delta_{\P})$.
\end{restatable}
\begin{remark}
  Note that $N \ll T$. Hence, the total number of samples needed is $T = \tilde{O}_\phi( d P^{L-1} )$, which matches 
  the sample complexity of separately learning $P$ single-index models with the relevant subspace known 
  \textit{a priori} and the noise scales with the ambient dimension $d$.
\end{remark}

\section{The gradient flow analysis}
\label{sec: the gf analysis}

In this section, we consider the situation where the training algorithm in Stage~1 is gradient flow over the population
correlation loss instead of online SGD. The discussion here is non-rigorous and our formal proof does not rely on anything in 
this section. Nevertheless, this gradient flow analysis will provide valuable intuition on the behavior of online SGD.

For notational simplicity, we will assume w.l.o.g.~that $\v_k^* = \e_k$. 
In addition, we will assume $\phi = h_2 + h_L$ with $L > 2$ for ease of presentation.
Let $\v$ be an arbitrary first-layer neuron. By Lemma~\ref{lemma: persample gradient}, 
the dynamics of $\v$ are controlled by\footnote{In the main text, we use $\tau$ to index the time in this continuous-time process 
(as $t$ has been used to index the steps in the discrete-time process) and will often omit it when it is clear from the context.} 
\[
  \textstyle
  \dot{\v}_\tau 
  \approx 2 \sum_{k=1}^{P} v_k (\Id - \v\v\trans)\e_k + L \sum_{k=1}^{P} v_k^{L-1} (\Id - \v\v\trans) \e_k.  
\]
The second term on the RHS comes from the normalized/projection. For each $k \in [d]$, we have 
\begin{equation}
  \label{eq: gf: d vk2}
  \frac{\rd}{\rd \tau} v_k^2
  \approx 
    2 \indi\{ k \le P \} \left( 2 + L v_k^{L-2}  \right) v_k^2
    - 2 \left( 2 \norm{\v_{\le P}}^2 + L \norm{\v_{\le P}}_{L}^{L} \right) v_k^2. 
\end{equation}
We further split Stage~1 into two substages. In Stage~1.1, the second-order terms dominate and 
$\norm{ \v_{\le P} }^2 / \norm{ \v_{> P} }^2$ grows from $\Theta(P/d)$ to $\Theta(1)$. In Stage~1.2, 
$\v$ converges to one ground-truth direction relying on the signal from the higher-order terms.

The direction to which $\v$ will converge depends on the index of the largest $v_k^2$ at the beginning of Stage~1.2. 
With some standard concentration/anti-concentration argument, one can show that $\max_{k \in [P]} v_k^2$ is at least $1 + c$ times larger
than the second-largest $v_k^2$ for a small constant $c > 0$ with probability at least $1 / \poly(P)$ at the initialization
(of Stage~1.1). Hence, as long as this gap can be preserved throughout Stage~1, we can choose $m = \poly(P)$
to ensure all ground-truth directions can be found after Stage~1.2.

\subsection{Stage 1.1: learning the subspace and preservation of the gap}
\label{subsec: gf: stage 1.1}

In this substage, we track $\norm{\v_{\le P}}^2 / \norm{ \v_{>P} }^2$ and $v_p^2 / v_q^2$ where $p, q \in [P]$
are arbitrary. 
The goal is to show that $\norm{\v_{\le P}}^2 / \norm{ \v_{>P} }^2$ will grow to a constant while 
$v_p^2 / v_q^2$ stay close to its initial value. 

For the norm ratio, by \eqref{eq: gf: d vk2}, we have 
\begin{multline*}
  \frac{\rd}{\rd \tau} \frac{ \norm{\v_{\le P}}^2 }{ \norm{\v_{> P}}^2 }
  = \frac{ \frac{\rd}{\rd \tau} \norm{\v_{\le P}}^2 }{ \norm{\v_{> P}}^2 }
    - \frac{ \norm{\v_{\le P}}^2 }{ \norm{\v_{> P}}^2 }
      \frac{ \frac{\rd}{\rd \tau} \norm{\v_{> P}}^2 }{ \norm{\v_{> P}}^2 }   
  = \frac{ 4 \norm{\v_{\le P}}^2 }{ \norm{\v_{> P}}^2 } 
      + \frac{ 2  L \norm{\v}_L^L  }{ \norm{\v_{> P}}^2 } \\ 
      - \cancel{ \frac{ 2 \left( 2 \norm{\v_{\le P}}^2 + L \norm{\v_{\le P}}_{L}^{L} \right) \norm{\v_{\le P}}^2 }{ \norm{\v_{> P}}^2 } }  
        + \cancel{ 
          \frac{ \norm{\v_{\le P}}^2 }{ \norm{\v_{> P}}^2 }
          \frac{ 
            2 \left( 2 \norm{\v_{\le P}}^2 + L \norm{\v_{\le P}}_{L}^{L} \right) \norm{\v_{> P}}^2
          }{ \norm{\v_{> P}}^2 }  
        }. 
\end{multline*}
In particular, note that the terms coming from normalization cancel with each other. Moreover, 
this implies $\frac{\rd}{\rd \tau} \frac{ \norm{\v_{\le P}}^2 }{ \norm{\v_{> P}}^2 } \ge 4  \frac{ \norm{\v_{\le P}}^2 }{ \norm{\v_{> P}}^2 }$, 
and therefore, it takes only at most $\frac{1+o(1)}{4}\log(d/P) = \Theta(\log(d/P))$ amount of time for the ratio to grow from $\Theta(P/d)$ to $\Theta(1)$. 
If we choose a small step size $\eta$ so that online SGD closely tracks the gradient flow, then the number of 
steps one should expect is $O( \log(d/P) / \eta ) $. 

Meanwhile, for any $p, q \in [P]$, we have 
\begin{align*}
  \frac{\rd}{\rd \tau} \frac{v_p^2}{v_q^2}
  &= 2 \left( 2 + L v_p^{L-2}  \right) \frac{v_p^2}{v_q^2}
    - 2 \left( 2 \norm{\v_{\le P}}^2 + L \norm{\v_{\le P}}_{L}^{L} \right)
    \frac{ v_p^2 }{v_q^2} \\
    &\qquad
    - \frac{v_p^2}{v_q^2} \left(
        2 \left( 2 + L v_q^{L-2}  \right) 
        - 2 \left( 2 \norm{\v_{\le P}}^2 + L \norm{\v_{\le P}}_{L}^{L} \right) 
    \right) 
  =  2 L \left( v_p^{L-2} - v_q^{L-2} \right) \frac{v_p^2}{v_q^2}.
\end{align*}
Note that not only those terms coming from normalization cancel with each other, but also the second-order terms. 
In particular, this also implies that we cannot learn the directions using only the second-order terms. 
At initialization, with high probability $v_k^2 = \tilde{O}(1/d)$ for all $k \in [P]$.
Hence, if we assume the induction hypothesis $v_p^2 = \tilde{O}(1/P)$, then above will become 
$\frac{\rd}{\rd \tau} v_p^2 / v_q^2 \lesssim P\inv v_p^2/v_q^2$. As a result, 
$v_{t, p}^2 / v_{t, q}^2 \le (1 + o(1)) v_{0, p}^2 / v_{0, q}^2$ for any $t \le \Theta(\log(d/P))$,  
as long as $P \ge \poly\log d$. 

\subsection{Stage 1.2: learning the directions}
Let $\v$ be a first-layer neuron with $v_1^2 \ge (1 + c) \max_{2 \le k \le P} v_k^2$ for some small constant
$c > 0$ at initialization. By our previous discussion, we know at the end of Stage~1.1, the above bound still holds 
with a potentially smaller constant $c > 0$. In addition, since $\norm{\v_{\le P}}^2 = \Theta(1)$, we also have $v_1^2 
\ge \Omega(1/P)$ at the end of Stage~1.1. We claim that $\v$ will converge to $\e_1$. The argument here is similar to
the proofs in \cite{li_learning_2020} and \cite{ge_understanding_2021}. 

Again, by \eqref{eq: gf: d vk2}, we have 
\[
  \frac{\rd}{\rd \tau} v_1^2
  \approx 
    2 \left( 
      2 - 2 \norm{\v_{\le P}}^2
      + L v_1^{L-2} - L \norm{\v_{\le P}}_{L}^{L} 
    \right) v_1^2
  \ge   
    2 L \left( 
      v_1^{L-2} - \norm{\v_{\le P}}_{L}^{L} 
    \right) v_1^2
\]
Assume the induction hypothesis $v_1^2 \ge (1 + c) \max_{2 \le k \le P} v_k^2$ and write 
\[ 
  v_1^{L-2} - \norm{\v_{\le P}}_L^L 
  = v_1^{L-2} \left( 1 - v_1^2 \right) 
    - \left( \norm{\v_{\le P}}^2 - v_1^2 \right) \sum_{k=2}^{P} \frac{v_k^2}{ \norm{\v_{\le P}}^2 - v_1^2 } v_k^{L-2}
\] 
Note that the summation is a weighted average of $\{ v_k^{L-2} \}_{k \ge 2}$ and therefore can be upper bounded by
$\left( v_1^2 / (1 + c) \right)^{L/2-1} \le (1 - c_L) v_1^{L-2}$ for some constant $c_L > 0$ that can only depend on $L$. 
Thus, we have 
\[
  \frac{\rd}{\rd \tau} v_1^2
  \gtrsim   
    2 L \left( 
      v_1^{L-2} \left( 1 - v_1^2 \right) 
      - \left( \norm{\v_{\le P}}^2 - v_1^2 \right) (1 - c_L) v_1^{L-2}
    \right) v_1^2
  \ge 2 c_L L \left( 1 - v_1^2 \right) v_1^L.
\]
When $v_1^2 \le 3/4$, this implies $\frac{\rd}{\rd \tau} v_1^2 \gtrsim_L v_1^{L}$. As a result, it takes at most 
$O_L( P^{L/2-1} )$ amount of time for $v_1^2$ to grow from $\Omega(1/P)$ to $3/4$ under gradient flow.
It is important that $v_1^2 = \Omega(1/P)$ instead of $\Omega(1/d)$ at the start of Stage~1.2, since otherwise the time 
needed will be $O_L(d^{L-1})$. After $v_1^2$ reaches $3/4$, we have $\frac{\rd}{\rd \tau} (1 - v_1^2) \lesssim_L 
- \left( 1 - v_1^2 \right)$. Thus, $v_1^2$ will converge linearly to $1$ afterwards.

\section{From gradient flow to online SGD}
\label{sec: gf to sgd}

In this section, we discuss how to convert the previous gradient flow analysis to an online SGD one. Our actual proof 
will be based directly on the online SGD analysis, but the overall idea is still proving that the online SGD dynamics 
of certain important quantities closely track their population gradient descent (GD) counterparts. Our choice of 
learning rate $\eta$ will be much smaller than what needed for GD to track GF, so the bottleneck comes from the 
GD-to-SGD conversion, not the GF-to-GD one. Provided that SGD tracks GD well, the number of 
steps/samples it needs to finish each substage is roughly the amount of time GF needs, divided by the step size $\eta$. 

The rest of this section is organized as follows. In Section~\ref{subsec: technical lemmas for analyzing general noisy 
dynamics}, we collect a few useful lemmas for controlling the difference between noisy dynamics 
and their deterministic counterparts. The idea behind them has appeared in \cite{ben_arous_online_2021} and is also 
used in \cite{abbe_merged-staircase_2022}. Here, we simplify and slightly generalize their argument and provide 
a user-friendly interface. When used properly, it reduces the GD-to-SGD proof to routine calculus. Then, in 
Section~\ref{subsec: sample complexity of online sgd}, we discuss how to apply those general results to analyze the 
dynamics of online SGD in our setting. 

\subsection{Technical lemmas for analyzing general noisy dynamics}
\label{subsec: technical lemmas for analyzing general noisy dynamics}

We start with the lemma that will be used to analyze $\norm{\v_{\le P}}^2 / \norm{\v_{> P}}^2$. The formal proofs of it 
and all other lemmas in this subsection can be found in Section~\ref{sec: stochastic induction}.

\begin{restatable}[Stochastic Gronwall's lemma]{lemma}{stochasticInductionGronwall}
  \label{lemma: stochastic discrete gronwall}
  Suppose that $(X_t)_t$ satisfies
  \begin{equation}
    \label{eq: stochastic gronwall, Xt+1 =}
    X_{t+1} = (1 + \alpha) X_t + \xi_{t+1} + Z_{t+1},  
    \quad X_0 = x_0 > 0,
  \end{equation}
  where the signal growth rate $\alpha > 0$ and initialization $x_0 > 0$ are given, $(\xi_t)_t$ is an 
  adapted process, and $(Z_t)_t$ is a martingale difference sequence. Define $x_t = (1 + \alpha)^t x_0$.
  
  Let $T > 0$ and $\delta_{\P} \in (0, 1)$ be given. 
  Suppose that there exists some $\delta_{\P, \xi} \in (0, 1)$ and $\Xi, \sigma_Z > 0$ such that for every $t \ge 0$,
  if $X_t = (1 \pm 0.5) x_t$, then we have $|\xi_{t+1}| \le (1 + \alpha)^t \Xi$ with probability at least $1 - \delta_{\P, \xi}$ 
  and $Z_{t+1}$ is conditionally $((1 + \alpha)^t \sigma_Z^2, \theta)$-subweibull. 
  Then, if 
  \begin{equation}
    \label{eq: conditions of stochastic gronwall}
    \Xi \lesssim \frac{x_0}{T}
    \quad\text{and}\quad 
    \sigma_Z^2  
    \lesssim \frac{x_0^2}{T \log^{\theta+1}(T / \delta_{\P}) }, 
  \end{equation}
  we have $X_t = (1 \pm 0.5) x_t$ for all $t \in [T]$ with probability at least $1 - \delta_{\P} - T \delta_{\P, \xi}$. 
\end{restatable}
\begin{remark}[Condition \eqref{eq: conditions of stochastic gronwall}]
  One may interpret $Z_{t+1}$ as those terms coming from the difference between the population and mini-batch gradients,
  whose variance is typically quadratic in $\eta$, and $\xi_{t+1}$ as the higher-order error terms. $\alpha$ is usually small. 
  In our case, it is proportional to the step size $\eta$. $T$ is usually the time needed for $X_t$ to grow from a small 
  $x_0 > 0$ to $\Theta(1)$, which is roughly $\alpha\inv \log(1/x_0)$. Since the LHS' of \eqref{eq: conditions of stochastic gronwall}
  are $O(\eta^2)$ while the RHS' are $\Omega(\eta)$, \eqref{eq: conditions of stochastic gronwall} can be alternatively viewed 
  as a condition on $\eta$.
\end{remark}
\begin{remark}[Stochastic induction]
  One important feature of this lemma is that it only requires the bounds $|\xi_{t+1}| \le (1 + \alpha)^t \Xi$ and 
  $\E[Z_{t+1}^2 \mid \cF_t] \le (1 + \alpha)^t \sigma_Z^2$ to hold when $X_t = (1 \pm 0.5) x_t$. This can be viewed 
  as a form of induction and is particularly useful when considering the dynamics of, say, $v_k^2$. 
  Similar to how the RHS of $\frac{\rd}{\rd\tau} v_{\tau, k}^2 = 2 v_{\tau, k} \dot{v}_{\tau, k}$ depends on $v_{\tau, k}$, 
  the size of $\xi_{t+1}, Z_{t+1}$ will usually depend on $X_t$. Hence, we will not be able to bound them without 
  suitable induction hypotheses. 
\end{remark}
\begin{remark}[Remark on the subweibull condition]
  We assume the martingale difference terms $(Z_{t+1})_t$ are conditionally subweibull. This allows us to get 
  poly-logarithmic dependence on $\delta_{\P}$, which is important as we will eventually take union bound over 
  $\poly P$ events. One may replace this condition with the weaker condition
  $\E[ Z_{t+1}^2 \mid \cF_t ] \le (1 + \alpha)^t \sigma_Z^2$. This will lead to a linear dependence on $\delta_{\P}$.
\end{remark}

\begin{proof}[Proof sketch of Lemma~\ref{lemma: stochastic discrete gronwall}]
  For the ease of presentation, we assume that $|\xi_{t+1}| \le (1 + \alpha)^t \Xi$ with probability at least
  $1 - \delta_{\P, \xi}$ and $\E[Z_{t+1}^2 \mid \cF_t] \le (1 + \alpha)^t \sigma_Z^2$ always hold. This step can be 
  made formal using a stopping time argument. See Section~\ref{sec: stochastic induction} for details. Then, 
  Unroll \eqref{eq: stochastic gronwall, Xt+1 =} to obtain
  \(
    X_{t+1}
    = (1 + \alpha)^{t+1} x_0 
      + \sum_{s=1}^t (1 + \alpha)^{t-s} \xi_{s+1} 
      + \sum_{s=1}^t (1 + \alpha)^{t-s} Z_{s+1}.  
  \) 
  Divide both sides with $(1 + \alpha)^{t+1}$ and replace $t+1$ with $t$. Then, the above becomes 
  \[
    \textstyle
    X_t (1 + \alpha)^{-t} 
    = x_0 
      + \sum_{s=1}^t (1 + \alpha)^{-s} \xi_s 
      + \sum_{s=1}^t (1 + \alpha)^{-s} Z_s.  
  \]
  The second term is bounded by $T \Xi$ (uniformly over $t \le T$) with probability at least $1 - T \delta_{\P, \xi}$. 
  Note that $(1 + \alpha)^{-s} Z_s$ is still a martingale difference sequence. Hence, by Doob's $L^2$-submartingale 
  inequality, the third term is bounded by $x_0/4$ with probability at least $16 \sigma_Z^2 / (\alpha x_0^2)$. 
  Thus, when \eqref{eq: conditions of stochastic gronwall} holds, the RHS is $(1 \pm 0.5) x_0$ with probability at 
  least $1 - T \delta_{\P, \xi} - \delta_{\P}$. Multiply both sides with $(1 + \alpha)^t$, and we complete the proof.
  To improve the dependence on $\delta_{\P}$ from linear to poly-logarithmic, it suffices to replace Doob's 
  $L^2$-submartingale inequality with a variant of Freedman's inequality that works with subweibull variables
  (cf.~Appendix~\ref{sec: stochastic induction}). 
\end{proof}

Using the same strategy, one can prove a similar lemma that 
deals with the case $\alpha = 0$, which will be used to show the preservation of the gap in Stage~1.1. Another 
interesting case is where the growth is not linear but polynomial. This is the case of Stage~1.2 in our setting. 
For this case, we have the following lemma.\footnote{In an early version of this manuscript, we did not relax the 
conditions on the noises when $X_t$ grows as in Lemma~\ref{lemma: stochastic discrete gronwall}. We thank Eshaan Nichani
for pointing out that this would result in a suboptimal rate.}

\begin{restatable}{lemma}{stochasticInductionPoly}
  \label{lemma: stochastic gronwall (polynomial)}
  Let $(X_t)_t$ be a non-negative stochastic process satisfying
  \begin{equation}
    \label{eq: stochastic gronwall (polynomial)}
    X_{t+1}
    \ge X_t + \alpha X_t^p + Z_{t+1} + \xi_{t+1}, 
    \quad 
    X_0 = x_0 > 0,
  \end{equation}
  where $\alpha > 0$, $(Z_{t+1})_t$ is a martingale difference sequence, and $(\xi_t)_t$ is an adapted process. 
  Let $\hat{x}_t$ be the solution to the deterministic recurrence relationship
  \(
    \hat{x}_{t+1} = \hat{x}_t + \alpha \hat{x}_t^p, \hat{x}_0 = x_0 / 2.
  \)

  Let $\delta_{\P} \in (0, 1)$ be given and 
  \(
    T := \inf\braces{ t \lesssim \left( p \alpha (x_0/2)^{p-1}\right)\inv \,:\, X_t \ge 1 }.
  \)
  Suppose that there exists $\Xi, \sigma_Z > 0$ and $\delta_{\P, \xi} \in (0, 1)$ such that if 
  $X_t \ge \hat{x}_t$ and $t \le T$, we have $|\xi_t| \le \Xi X_t$ with probability at least $1 - \delta_{\P, \xi}$ and 
  $Z_{t+1}$ is conditionally $(\sigma_Z^2 X_t, \theta)$-subweibull.
  Then, if 
  \begin{equation}
    \label{eq: stochastic polynomial gronwall conditions}
    \alpha \lesssim x_0^{p-1}/p, \quad 
    \Xi \lesssim p \alpha x_0^{p-1}, \quad 
    \sigma_Z^2 \lesssim p \alpha x_0^p \poly\log(T/\delta_{\P}), 
  \end{equation}
  we have $X_t \ge \hat{x}_t$ for all $t \le T$ and $X_t \ge 1$ with 
  probability at least $1 - T \delta_{\P, \xi} - \delta_{\P}$. 
\end{restatable}
The proof of this lemma can be found in Section~\ref{sec: stochastic induction}. 
It is similar to the previous proof in spirit: we replace $(1 + \alpha)^t$ with 
$\prod_{s=0}^{t-1} (1 + \alpha X_s^{p-1})$ and unroll the recurrence. However, unlike the linear case, it is generally 
difficult to upper bound the difference between $X_t$ and $\hat{x}_t$, as this type of polynomial systems exhibit
sharp transitions and blow up in finite time. Consequently, $|\xi_t| \le \Xi X_t$ and $\E[ Z_{t+1}^2 \mid \cF_t ] \le \sigma_Z^2 X_t$
do not directly imply that $|\xi_t| \lesssim \Xi \hat{x}_t$ and $\E[ Z_{t+1}^2 \mid \cF_t ] \lesssim \sigma_Z^2 \hat{x}_t$,
and this makes the analysis tricky as the RHS' are not deterministic. To handle this issue, we use the following
recoupling argument: whenever $X_t \ge 4 \hat{x}_t$, we replace the current $\hat{x}_t$ with $X_t/2$. Clear that 
this can only increase $\hat{x}_t$, and it ensures $X_t \le 4 \hat{x}_t$ always holds. Moreover, after each 
recoupling, $\hat{x}_t$ will at least double. As a result, the conditions we need to absorb the noises will also 
become weaker.

\subsection{Sample complexity of online SGD}
\label{subsec: sample complexity of online sgd}

In this subsection, we demonstrate how to use the previous results to obtain results for online SGD and discuss why 
the sample complexity is $\tilde{O}(d P^{L-1})$ instead of $\tilde{O}(d^{L-1})$ even though we are relying on the 
$L$-th order terms to learn the directions.

\subsubsection{A simplified version of Stage~1.1}
As an example, we consider the dynamics of $P v_p^2 / (d v_q^2)$ where $p \le P$ and $q > P$ and assume both of $v_p$ and $v_q$ are 
small and $P v_p^2 / (d v_q^2) \le 1$. 
This can be viewed as a simplified version of the analysis of $\norm{\v_{\le P}}^2 / \norm{\v_{> P}}^2$ in Stage~1.1.
The analysis of other quantities/stages is essentially the same --- we rewrite the update rule to single out martingale  
difference terms and the higher-order error terms, and apply a suitable lemma from the previous 
subsection (or Section~\ref{sec: stochastic induction}) to complete the proof. 

For the ease of presentation, in this subsection, we ignore the higher-order terms. In particular, we assume the approximation
\(
  \hat{v}_{t+1,k}
  \approx v_{t, k} + 2 \eta \left( \indi\{k \le P\} - \norm{\v_{\le P}}^2 \right) + \eta Z_{t+1, k}, 
\) 
for all $k \in [d]$, 
where $Z_{t+1, k}$ represents the difference between the population and mini-batch gradients. Then, we compute 
\[
  \hat{v}_{t+1,k}^2
  \approx \left( 1 + 4 \eta \left( \indi\{k \le P\} - \norm{\v_{\le P}}^2 \right) \right) v_k^2 
    + 2 \eta v_k Z_k 
    \pm C_L \eta^2 (1 \vee Z_k^2). 
\]
Here, the last term is the higher-order term and will eventually be included in $\xi$. For simplicity,
we will also ignore them in the following discussion. 
The second term is the martingale difference term. Its (conditional) variance depend on $v_k$, and this necessitates 
the induction-style conditions in Lemma~\ref{lemma: stochastic discrete gronwall}. Note that 
$v_{t+1, p}^2 / v_{t+1, q}^2 = \hat{v}_{t+1, p}^2 / \hat{v}_{t+1, q}^2$. Hence, we have 
\[ 
  \frac{v_{t+1, p}^2}{v_{t+1, q}^2}
  \approx 
    \frac{
      \left( 1 + 4 \eta \left( 1 - \norm{\v_{\le P}}^2 \right) \right) v_p^2 
      + 2 \eta v_p Z_p 
    }{
      \left( 1 - 4 \eta \norm{\v_{\le P}}^2 \right) v_q^2 
      + 2 \eta v_q Z_q 
    }. 
\]
Repeatedly use the elementary identity 
$ \frac{1}{a + \delta} = \frac{1}{a} \left( 1 - \frac{\delta}{a} \left( 1 - \frac{\delta}{a + \delta} \right) \right)
\approx \frac{1}{a} \left( 
  1 
  - \frac{\delta}{a}  
\right)$ for any $a > 0$ and small $\delta > 0$, we can rewrite the above equation as 
\[
  \frac{P v_{t+1, p}^2}{d v_{t+1, q}^2}
  \approx 
    \left( 1 + 4 \eta \right) \frac{ P  v_p^2 }{d v_q^2 }
    - \frac{ P  v_p^2 }{d v_q^2 } \frac{2 \eta v_q Z_q }{ v_q^2 } 
    + \frac{
      2 P \eta v_p Z_p 
    }{d v_q^2 }. 
\]
Suppose that $v_p^2 \approx v_q^2$ at initialization and assume the induction hypothesis $P v_p^2/ (d v_q^2) 
= (1 \pm 0.5) (1 + 4 \eta)^t P v_{0, p}^2/ (d v_{0, q}^2) $. Then, by Lemma~\ref{lemma: persample gradient}, the conditional variance of the martingale 
difference terms (the last two terms) is bounded by $O_L( (1 + 4 \eta)^t \eta^2 P^2 / d  )$. Using the language of 
Lemma~\ref{lemma: stochastic discrete gronwall}, this means $\sigma_Z^2 \le O_L(\eta^2 P^2 / d )$. 
Meanwhile, by our gradient flow analysis, the number steps Stage~1.1 needs is roughly $\log d / \eta$.
Hence, in order for (the second condition of) \eqref{eq: conditions of stochastic gronwall} to hold, it suffices to choose 
$\eta = \tilde{O}(1/d)$. One can also show that for the higher-order terms to be small, it suffices to choose 
$\eta = \tilde{O}(1/(d P))$. As a result, for Stage~1.1, the sample complexity is $\tilde{O}( d P )$.

\subsubsection{The improved sample complexity for Stage 1.2}
To see why the existence of the second-order terms can reduce the sample complexity from $d^{\IE - 1}$ to $d \poly(P)$, 
first note that after Stage~1.1, $\max_{p \in [P]} v_p^2$ will be $\Omega(1/P)$. Also 
note that the conditions in Lemma~\ref{lemma: stochastic gronwall (polynomial)} depend on the initial value. 
With the initial value being $\Omega(1/P)$ instead of $\tilde{O}(1/d)$, the largest possible step size we can choose 
will be $O(1) / (P d P^{L/2-1})$, which is much larger than the usual $O( 1 / (P d^{L/2}) )$ requirement from the vanilla 
information exponent argument. Meanwhile, by our gradient flow analysis, we know the number of iterations needed 
is $O(P^{L/2-1} / \eta)$. Combine these and we obtain the $\tilde{O}( d P^{L-1}  )$ sample complexity.

\section{Conclusion and limitations}
\label{sec: conclusion}

In this work, we study the task of learning multi-index models of form $f_*(\x) = \sum_{k=1}^{P} \phi(\v_k^* \cdot \x)$
with $P \ll d$, $\{\v_k^*\}_k$ be orthogonal and $\phi = \hat{\phi}_2 h_2 + \sum_{l=L}^\infty \hat\phi_l h_l$. By considering both the lower- and 
higher-order terms, we prove an $\tilde{O}(d \poly(P))$ bound on the sample complex for strong recovery of 
directions using online SGD, which improves the results one can obtain using vanilla information exponent-based analysis.

The main limitation of this work is the orthogonality condition. This can potentially be relaxed to near-orthogonality
as in \cite{oko_learning_2024}. Extending this result beyond near-orthogonal teacher neurons is an interesting but 
challenging future direction, as when the teacher neurons are not near-orthogonal, this task is hard in general. 
However, we conjecture that when the target model has a hierarchical structure across different orders, online SGD can 
gradually learn the directions using those terms of different order sequentially. 

Another limitation of this work is the assumption that the signal strengths are isotropic. When this is not true, 
training with the second-order terms and correlation loss will make all neurons collapse to the largest direction 
or require $d^2$ samples if we perform only one gradient step \cite{damian_neural_2022,dandi_how_2024}. That being 
said, it is still reasonable to expect the overall sample complexity to be improved if 
we can leverage the second-order terms properly. Formally establishing this is also a potential future direction.

\section*{Acknowledgements}

JDL acknowledges support of  NSF CCF 2002272, NSF IIS 2107304,  NSF CIF 2212262, ONR Young Investigator Award, 
and NSF CAREER Award 2144994. We thank Eshaan Nichani for pointing out an earlier version of 
Lemma~\ref{lemma: stochastic gronwall (polynomial)} is suboptimal, and anonymous reviewer~4V3d for helpful discussions
on relaxing Assumption~\ref{assumption: link function}.

\bibliography{reference}

\newpage
\appendix
{
  \hypersetup{linkcolor=black}
  \renewcommand{\baselinestretch}{0.75}\normalsize
  \tableofcontents
  \renewcommand{\baselinestretch}{1.0}\normalsize
}

\section{Preliminaries}
\label{sec: preliminaries}

\subsection{Population and per-sample gradients}
\label{subsec: population and persample gradients}

In this subsection, we show that the task of learning the multi-index target function $f_*(\x) = \sum_{k=1}^{P} \phi(\v_k^* \cdot \x)$
can be reduced to tensor decomposition and prove the tail bounds in Lemma~\ref{lemma: persample gradient}.

For the first goal, we will need the following classical result on Hermite polynomials 
(cf.~Chapter~11.2 of \cite{odonnell_analysis_2014}) and correlated Gaussian variables. 

\begin{lemma}[Proposition~11.31 of \cite{odonnell_analysis_2014}]
  \label{lemma: correlated gaussians and hermite polynomials}
  For $k \in \mbb{N}_{\ge 0}$ denote the normalized Hermite polynomials. 
  Let $\rho \in [-1, 1]$ and $z, z'$ be $\rho$-correlated standard Gaussian variables. Then, we have 
  \[
    \E_{z,z'} \left[ h_k(z) h_j(z') \right]
    = \indi\{ k = j \} \rho^k. 
  \]
\end{lemma}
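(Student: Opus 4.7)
The plan is to reduce the bilinear expectation $\E[h_k(z) h_j(z')]$ to the eigenfunction identity for the Ornstein--Uhlenbeck conditional-expectation operator: normalized Hermite polynomials are its eigenfunctions with eigenvalue $\rho^n$ in degree $n$. Once that identity is in hand, orthonormality of the $h_n$ finishes the computation in one line.

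First, I would use the canonical decomposition $z' = \rho z + \sqrt{1-\rho^2}\, w$, where $w \sim \Gaussian{0}{1}$ is independent of $z$ (this realizes the joint law of $\rho$-correlated standard Gaussians). By the tower property,
\[
  \E[h_k(z)\, h_j(z')]
  = \E_z\!\left[ h_k(z) \cdot \E_w\!\left[ h_j(\rho z + \sqrt{1-\rho^2}\, w) \right] \right],
\]
so the whole task reduces to computing the inner conditional expectation as a polynomial in $z$.

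Second, I would prove the key identity $\E_w[h_n(\rho z + \sqrt{1-\rho^2}\, w)] = \rho^n\, h_n(z)$ via the Hermite generating function. Writing the probabilist's Hermite polynomials $\He_n$ with generating function $\sum_n \He_n(x)\, t^n/n! = e^{tx - t^2/2}$, and using $h_n = \He_n/\sqrt{n!}$, a direct Gaussian moment generating function calculation gives
\[
  \E_w\!\left[ e^{t z' - t^2/2} \right]
  = e^{t\rho z} \cdot e^{t^2(1 - \rho^2)/2} \cdot e^{-t^2/2}
  = e^{(t\rho) z - (t\rho)^2/2},
\]
which is precisely the generating function of $\{\He_n(z)\}$ evaluated at $t\rho$. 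Matching coefficients of $t^n$ on both sides yields $\E_w[\He_n(z')] = \rho^n \He_n(z)$, and dividing by $\sqrt{n!}$ gives the same statement for $h_n$.

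Substituting back, $\E[h_k(z) h_j(z')] = \rho^j\, \E_z[h_k(z) h_j(z)] = \rho^j\, \indi\{k = j\}$ by orthonormality of $\{h_n\}$ in $L^2(\Gaussian{0}{1})$, which is exactly $\rho^k \indi\{k = j\}$ as required. The only step that requires a sentence of care is the coefficient-matching in the generating-function argument: this is legitimate because the series $e^{tx - t^2/2} = \sum_n \He_n(x) t^n/n!$ converges absolutely and is uniformly integrable in $w$ on compact sets of $t$ (the integrand has Gaussian tails in $w$), so Fubini and term-by-term identification of the Taylor coefficients both apply. I expect this interchange to be the only technical point worth spelling out; the rest of the argument is a clean two-step reduction.
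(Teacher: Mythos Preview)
The paper does not prove this lemma; it is quoted as a classical result (Proposition~11.31 of O'Donnell's \emph{Analysis of Boolean Functions}) and used as a black box in the proof of Lemma~\ref{lemma: multi-index model to tensor decomposition, population loss}. Your argument is correct and is one of the standard proofs of this identity: the generating-function computation cleanly yields the Mehler-type eigenfunction relation $\E_w[h_n(\rho z+\sqrt{1-\rho^2}\,w)]=\rho^n h_n(z)$, after which orthonormality finishes the job. There is nothing to compare against in the paper itself.
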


\begin{lemma}
  \label{lemma: population gradient}
  Under the setting described in Section~\ref{sec: setup}, we have 
  \[
    \E_{\x}\left[ f_*(\x) \nabla_{\v} \phi(\v\cdot\x) \right]
    = \sum_{k=1}^{P} \sum_{l=1}^{\infty} l \hat\phi_l^2 \inprod{\v_k^*}{\v}^{l-1} \v_k^*.
  \]
\end{lemma}
\begin{proof}
  Let $\phi = \sum_{k=0}^\infty \hat\phi_k h_k$ be the Hermite expansion of $\phi$ where the convergence is in $L^2$ sense. 
  For any $\rho \in [-1, 1]$ and $\rho$-correlated standard Gaussian variables $z, z'$, we have 
  \[ 
    \E_{z,z'} \braces{ \phi(z) \phi(z') }
    = \sum_{k,l=0}^\infty \hat\phi_k \hat\phi_l \E_{z, z'}\braces{ h_k(z) h_l(z') }
    = \sum_{k=0}^\infty \hat\phi_k^2 \rho^k, 
  \] 
  where the first equality comes from the Dominated Convergence Theorem and the second from Lemma~\ref{lemma: correlated 
  gaussians and hermite polynomials}. Therefore, we have 
  \[
    \E_{\x}\left[ f_*(\x) \phi(\v\cdot\x) \right]
    = \sum_{k=1}^{P} \E_{\x}\left[ \phi(\v_k^*\cdot\x) \phi(\v\cdot\x) \right]
    = \sum_{k=1}^{P} \sum_{l=1}^{\infty} \hat\phi_l^2 \inprod{\v_k^*}{\v}^l.
  \]
  Then, we compute 
  \[
    \E_{\x}\left[ f_*(\x) \nabla_{\v} \phi(\v\cdot\x) \right]
    = \sum_{k=1}^{P} \sum_{l=1}^{\infty} \hat\phi_l^2 \nabla_{\v} \inprod{\v_k^*}{\v}^l
    = \sum_{k=1}^{P} \sum_{l=1}^{\infty} l \hat\phi_l^2 \inprod{\v_k^*}{\v}^{l-1} \v_k^*.
  \]
\end{proof}

Now, we consider the per-sample gradient. The goal here is to prove variance and tail bounds for 
$\inprod{ f_*(\x) \nabla_{\v}\phi(\v\cdot\x)  }{\u}$, where $\u \in \S^{d-1}$ is an arbitrary direction that is 
independent of $\x$. 
First, we upper bound $f_*$. To this end, we will use need the following concentration inequality for 
the sum of independent subweibull variables. It is a consequence of Theorem~3.2 of \cite{kuchibhotla_moving_2022}
and the discussion after Definition~2.3 of the same paper. 

\begin{lemma}[\cite{kuchibhotla_moving_2022}]
  \label{lemma: concentration of sum of subweibull}
  Let $\psi_\alpha(x) = \exp(x^\alpha) - 1$ and $\norm{\cdot}_{\psi_\alpha}$ denote the corresponding Orlicz norm. 
  Let $\alpha \le 1$ and $X_1, \dots, X_n$ be i.i.d.~mean zero random variables with variance $\sigma^2$ and 
  $\norm{ X_1 }_{\psi_\alpha} < \infty$. 
  Then, for any $\delta_{\P} \in (0, 1)$, we have 
  \[
    \abs{ \sum_{i=1}^{n} X_i } 
    \lesssim_\alpha 
      \sqrt{n} \sigma \log^{1/2}(1/\delta_{\P})  
      + \sqrt{n} \norm{X_1}_{\psi_\alpha}
        \log^{1/\alpha}(n/\delta_{\P}), 
    \quad\text{with probability at least $1 - \delta_{\P}$.}
  \]
\end{lemma}

\begin{lemma}
  Suppose that Assumption~\ref{assumption: link function} holds and $\{\v_k^*\}_k$ are orthonormal.
  Then, for any $\delta_{\P} \in (0, 1)$, we have 
  \[
    |f_*(\x)|
    \lesssim  \sqrt{P} \log^q(P/\delta_{\P}), 
    \quad\text{with probability at least $1 - \delta_{\P}$.}
  \]
\end{lemma}
\begin{proof}
  Write $Y_k := \phi(\v_k^* \cdot \x)$. By the orthonormality of $\{\v_k^*\}_k$, $\{ Y_k \}_k$ are independent 
  variables. For any $p \ge 1$, we have 
  \begin{align*}
    \norm{ Y_1 }_{L^p}
    \le \left( \E_{z \sim \Gaussian{0}{1}} |\phi|^p(z) \right)^{1/p}
    &\le C \left( \E_{z \sim \Gaussian{0}{1}} \left[ (1 + z^2)^{p q/2} \right] \right)^{1/p} \\
    &\le C p^q \sqrt{ \E_{z \sim \Gaussian{0}{1}} \left[  (1 + z^2)^q \right] }
    \lesssim p^q, 
  \end{align*}
  where the first inequality in the second line comes from the Gaussian hypercontractivity. This implies that 
  $\norm{ Y_1 }_{\psi_{1/q}} \lesssim 1$ and $\Var Y_1 \lesssim 1$. Thus, by Lemma~\ref{lemma: concentration of sum of 
  subweibull}, we have with probability at least $1 - \delta_{\P}$ that 
  \[
    |f_*(\x)|
    \lesssim \sqrt{P} \log^{1/2}(1/\delta_{\P})  
      + \sqrt{P} \log^q(P/\delta_{\P})
    \lesssim  \sqrt{P} \log^q(P/\delta_{\P}).
  \]
\end{proof}

\begin{lemma}
  \label{lemma: variance and tail bounds}
  Suppose that Assumption~\ref{assumption: link function} holds and $\{\v_k^*\}_k$ are orthonormal. Then, we have 
  \begin{gather*}
    \E \inprod{ f_*(\x) \nabla_{\v}\phi(\v \cdot \x) }{\u}^2 
    \lesssim_\phi P, \\
    |\inprod{ f_*(\x) \nabla_{\v}\phi(\v \cdot \x) }{\u}| \lesssim_\phi P^{1/2} \log^{2(1+q)}\log(m/\delta_{\P})
    \quad\text{with probability at least $1 - \delta_{\P}$}, 
  \end{gather*}
  where $q$ is the degree of $\phi$ if it is a polynomial and $Q = 0$ if $\phi$ is Lipschitz.
\end{lemma}
\begin{proof}
  Note that $\inprod{\nabla_{\v} \phi(\v \cdot \x)}{\u} = \phi'(\v\cdot\x) \inprod{\x}{\u}$. 
  First, for the variance, we have 
  \[
    \E( f_*(\x) \phi'(\v\cdot \x) \inprod{\u}{\x} )^2 
    \lesssim \E f_*^6 + \E (\phi'(\v\cdot \x))^6 + \E \inprod{\u}{\x}^6
    \lesssim P, 
  \]
  where the second inequality comes from Assumption~\ref{assumption: link function} and the hypercontractivity of Gaussian.
  This implies $\E \inprod{\nabla f(\x)}{\u}^2 \lesssim P$.
  For the tail bound, first recall from the previous lemma that $|f_*(\x)| \lesssim  \sqrt{P} \log^q(P/\delta_{\P})$ 
  with probability at least $1 - \delta_{\P}$. The proof of it also implies $| \phi'(\v \cdot \x) | 
  \lesssim \log^q(P/\delta_{\P})$ with probability at least $1 - \delta_{\P}$. Finally, since $\inprod{\x}{\u}$ 
  is $1$-subgaussian, we have $\abs{\x \cdot \u} \lesssim \log^{1/2}(1/\delta_{\P})$ 
  with probability at least $1 - \delta_{\P}$. Combine these bounds, take the union bound over $m$ learner neurons, 
  and we complete the proof. 
\end{proof}

\subsection{Typical structure at initialization}

In this subsection, we use the results in Section~\ref{subsec: concentration and anti-concentration of Gaussian} to analyze 
the structure of $\v_1, \dots, \v_m$ at initialization. Recall that we initialize $\v_i$ with $\Unif(\S^{d-1})$ independently. Meanwhile, 
note that for $\v \sim \Unif(\S^{d-1})$, we have $\v \overset{d}{=} \Z / \norm{\Z}$ where 
$\Z \sim \Gaussian{0}{\Id_d}$. 

We start with a lemma on the largest coordinate. 
\begin{lemma}[Largest coordinate]
  \label{lemma: uniform on sphere: upper bound on max vi}
  Let $\v \sim \Unif(\S^{d-1})$. For any $K \ge 1$, we have 
  \[
    \max_{i \in [d]} |v_i| 
    \le \frac{ 4 \sqrt{2 K \log d}}{\sqrt{d}}
    \quad\text{with probability at least }
    1 - \frac{4}{d^K}. 
  \]
  As a corollary, for any $\delta_{\P} \in (0, 1)$, at initialization, we have 
  \[
    \max_{i \in [m]} \norm{\v_i}_\infty 
    \le \frac{ 4 \sqrt{2 \log( 4 m / \delta_{\P} ) }}{\sqrt{d}}
    \quad\text{with probability at least $1 - \delta_{\P}$}. 
  \]
\end{lemma}
\begin{proof}
  Let $\Z \sim \Gaussian{0}{\Id_d}$. Recall that $\Z / \norm{\Z}$ follows the uniform distribution over the sphere.
  By Lemma~\ref{lemma: fluctuation of the Gaussian norm}, we have $\norm{\Z} \ge \sqrt{d}/2$ with 
  probability at least $1 - 2 \exp( -d/18 )$. Then, by Lemma~\ref{lemma: gaussian, upper tail for the max}, 
  with probability at least $1 - 2 e^{-d/18} - 2 e^{-s^2/2}$, we have 
  \[ 
    \frac{\max_{i \in [d]} |Z_i|}{\norm{\Z}} 
    \le \frac{\sqrt{2 \log d} + s}{\sqrt{d}/2}
    = \frac{2 \sqrt{2 \log d } }{\sqrt{d}}
      + \frac{ 2 s}{\sqrt{d}}. 
  \] 
  Let $K \ge 1$ be arbitrary. Choose $s = \sqrt{2 K \log d}$ and the above becomes 
  \[
    \frac{\max_{i \in [d]} |Z_i|}{\norm{\Z}} 
    \le \frac{ 4 \sqrt{2 K \log d}}{\sqrt{d}}
    \quad\text{with probability at least }
    1 - \frac{4}{d^K}. 
  \]
  For the corollary, use union bound and choose $K = \log( 4 m / \delta_{\P} ) / \log d$, we have 
  \[
    \max_{i \in [m]} \norm{\v_i}_\infty
    \le \frac{ 4 \sqrt{2 \log( 4 m / \delta_{\P} ) }}{\sqrt{d}}
    \quad\text{with probability at least }
    1 - \frac{4 m}{d^K}
    = 1 - \delta_{\P}. 
  \]

\end{proof}

Suppose that we only have higher-order terms. Then, for a neuron $\v \in \S^{d-1}$ to converge to a ground-truth 
direction $\e_k$ in a reasonable amount of time, we need $v_k^2$ to be the largest among all $v_i^2$ and there is gap between
it and the second largest $v_i^2$. The following lemma ensures that when $m$ is large, for every ground-truth direction
$\{ \e_k \}_{k \in [P]}$, there will be at least one neuron satisfying the above property. Note that in our case, we 
only need to ensure $v_k^2$ is the largest among all $\{v_i^2\}_{i \in [P]}$ instead of $\{v_i^2\}_{i \in [d]}$, as 
the second-order term will help us identify the correct subspace.

\begin{lemma}[Existence of good neurons]
  \label{lemma: large m => existence of good neurons}
  Let $\delta_{\P} \in (e^{-\log^C d}, 1)$ be given. Suppose that $m \ge 2 P \log(P / \delta_{\P}) 
  = \tilde{\Theta}(P)$.
  Then, at initialization, with probability at least $1 - \delta_{\P}$, we have 
  \[
    \forall p \in [P]\, \exists i \in [m] \quad\text{such that}\quad 
    \frac{v_{i, p}^2 }{ \max_{q \in [P]\setminus \{p\}} v_{i, q}^2 } 
    \ge 1 + \frac{1}{200 \log(48 P)}
    = 1 + \tilde{\Theta}(1).
  \]
\end{lemma}
\begin{proof}
  Let $\delta_0$ be a parameter to chosen later and let $\delta_{\P, 0}$ be the probability that 
  $\max_{q \in [P]} v_q^2$ is smaller than $1 + \delta_0$ times the second largest $v_q^2$. 
  For each $p \in [P]$, let $B_p$ be the event $\braces{ \forall k \in [m], v_{k,p}^2 \le (1 + \delta_0) \max_{q \in [P]\setminus\{p\}} v_{k, q}^2 }$.
  To bound $\P[B_p]$, we write 
  \begin{align*}
    \P[B_p]
    &= \left(
        \P_{\v\sim\Unif(\S^{d-1})}\left[ v_{p}^2 \le (1 + c) \max_{q \in [P]\setminus\{p\}} v_{q}^2 \right]
      \right)^m \\
    &= \left(
        \P\left[ v_p^2 \ne \max_{q \in [P]} v_q^2  \right]
        + 
        \P\left[ 
          v_p^2 \le (1 + c) \max_{q \in [P]\setminus\{p\}} v_{q}^2 
          \;\bigg|\; v_p^2 = \max_{q \in [P]} v_q^2
        \right]
        \P\left[ v_p^2 = \max_{q \in [P]} v_q^2 \right]
      \right)^m \\
    &= \left(
        1 - \frac{1}{P} + \frac{\delta_{\P, 0}}{P}
      \right)^m. 
  \end{align*}
  By Corollary~\ref{cor: ratio between the largest and second largest vp2}, if we choose $\delta_{\P, 0} = 1/2$, then we 
  can choose 
  \[
    \delta_0 = \frac{1}{200 \log(48 P)}.
  \]
  With the above choices of parameters, we have 
  \[
    \P\left[\bigcup_{p \in [P]} B_p \right]
    \le P \left( 1 - \frac{1}{2 P} \right)^m
    \le P \exp\left( -\frac{m}{2 P} \right). 
  \]
  For the last term to be bounded by $\delta_{\P}$, it suffices to choose $m \ge 2 P \log(P / \delta_{\P})$.
\end{proof}

\begin{lemma}[Typical structure at initialization]
  \label{lemma: Typical structure at initialization}
  Let $\delta_{\P} \in (e^{-\log^C d}, 1)$ be given and $c_g > 0$ be a small constant. 
  Suppose that $\{\v_k\}_{k=1}^m \sim \Unif(\S^{d-1})$ independently with 
  \[
    m = \Theta\left( P \log(P/\delta_{\P}) \right).
  \]
  Then, with probability at least $1 - 3 \delta_{\P}$, we have 
  \begin{gather}
    \forall p \in [P]\, \exists i \in [m] \quad\text{such that}\quad 
    \frac{v_{i, p}}{ \max_{q \in [P]\setminus \{p\}} |v_{i, q}| } 
    \ge 1 + \frac{\Theta(1)}{\log P}, \label{eq: good neuron} \\
    \forall i \in [m],\,\quad  
    \norm{\v_i}_\infty 
    \le \frac{ 
        20 \sqrt{ \log( P / \delta_{\P} ) }
      }{\sqrt{d}}, \nonumber \\
    \forall i \in [m],\,\quad 
    \frac{\sqrt{P}}{3 \sqrt{d}} \le \frac{\norm{\v_{\le P}}}{\norm{\v}}  \le \frac{3 \sqrt{P}}{\sqrt{d}}. \nonumber
  \end{gather}
\end{lemma}
\begin{proof}
  The first two bounds comes directly from Lemma~\ref{lemma: uniform on sphere: upper bound on max vi} and 
  Lemma~\ref{lemma: large m => existence of good neurons}
  and the fact that we use symmetric initialization. By Lemma~\ref{lemma: fluctuation of the Gaussian norm},
  we have 
  \begin{align*}
    \P\left( \left| \norm{\Z} - \E \norm{\Z} \right| \ge \sqrt{d}/2 \right) 
    \le 2 e^{-d/8}, \\
    \P\left( \left| \norm{\Z_{\le P}} - \E \norm{\Z_{\le P}} \right| \ge \sqrt{P}/2 \right) 
    \le 2 e^{-P/8}. 
  \end{align*}
  As a result, for any $\v \sim \Unif(\S^{d-1})$, we have with probability at least $1 - 4 e^{-P/8}$ that 
  \[
    \frac{\norm{\v_{\le P}}}{\norm{\v}} 
    \overset{d}{=} \frac{\norm{\Z_{\le P}}}{\norm{\Z}}
    = \frac{\E \norm{\Z_{\le P}} \pm \sqrt{P}/2}{\E \norm{\Z} \pm \sqrt{d}/2}
    = [1/3, 3] \times \sqrt{\frac{P}{d}}. 
  \]
  Since we assume $P \ge \log^{C'} d$ for a large $C'$, we have $4 e^{-P/8} \le \delta_{\P} / m$. This gives the third
  bound.  
\end{proof}

\subsection{Concentration and anti-concentration of Gaussian}
\label{subsec: concentration and anti-concentration of Gaussian}

\begin{lemma}
  \label{lemma: ratio between the largest and second largest abs Gaussian}
  Let $Z_1, \dots, Z_d$ be independent $\Gaussian{0}{1}$ variables. Let $Y_1, Y_2$ be the largest and second largest
  of $|Z_1|, \dots, |Z_d|$. For any $\delta_{\P} \in (0, 1)$, we have 
  \[
    \frac{Y_1}{Y_2} \ge 1 + \frac{\delta_{\P}}{12 \log\left( 12 d / \delta_{\P} \right) }
    \quad\text{with probability at least $1 - \delta_{\P}$.}
  \]
\end{lemma}
\begin{proof}
  \def\currentprefix{proof: ratio between largest and the second largest. daokkmd}
  The following proof is adapted from this MathOverflow answer \cite{pinelis_concentration_2020}.
  Let $f$ and $F$ denote the PDF and CDF of $|Z|$ with $Z \sim \Gaussian{0}{1}$, respectively. We have 
  $f(y) = 2 \phi(y) \indi\{ y \ge 0\}$ and $F(y) = \erf(y/\sqrt{2}) \indi\{ y \ge 0 \} $, where $\phi$ is the PDF 
  of $\Gaussian{0}{1}$ and $\erf$ is the error function. 
  We will use the following formula for the joint PDF of two order statistics: 
  \[
    f_{Y_1, Y_2}(y_1, y_2) 
    = d (d - 1)  F^{d-2}(y_2) f(y_2) f(y_1) \indi\{ 0 < y_2 < y_1 \}. 
  \]
  Consider small $s > 0$. We compute 
  \begin{align*}
    \P\left( \frac{Y_1}{Y_2} \ge 1 + s \right)
    &= \int_0^\infty \int_0^\infty \indi\{ y_1 \ge (1 + s) y_2 \} f_{Y_1, Y_2}(y_1, y_2) \,\rd y_2 \rd y_1 \\
    &= \int_0^\infty \int_0^\infty  f_{Y_1, Y_2}((1 + s)y_2 + r, y_2) \,\rd y_2 \rd r \\
    &= d (d - 1)  
      \int_0^\infty F^{d-2}(y_2) f(y_2) \left( \int_0^\infty f((1 + s)y_2 + r) \,\rd r \right) \,\rd y_2  \\
    &= d (d - 1)  
      \int_0^\infty F^{d-2}(y_2) f(y_2) \left( 1 - F((1 + s) y_2) \right) \,\rd y_2. 
  \end{align*}
  Let $G = F\inv$. With the change-of-variables $u = F(y_2)$, $y_2 = G(u)$, we can rewrite the above as 
  \begin{align*}
    \P\left( \frac{Y_1}{Y_2} \ge 1 + s \right)
    &= d (d - 1) \int_0^1 u^{d-2} \left( 1 - F((1 + s) G(u)) \right) f(G(u)) \,\rd G(u) \\
    &= d (d - 1) \int_0^1 u^{d-2} \left( 1 - F((1 + s) G(u)) \right) \cancel{f(G(u)) \frac{1}{F'(G(u))}} \,\rd u \\
    &= d (d - 1) \int_0^1 u^{d-2} \left( 1 - F((1 + s) G(u)) \right) \,\rd u. 
  \end{align*}

  Now, we analyze the last integral. We will use the following expansion of the (complementary) error function:
  \[
    1 - F(y)
    = 1 - \erf\left(y/\sqrt{2}\right)
    = \frac{e^{-y^2/2}}{y \sqrt{\pi/2}} 
      \left(
        1 - \frac{1}{\sqrt{\pi}} \int_y^\infty r^2 e^{-r^2}\,\rd r
      \right). 
  \]
  For notational simplicity, put $w = G(u)$. Then, we have 
  \begin{align*}
    1 - F( (1 + s) w )
    &= \frac{e^{-(1 + s)^2 w^2/2}}{(1 + s) w \sqrt{\pi/2}} 
      \left( 1 - \frac{1}{\sqrt{\pi}} \int_{(1+s)w}^\infty r^2 e^{-r^2}\,\rd r \right) \\
    &= \frac{\exp\left( -(s + s^2/2) w^2 \right)}{1 + s}
      \frac{e^{-w^2/2}}{w \sqrt{\pi/2}} 
      \left( 1 - \frac{1}{\sqrt{\pi}} \int_{(1+s)w}^\infty r^2 e^{-r^2}\,\rd r \right) \\
    &= \frac{\exp\left( -(s + s^2/2) w^2 \right)}{1 + s}
      \left( 1 - F( w ) \right)
      \frac{ 1 - \frac{1}{\sqrt{\pi}} \int_{(1+s)w}^\infty r^2 e^{-r^2}\,\rd r }{
        1 - \frac{1}{\sqrt{\pi}} \int_w^\infty r^2 e^{-r^2}\,\rd r
      }. 
  \end{align*}
  Note that the last factor is at least $1$ and $F(w) = F(G(u)) = F(F\inv(u)) = u$. Therefore, 
  \[
    1 - F( (1 + s) w )
    \ge \frac{\exp\left( -(s + s^2/2) w^2 \right)}{1 + s} (1 - u). 
  \]
  As a result, we have 
  \begin{align*}
    \P\left( \frac{Y_1}{Y_2} \ge 1 + s \right)
    &\ge d (d - 1) \int_0^1 u^{d-2} (1 - u) \frac{\exp\left( -(s + s^2/2) G^2(u) \right)}{1 + s} \,\rd u \\
    &\ge d (d - 1) \int_0^{1-\eps} u^{d-2} (1 - u) \frac{\exp\left( -(s + s^2/2) G^2(u) \right)}{1 + s} \,\rd u, 
  \end{align*}
  where $\eps > 0$ is a parameter to be chosen later. By the next lemma, when $u \le 1 - \eps$, we have 
  $G^2(u) \le 2 \log(2/\eps)$. Therefore, 
  \begin{align*}
    \P\left( \frac{Y_1}{Y_2} \ge 1 + s \right)
    &\ge d (d - 1) \int_0^{1-\eps} u^{d-2} (1 - u) \,\rd u
      \frac{\exp\left( -(2 s + s^2) \log(2/\eps) \right)}{1 + s}  \\
    &\ge d (d - 1) \int_0^{1-\eps} u^{d-2} (1 - u) \,\rd u
      \frac{1}{1 + s} \left( \frac{\eps}{2} \right)^{4s}  \\
    &= (1 - \eps)^d \left( 1 + \frac{d \eps}{1 - \eps}  \right)
      \frac{1}{1 + s} \left( \frac{\eps}{2} \right)^{4s}. 
  \end{align*}
  Let $\delta_{\P} \in (0, 1)$ be our target failure probability. We choose 
  \[
    (1 - \eps)^d \left( 1 + \frac{d \eps}{1 - \eps}  \right) 
    \ge 1 - \frac{\delta_{\P}}{3}
    \quad\Leftarrow\quad 
    \eps 
    = \frac{\delta_{\P}}{6 d}. 
  \]
  With this choice of $\eps$, we compute 
  \[
    \left( \frac{\eps}{2} \right)^{4s} \ge 1 - \frac{\delta_{\P}}{3}
    \quad\Leftarrow\quad 
    \left( \frac{\delta_{\P}}{12 d} \right)^{4s} \ge 1 - \frac{\delta_{\P}}{3}
    \quad\Leftarrow\quad 
    4s \log\left( \frac{\delta_{\P}}{12 d} \right) \ge  - \frac{\delta_{\P}}{3}
    \quad\Leftarrow\quad 
    s \le \frac{\delta_{\P}}{12 \log\left( \frac{12 d}{\delta_{\P}} \right) }. 
  \]
  Note that for $s$ satisfying this condition, we automatically have $1/(1 + s) \ge 1 - \delta_{\P}/3$. 
  Thus, we have 
  \[
    \frac{Y_1}{Y_2} \ge 1 + \frac{\delta_{\P}}{12 \log\left( \frac{8 d}{\delta_{\P}} \right) }
    \quad\text{with probability at least $1 - \delta_{\P}$.}
  \]
\end{proof}

\begin{lemma}
  Let $F(y) = \erf(y/\sqrt{2}) \indi\{ y \ge 0 \}$ be the CDF of $|\Gaussian{0}{1}|$, respectively. 
  Let $G = F\inv$. If $u \le 1 - 1/(d \log d)$, then $G(u) \le \sqrt{2 \log(2/\eps)}$.
\end{lemma}
\begin{proof}
  Note that 
  $G(u) \le M$ iff $u \le F(M)$ iff $1 - u \ge 1 - F(M)$ iff $1 - u \ge \P(|Z| \ge M)$. In other words, 
  our goal here is to find the smallest $M$ such that $\P(|Z| \ge M) \le \eps$. By the standard Gaussian
  concentration, we have $\P(|Z| \ge M) \le 2 \exp( -M^2/2 )$. For the RHS to be upper bounded by 
  $\eps$, it suffices to choose $M \ge \sqrt{2 \log(2/\eps)}$. 
\end{proof}

\begin{lemma}
  \label{lemma: fluctuation of the Gaussian norm}
  Let $\z_1, \dots, \z_m$ be independent $\Gaussian{0}{\Id_d}$ random vectors. Then, for any $\eps > 0$, we have 
  \[
    \P\left( \forall k \in [m],  \left| \frac{\norm{\z_k}}{\E \norm{\z_1}} - 1 \right| \le \eps \right) 
    \ge 1 - 2 m  e^{- \eps^2 d / 3}. 
  \]
\end{lemma}
\begin{proof}
  It is well-known that any $1$-Lipschitz function of $\Gaussian{0}{\Id_d}$ is $1$-subgaussian (see, for example, 
  Theorem~5.2.2 of \cite{vershynin_high-dimensional_2018}). Hence, for any $s > 0$, we have 
  \[
    \P\left( \max_{k \in [m]} \left| \norm{\z_k} - \E \norm{\z_k} \right| \ge s \right) 
    \le \sum_{k=1}^{m} \P\left( \left| \norm{\z_k} - \E \norm{\z_k} \right| \ge s \right) 
    \le 2 m  e^{-s^2/2}. 
  \]
  Set $s = \eps \E \norm{\z_k}$ and the above becomes 
  \[
    \P\left( \forall k \in [m],  \left| \frac{\norm{\z_k}}{\E \norm{\z_1}} - 1 \right| \le \eps \right) 
    \ge 1 - 2 m  e^{- \eps^2 ( \E \norm{\z_1} )^2 /2}. 
  \]
  To complete the proof, it suffices to note that $\norm{\z_1}$ follows the $\chi_d$ distribution, 
  and therefore we have $\E \norm{\z_1} \ge \sqrt{d}(1 - 2/d)$. 
\end{proof}

\begin{corollary}
  \label{cor: ratio between the largest and second largest vp2}
  Let $\v \sim \mrm{Unif}(\S^{d-1})$ and $w_1$ and $w_2$ denote the largest and second largest of $|v_1|, \dots, |v_P|$. 
  Suppose that $\frac{d}{\log d} \gtrsim \frac{ \log^2(P / \delta_{\P})}{\delta_{\P}^2}$ and 
  $\delta_{\P} \in (e^{-\log^C d}, 1)$. Then, we have 
  \[
    \frac{w_1}{w_2}
    \ge 1 + \frac{\delta_{\P}}{100 \log(24 P / \delta_{\P})}
    \quad\text{with probability at least $1 - \delta_{\P}$}.
  \]
\end{corollary}
\begin{proof}
  Let $\z \sim \Gaussian{0}{\Id_d}$ vectors. Note that $\v \overset{d}{=} \z / \norm{\z}$. 
  By Lemma~\ref{lemma: fluctuation of the Gaussian norm}, we have, for any $\delta_{\P} \in (0, 1)$, that 
  \[
    \left| \frac{\norm{\z_k}}{\E \norm{\z_1}} - 1 \right| 
    \le \sqrt{ \frac{3 \log(2/\delta_{\P})}{d} }
    \quad\text{with probability at least $1 - \delta_{\P}$}.
  \]
  Suppose that $|z_{k_1}|$ and $|z_{k_2}|$ are the largest and second largest of 
  $|z_1|, \dots, |z_P|$. By Lemma~\ref{lemma: ratio between the largest and second largest abs Gaussian} (with $d$
  replaced by $P$), we have 
  \[
    \frac{|z_{k_1}|}{|z_{k_2}|}
    \ge 1 + \frac{\delta_{\P}}{12 \log(12 P / \delta_{\P})}
    \quad\text{with probability at least $1 - \delta_{\P}$}.
  \]
  Write 
  \[
    \frac{w_1}{w_2}
    \ge \left( 1 + \frac{\delta_{\P}}{12 \log(12 P / \delta_{\P})} \right) 
      \frac{1 - \sqrt{ \frac{3 \log(2 P/\delta_{\P})}{d} }}{1 + \sqrt{ \frac{3 \log(2d/\delta_{\P})}{d} }} 
    \ge 1 
      + \frac{\delta_{\P}}{12 \log(12 P / \delta_{\P})} 
      - 3 \sqrt{ \frac{3 \log( 2d/\delta_{\P})}{d} }.
  \]
  In order to merge the last term into the second last term, it suffices to require
  \[
    \frac{\delta_{\P}}{12 \log(12 P / \delta_{\P})} 
    \ge 6 \sqrt{ \frac{3 \log(2d/\delta_{\P})}{d} }
    \quad\Leftarrow\quad 
    \frac{d}{\log d} 
    \gtrsim \frac{ \log^2(P / \delta_{\P})}{\delta_{\P}^2}. 
  \]
  Then, with probability at least $1 - 2 \delta_{\P}$, we have 
  \[
    \frac{w_1}{w_2}
    \ge 1 + \frac{\delta_{\P}}{24 \log(12 P / \delta_{\P})} 
  \]
  Replace $\delta_{\P}$ with $\delta_{\P}/2$ and we complete the proof. 
\end{proof}

\begin{lemma}[Upper tail for the maximum]
  \label{lemma: gaussian, upper tail for the max}
  Let $Z_1, \dots, Z_d \sim \Gaussian{0}{1}$ be independent. We have the upper tail 
  \[
    \P\left( \max_{i \in [d]} |Z_i| \ge \sqrt{2 \log d} + s \right)
    \le 2 e^{-s^2/2 }, 
    \quad \forall  s \ge 0.
  \]
\end{lemma}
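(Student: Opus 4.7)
The plan is to combine a union bound with the standard one-dimensional Gaussian tail estimate. First I would invoke the classical bound $\P(|Z| \ge t) \le 2 e^{-t^2/2}$ for $Z \sim \Gaussian{0}{1}$ and any $t \ge 0$, which follows from the Mills-ratio estimate $\int_t^\infty e^{-x^2/2}\,\rd x \le e^{-t^2/2}$ applied to both tails. Then I would set $t = \sqrt{2 \log d} + s$ and observe the elementary inequality
\[
  t^2 = 2\log d + 2s\sqrt{2 \log d} + s^2 \ge 2 \log d + s^2,
\]
so that $e^{-t^2/2} \le d^{-1} e^{-s^2/2}$.

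Next I would apply the union bound over the $d$ coordinates:
\[
  \P\left( \max_{i \in [d]} |Z_i| \ge t \right)
  \le \sum_{i=1}^d \P(|Z_i| \ge t)
  \le d \cdot 2 e^{-t^2/2}
  \le 2 e^{-s^2/2},
\]
which is exactly the claimed inequality. No obstacle arises here; the argument is a two-line calculation and is only listed in the appendix for the sake of self-containment. The slight care needed is to track the factor of $2$ from the two-sided tail together with the factor of $d$ from the union bound, which precisely cancel against the $d^{-1}$ gained from the threshold $\sqrt{2\log d}$.
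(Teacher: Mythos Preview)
Your argument is correct and follows essentially the same route as the paper: union bound over the $d$ coordinates combined with the one-dimensional Gaussian tail, then the observation $(\sqrt{2\log d}+s)^2 \ge 2\log d + s^2$. The paper treats $\max_i Z_i$ and $\min_i Z_i$ separately via symmetry while you bound $|Z_i|$ directly, but this is a cosmetic difference. One small quibble: the bare inequality $\int_t^\infty e^{-x^2/2}\,\rd x \le e^{-t^2/2}$ is actually false for small $t$ (at $t=0$ the left side is $\sqrt{\pi/2}>1$); what you want is $\P(Z\ge t)=\frac{1}{\sqrt{2\pi}}\int_t^\infty e^{-x^2/2}\,\rd x \le e^{-t^2/2}$, which is the Chernoff bound (and is exactly what the paper uses). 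The conclusion $\P(|Z|\ge t)\le 2e^{-t^2/2}$ is unaffected.
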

\begin{proof}
  For notational simplicity, put $Z^* = \max_{i \in [d]} Z_i$. 
  By union bound and the Chernoff bound, we have for each $s, \theta > 0$,
  \[
    \P( Z^* \ge s )
    = \P\left( \bigvee_{i=1}^d Z_i \ge s \right)
    \le d \P(Z_1 \ge s)
    \le d \frac{\E e^{\theta Z_1}}{e^{\theta s}}  
    = d e^{\theta^2 / 2 - \theta s}. 
  \]
  Choose $\theta = s$ to minimize the RHS, and we obtain
  $
    \P( Z^* \ge s )
    \le e^{\log d - s^2 / 2 }. 
  $
  Replace $s$ with $\sqrt{2 \log d + s^2}$ and this becomes 
  \[    
    \P\left( Z^* \ge \sqrt{2 \log d} + s \right)
    \le \P\left( Z^* \ge \sqrt{2 \log d + s^2} \right)
    \le e^{-s^2/2 }. 
  \]
  Use the fact $-\min_{i \in [d]} Z_i \overset{d}{=} \max_{i \in [d]} Z_i$ and we complete the proof. 
\end{proof}

\section{Stage 1: recovery of the subspace and directions}

In this section, we consider the stage where the second layer is fixed to be a small value and the first layer is 
trained using online spherical SGD. Let $\v$ be a first-layer neuron that is good in the sense of~\eqref{eq: good neuron}.
Assume w.l.o.g.~that $v_1$ is the largest. Our goal in this section is to show $\v$ will converge to close to $\e_1$
with probability at least $1 - \delta_{\P}$ at the end of Stage~1.

For notational simplicity, let $l_{\corr}$ and $\Loss_{\corr}$ denote the per-sample and population correlation loss,
respectively. By Lemma~\ref{lemma: persample gradient}, we can write its update rule as 
\[
  \hat{\v}_{t+1}
  = \v_t + \eta \tnabla \Loss_{\corr}  + \eta \Z_{t+1}, \quad 
  \v_{t+1} = \frac{\hat\v_{t+1}}{ \norm{ \hat\v_{t+1} } }, 
\]
where $\Z_{t+1} =  (\Id - \v\v\trans)( \nabla_{\v} l_{\corr}(\x) - \nabla_{\v} \Loss_{\corr} )$ and, by Lemma~\ref{lemma: 
persample gradient},
\begin{align*}
  - \tnabla_{\v} \Loss_{\corr} 
  &= - (\Id - \v\v\trans) \nabla_{\v} \Loss \\
  &= 2 \hat\phi_2^2 \sum_{k=1}^{P} v_k (\Id - \v\v\trans) \e_k 
    + \sum_{l\ge L} \sum_{k=1}^{P} l \hat\phi_l^2 v_k^{l-1} (\Id - \v\v\trans) \e_k.
\end{align*}
In particular, for each $k \in [d]$, we have\footnote{We will often drop the subscript $t$ when it is clear from 
the context.}
\begin{align*}
  \hat{v}_{t+1, k}
  &= v_{t, k}
    + 2 \eta \hat\phi_2^2 \left( \indi\{k \le P\} - \norm{\v_{\le P}}^2 \right) v_k
    + L \eta \hat\phi_L^2 \left( \indi\{k \le P\} v_k^{L-2} - \norm{\v_{\le P}}_L^L \right) v_k \\
    &\qquad
    + \eta \sum_{l>L} l \hat\phi_l^2 \left( \indi\{k \le P\} v_k^{l-2} - \norm{\v_{\le P}}_l^l  \right) v_k 
    + \eta Z_{t+1, k} \\
  &= v_{t, k}
    + \eta \indi\{k \le P\}  \left( 
      2 \hat\phi_2^2 
      + L \hat\phi_L^2 v_k^{L-2} 
      + \sum_{l>L} l \hat\phi_l^2 v_k^{l-2}  
    \right) v_k \\
    &\qquad
    - \eta \left( 
      2 \hat\phi_2^2 \norm{\v_{\le P}}^2 
      + L \hat\phi_L^2 \norm{\v_{\le P}}_L^L 
      + \sum_{l>L} l \hat\phi_l^2 \norm{\v_{\le P}}_l^l  
    \right) v_k 
    + \eta Z_{t+1, k}.
\end{align*}
For notation simplicity, we define 
\begin{equation}
  \label{eq: definition of rho}
  \rho 
  := 2 \hat\phi_2^2 \norm{\v_{\le P}}^2 
    + L  \hat\phi_L^2 \norm{\v_{\le P}}_L^L 
    + \sum_{l>L} l \hat\phi_l^2 \norm{\v_{\le P}}_l^l.   
\end{equation}
Note that $\rho$ is independent of the coordinate $k$, and we can write 
\begin{equation}
  \label{eq: hat vt+1 k = ...}
  \begin{aligned}
    \hat{v}_{t+1, k}
    &= v_{t, k}
      + \eta \indi\{k \le P\}  \left( 
        2 \hat\phi_2^2 
        + L \hat\phi_L^2 v_k^{L-2} 
        + \sum_{l>L} l \hat\phi_l^2 v_k^{l-2}  
      \right) v_k 
      - \eta \rho v_k  
      + \eta Z_{t+1, k}.
  \end{aligned}
\end{equation}
For the martingale difference term $Z$, note that by Lemma~\ref{lemma: persample gradient}, for any $\u \in \S^{d-1}$, 
$\inprod{\Z_{t+1}}{\u}$ is a $(M_Z^2, \theta)$-subweibull variable with $M_Z = P^{1/2}$ and $1/\theta = 2(1 + Q)$. 
In particular, this implies
\begin{equation}
  \left|\inprod{\Z_{t+1}}{\u}\right| 
  \lesssim_\phi M_Z \log^{2(1+Q)}\log(d/\delta_{\P})
  =: \hat{M}_Z, 
  \quad\text{with probability at least $1 - \delta_{\P}/d^C$}C \label{eq: Z high probability bound}
\end{equation}
where $C > 0$ is any fixed constant.

In addition, we have the following lemma on the dynamics of $v_k^2$. The proof is routine calculation and is deferred
to the end of this section (cf.~Section~\ref{subsec: stage 1: deferred proofs}). 

\begin{lemma}[Dynamics of $v_k^2$]
  \label{lemma: dynamics of vk2}
  For any first-layer neuron $\v$ and $k \in [d]$, we have 
  \[
    \hat{v}_{t+1, k}^2
    = v_{t, k}^2 
      + 2 \eta \gamma_{t, k} v_{t, k}^2 
      + 2 \eta v_{t, k} Z_{t+1, k}
      + \xi_{t+1, k},
  \]
  where $\gamma_{k, t} 
  := \indi\{k \le P\}  \left( 2 \hat\phi_2^2 + L \hat\phi_L^2 v_k^{L-2} + \sum_{l>L} l \hat\phi_l^2 v_k^{l-2} \right) 
  - \rho$ is a $\cF_t$-measurable random variable with $|\gamma_{t, k}| \le 2 C_\phi^2$ 
  and $(\xi_{t+1})_{t \in [T]}$ is (uniformly) bounded by $O_\phi( \eta^2 \hat{M}_Z^2 )$ with probability at least 
  $1 - \delta_{\P}$. 
\end{lemma}

To proceed, we split Stage 1 into two substages. In Stage 1.1, we rely on the second-order terms to learn the relevant 
subspace. We will also show that the gap between largest and second-largest coordinates, which can be guaranteed
with certain probability at initialization, is preserved throughout Stage 1.1. These give Stage 1.2 a nice starting 
point. Then, we show that in Stage~1.2, online spherical SGD can recover the directions using the $L$-th order terms.

\subsection{Stage 1.1: recovery of the subspace and preservation of the gap}

In this subsection, first we show that the ratio $\norm{\v_{\le P}}^2 / \norm{\v_{> P}}^2$ will grow from $\Omega(P/d)$ to 
$\Theta(1)$ within $\tilde{O}(dP)$ iterations. We will rely on the second-order terms and bound 
the influence of higher-order terms. This leads to the desired complexity. The next goal to show the initial randomness 
can be preserved. In our case, we only need the gap between the largest and the second-largest coordinate to be 
preserved, which will ensure that the neurons will not collapse to one single direction. Formally, we have the following 
lemma.

\begin{lemma}[Stage 1.1]
  \label{lemma: main lemma of stage 1.1}
  Let $\v \in \S^{d-1}$ be an arbitrary first-layer neuron satisfying $\norm{ \v }_\infty \le \log^2 d / d$, 
  $\norm{\v_{\le P}}^2 / \norm{\v_{> P}}^2 \gtrsim P / d$, and $v_p^2 = (1 + \delta_0) \argmax_{q \in [P] \setminus \{p\}} \v_q^2$
  at initialization. Let $\delta_{\P}$ be given. Suppose that 
  \[
    P \gg_\phi \log^2 d 
    \quad\text{and}\quad 
    \eta 
    \lesssim_\phi \frac{\delta_0^2}{d P \log d} \left(
        \frac{P}{ \hat{M}_Z^2  } 
        \wedge 
        \frac{1}{ M_Z^2 \log^{\theta+1}(d / \delta_{\P}) }
      \right)
    = \Theta_\phi\left( \frac{\delta_c^2}{d P} \right).
  \]
  Then, with probability at least $1 - O(\delta_{\P})$, we have 
  \[ 
    \frac{ \norm{\v_{\le P}}^2 }{\norm{\v_{< P}}^2} \ge 1 
    \quad\text{within $T = \frac{1 + o(1)}{ 4 \hat{\phi}_2^2 \eta } \log\left(\frac{d}{P}\right) = \tilde{\Theta}( d P) $ iterations}. 
  \] 
  In addition, at the end of Stage~1.1, we have $v_p^2 = (1 + \delta_0/2) \argmax_{q \in [P] \setminus \{p\}} \v_q^2$.
\end{lemma}
\begin{proof}
  It suffices to combine Lemma~\ref{lemma: learning the subspace} and Lemma~\ref{lemma: preservation of the gap}.
\end{proof}

To prove this lemma, we will use stochastic induction (cf.~Section~\ref{sec: stochastic induction}), in particular,
Lemma~\ref{lemma: stochastic discrete gronwall} and Lemma~\ref{lemma: stochastic gronwall when alpha = 0}. 
For example, to analyze the dynamics of
$\norm{\v_{\le P}}^2 / \norm{\v_{> P}}^2$, it suffices to write down the update rule of 
$\norm{\v_{\le P}}^2 / \norm{\v_{> P}}^2$ and decompose it into a signal growth term, a higher-order error term, and 
a martingale difference term as in Lemma~\ref{lemma: stochastic discrete gronwall}. Then, we bound the higher-order 
error terms, and estimate the covariance of the martingale difference terms, assuming the induction hypotheses.

The induction hypotheses we will maintain in this substage are the following:
\begin{equation}
  \label{eq: stage 1.1 induction hypotheses}
  \frac{ \norm{\v_{t, \le P}}^2 }{ \norm{\v_{t, > P}}^2 }
  = \Theta(1) (1 + 4 \hat\phi_2^2 \eta)^t \frac{ \norm{\v_{0, \le P}}^2 }{ \norm{\v_{0, > P}}^2 }, \quad
  v_p^2 \le \frac{\log^2 d}{P}. 
\end{equation}
They are established in Lemma~\ref{lemma: learning the subspace} and Lemma~\ref{lemma: upper bound on vp2}. 

\subsubsection{Learning the subspace}

Now, we derive formulas for the dynamics of the ratio $\norm{\v_{\le P}}^2 / \norm{\v_{> P}}^2$. As we have mentioned 
earlier, the goal here is separate the signal terms, martingale difference terms, and higher-order error terms. 

\begin{lemma}[Dynamics of the norm ratio]
  \label{lemma: dynamics of the norm ratio}
  Assume the induction hypotheses~\eqref{eq: stage 1.1 induction hypotheses} at time $t \le T$. 
  Suppose that $\eta  \le \left( d \hat{M}_Z^2 \right)\inv$. 
  Let $\v$ be an arbitrary first-layer neuron. Then, at time $t$, we have 
  \[
    \frac{ \norm{\v_{t+1, \le P}}^2  }{ \norm{\v_{t+1, > P}}^2  }
    = \frac{ \norm{\v_{\le P}}^2 }{ \norm{\v_{>P}}^2 }
      \left( 1 + 4 \eta \hat\phi_2^2 + 2 \eta \eps_v \right) 
      + H_{t+1}
      + \xi_{t+1},
  \]
  where $\eps_v := \sum_{l\ge L} l \hat\phi_l^2 \norm{\v_{\le P}}_l^l / \norm{\v_{\le P}}^2$, where 
  $(H_{t+1})_t$ is a martingale difference sequence that is conditionally
  $\left( O_\phi( (1 + 4 \hat\phi_2^2 \eta)^t \frac{P}{d} ), \theta \right)$-subweibull, and 
  $(\xi_t)_t$ is an adapted process with 
  $|\xi_{t+1}| \lesssim_\phi (1 + 4 \hat\phi_2^2 \eta)^t \eta^2 \hat{M}_Z^2  P$ for all $t \in [T]$
  with probability at least $1 - \delta_{\P}$.
\end{lemma}
\begin{proof}
  First, recall from Lemma~\ref{lemma: dynamics of vk2} that 
  \[
    \hat{v}_{t+1, k}^2
    = v_{t, k}^2 
      + 2 \eta \gamma_{t, k} v_{t, k}^2 
      + 2 \eta v_{t, k} Z_{t+1, k}
      + \eta^2 \gamma_{t, k}^2 v_{t, k}^2
      + \eta^2 Z_{t+1, k}^2 
      + 2 \eta^2 \gamma_{t, k}^2 v_{t, k} Z_{t+1, k},
  \]
  where $\gamma_{k, t} 
  := \indi\{k \le P\} \left( 2 \hat\phi_2^2 + L \hat\phi_L^2 v_k^{L-2} + \sum_{l>L} l \hat\phi_l^2 v_k^{l-2} \right) - \rho$ 
  is a $\cF_t$-measurable random variable with $|\gamma_{t, k}| \le 2 C_\phi^2$.
  First, for $\norm{\hat{\v}_{\le P}}^2$, we have 
  \begin{align*}
    \norm{\hat{\v}_{t+1, \le P}}^2
    &= \left( 1 + 4 \eta \hat\phi_2^2 - 2 \eta \rho \right) \norm{\v_{\le P}}^2
      + 2 \eta \sum_{l\ge L} l \hat\phi_l^2 \norm{\v_{\le P}}^2 
      + 2 \eta \inprod{\v_{\le P}}{\Z_{\le P}} 
      \\
      &\qquad
      \underbrace{
        \pm 8 C_\phi^4 \eta^2 \norm{\v_{\le P}}^2
        \pm 2 \eta^2 \norm{\Z_{\le P}}^2 
      }_{=:\; \xi_{\le P, t+1}}.
  \end{align*}
  Similarly, for $\norm{\hat{\v}_{> P}}$, we have 
  \begin{align*}
    \norm{\hat{\v}_{t+1, > P}}^2
    &= \norm{\v_{> P}}^2
      - 2 \eta \rho \norm{\v_{> P}}^2
      + 2 \eta \inprod{\v_{>P}}{\Z_{>P}} 
      \underbrace{
        \pm 8 C_\phi^4 \eta^2 \norm{\v_{> P}}^2
        \pm 2 \eta^2 \norm{\Z_{>P}}^2 
      }_{=:\;\xi_{>P, t+1}}. 
  \end{align*}
  For notational simplicity, we also write $\eps_v := \sum_{l\ge L} l \hat\phi_l^2 \norm{\v_{\le P}}_l^l / \norm{\v_{\le P}}^2$. 
  Note that by Assumption~\ref{assumption: link function} and the fact that 
  $\norm{\v_{\le P}}_l^l / \norm{\v_{\le P}}^2 \le 1$, $\eps_v \le C_\phi^2$. 
  Since $\norm{\v_{\le P}} / \norm{\v_{> P}} = \norm{\hat\v_{\le P}} / \norm{\hat\v_{> P}}$, we have 
  \begin{align*}
    \frac{ \norm{\v_{t+1, \le P}}^2  }{ \norm{\v_{t+1, > P}}^2  }
    &= \frac{
        \left( 1 + 4 \eta \hat\phi_2^2  - 2 \eta \rho + 2 \eta  \eps_v \right) \norm{\v_{\le P}}^2
      }{
        \left( 1  - 2 \eta \rho \right) \norm{\v_{>P}}^2
      }
      \left(
        1 
        - \frac{ 2 \eta \inprod{\v_{>P}}{\Z_{>P}} }{ \norm{\hat\v_{t+1, > P}}^2 }
        - \frac{ \xi_{> P} }{ \norm{\hat\v_{t+1, > P}}^2 }
      \right)
      \\
      &\qquad
      + \frac{  2 \eta \inprod{\v_{\le P}}{\Z_{\le P}} }{ \norm{\hat\v_{t+1, > P}}^2 } 
      + \frac{ \xi_{\le P} }{ \norm{\hat\v_{t+1, > P}}^2 }  \\
    &= 
      \frac{
        \left( 1 + 4 \eta \hat\phi_2^2  - 2 \eta \rho + 2 \eta \eps_v \right) \norm{\v_{\le P}}^2
      }{
        \left( 1  - 2 \eta \rho \right) \norm{\v_{>P}}^2
      } \\
      &\qquad
      - \frac{
        \left( 1 + 4 \eta \hat\phi_2^2  - 2 \eta \rho + 2 \eta \eps_v \right) \norm{\v_{\le P}}^2
      }{
        \left( 1  - 2 \eta \rho \right) \norm{\v_{>P}}^2
      } \frac{ 2 \eta \inprod{\v_{>P}}{\Z_{>P}} }{ \norm{\hat\v_{t+1, > P}}^2 } 
      + \frac{  2 \eta \inprod{\v_{\le P}}{\Z_{\le P}} }{ \norm{\hat\v_{t+1, > P}}^2 }  \\
      &\qquad
      - \frac{
        \left( 1 + 4 \eta \hat\phi_2^2  - 2 \eta \rho + 2 \eta \eps_v \right) \norm{\v_{\le P}}^2
      }{
        \left( 1  - 2 \eta \rho \right) \norm{\v_{>P}}^2
      } \frac{ \xi_{> P} }{ \norm{\hat\v_{t+1, > P}}^2 }
      + \frac{ \xi_{\le P} }{ \norm{\hat\v_{t+1, > P}}^2 }  \\
    =:\;& \Term_1\left( \frac{ \norm{\v_{t+1, \le P}}^2  }{ \norm{\v_{t+1, > P}}^2  } \right)
      + \Term_2\left( \frac{ \norm{\v_{t+1, \le P}}^2  }{ \norm{\v_{t+1, > P}}^2  } \right)
      + \Term_3\left( \frac{ \norm{\v_{t+1, \le P}}^2  }{ \norm{\v_{t+1, > P}}^2  } \right), 
  \end{align*}
  where each $\Term_i$ represents one line. Note that, up to some higher order terms, $\Term_1$ contains the 
  signal terms and $\Term_2$ contains the martingale difference terms. Now, our goal is to factor out those 
  higher order terms. 

  For $\Term_1$, recall that $|\rho| \le C_\phi^2$ and use the fact that  
  \begin{equation}
    \label{eq: 1/(1 + z) approx 1 - z}
    \frac{1}{1 + z} = 1 - z \pm 2 z^2, \quad \forall |z| \le 1/2, 
  \end{equation}
  to obtain
  \begin{align*}
    \Term_1\left( \frac{ \norm{\v_{t+1, \le P}}^2  }{ \norm{\v_{t+1, > P}}^2  } \right)
    &= \frac{ \norm{\v_{\le P}}^2 }{ \norm{\v_{>P}}^2 }
      \left( 1 + 4 \eta \hat\phi_2^2  - 2 \eta \rho + 2 \eta \eps_v \right) 
      \left( 1 + 2 \eta \rho \pm 4 \eta^2 \rho \right)  \\
    &= \frac{ \norm{\v_{\le P}}^2 }{ \norm{\v_{>P}}^2 }
      \left( 1 + 4 \eta \hat\phi_2^2 + 2 \eta \eps_v \pm 30 C_\phi^4 \eta^2 \right). 
  \end{align*}
  
  Now, we consider 
  \[
    \Term_2 
    = 
    - \frac{ \norm{\v_{\le P}}^2 }{\norm{\v_{>P}}^2 }
    \frac{ 1 + 4 \eta \hat\phi_2^2  - 2 \eta \rho + 2 \eta \eps_v }{ 1  - 2 \eta \rho  } 
    \frac{ 2 \eta \inprod{\v_{>P}}{\Z_{>P}} }{ \norm{\hat\v_{t+1, > P}}^2 } 
    + \frac{  2 \eta \inprod{\v_{\le P}}{\Z_{\le P}} }{ \norm{\hat\v_{t+1, > P}}^2 }.
  \]
  First, we estimate the $1 / \norm{\hat\v_{t+1, > P}}^2$. We write 
  \begin{align*}
    \frac{1}{\norm{\hat\v_{t+1, > P}}^2}
    &= \frac{1}{
        (1 - 2 \eta \rho) \norm{\v_{> P}}^2
        + 2 \eta \inprod{\v_{>P}}{\Z_{>P}} 
        + \xi_{>P, t+1}
      } \\
    &= \frac{1}{ (1 - 2 \eta \rho) \norm{\v_{> P}}^2 }
      \left(
        1
        - \frac{ 
          2 \eta \inprod{\v_{>P}}{\Z_{>P}} 
          + \xi_{>P, t+1} 
        }{
          \norm{\hat\v_{t+1, >P}}^2
        }
      \right).
  \end{align*}
  By \eqref{eq: Z high probability bound}, we have with probability at least $1 - \delta_{\P}$ that 
  \[
    \left| \overline{\v_{>P}} \cdot \Z_{>P} \right|
    \wedge \left| \overline{\v_{\le P}} \cdot \Z_{\le P} \right|
    \wedge \max_{k \in [d]} |Z_k|
    \lesssim_\phi \hat{M}_Z.
  \]
  Note that the above conditions also imply 
  \begin{align*}
    | \xi_{\le P} |
    &\le 8 C_\phi^4 \eta^2 \norm{\v_{\le P}}^2
      + 2 \eta^2 P \hat{M}_Z^2
    \lesssim_\phi \eta^2 P \hat{M}_Z^2 , \\
    | \xi_{> P} |
    &\le 
      8 C_\phi^4 \eta^2 \norm{\v_{> P}}^2
      + 2 \eta^2 d \hat{M}_Z^2
    \lesssim_\phi \eta^2 d \hat{M}_Z^2.
  \end{align*}
  By our definition of Stage~1.1, we have $\norm{\hat\v_{t+1, > P}}^2 \ge 1/2$. 
  Then, we have 
  \begin{align*}
    \frac{1}{\norm{\hat\v_{t+1, > P}}^2}
    &= \frac{1}{ (1 - 2 \eta \rho) \norm{\v_{> P}}^2 }
      \left(
        1
        \pm 4 \eta \hat{M}_Z
        \pm 32 C_\phi^4 \eta^2 d \hat{M}_Z^2
      \right) \\
    &= \frac{1}{ (1 - 2 \eta \rho) \norm{\v_{> P}}^2 }
      \left( 1 \pm O_\phi( \eta \hat{M}_Z ) \right).
  \end{align*}
  Using the above two estimations of $1/\norm{\hat{\v}_{t+1, >P}}^2$, we can rewrite $\Term_2$ as 
  \begin{align*}
    \Term_2 
    &= - \frac{ \norm{\v_{\le P}}^2 }{\norm{\v_{>P}}^2 }
      \frac{ 1 + 4 \eta \hat\phi_2^2  - 2 \eta \rho + 2 \eta \eps_v }{ 1  - 2 \eta \rho  } 
      \frac{ 2 \eta \inprod{\v_{>P}}{\Z_{>P}} }{ (1 - 2 \eta \rho) \norm{\v_{> P}}^2 } 
      \left( 1 \pm O_\phi( \eta \hat{M}_Z ) \right) \\
      &\qquad
      + \frac{  2 \eta \inprod{\v_{\le P}}{\Z_{\le P}} }{ (1 - 2 \eta \rho) \norm{\v_{> P}}^2 } \left( 1 \pm O_\phi( \eta \hat{M}_Z ) \right) \\
    &= - \frac{ \norm{\v_{\le P}}^2 }{\norm{\v_{>P}}^2 }
      \frac{ 1 + 4 \eta \hat\phi_2^2  - 2 \eta \rho + 2 \eta \eps_v }{ 1  - 2 \eta \rho  } 
      \frac{ 2 \eta \inprod{\v_{>P}}{\Z_{>P}} }{ (1 - 2 \eta \rho) \norm{\v_{> P}}^2 } 
      + \frac{  2 \eta \inprod{\v_{\le P}}{\Z_{\le P}} }{ (1 - 2 \eta \rho) \norm{\v_{> P}}^2 }
      \\
      &\qquad
      \pm 20 \frac{ \norm{\v_{\le P}}^3 }{\norm{\v_{>P}}^3 } \eta \hat{M}_Z  O_\phi( \eta \hat{M}_Z )
      \pm 4 \frac{ \norm{\v_{\le P}} }{\norm{\v_{>P}} } \eta \hat{M}_Z O_\phi( \eta \hat{M}_Z ) \\
    &= - \frac{ \norm{\v_{\le P}}^2 }{\norm{\v_{>P}}^2 }
      \frac{ 1 + 4 \eta \hat\phi_2^2  - 2 \eta \rho+  2 \eta \eps_v }{ 1  - 2 \eta \rho  } 
      \frac{ 2 \eta \inprod{\v_{>P}}{\Z_{>P}} }{ (1 - 2 \eta \rho) \norm{\v_{> P}}^2 } 
      + \frac{  2 \eta \inprod{\v_{\le P}}{\Z_{\le P}} }{ (1 - 2 \eta \rho) \norm{\v_{> P}}^2 } \\
      &\qquad
      \pm O_\phi\left( \frac{ \norm{\v_{\le P}} }{\norm{\v_{>P}} } \eta^2 \hat{M}_Z^2  \right).
  \end{align*}
  Finally, consider the third term 
  \[
    \Term_3 
    :=
      - \frac{
        \left( 1 + 4 \eta \hat\phi_2^2  - 2 \eta \rho + 2 \eta \eps_v \right) \norm{\v_{\le P}}^2
      }{
        \left( 1  - 2 \eta \rho \right) \norm{\v_{>P}}^2
      } \frac{ \xi_{> P} }{ \norm{\hat\v_{t+1, > P}}^2 }
      + \frac{ \xi_{\le P} }{ \norm{\hat\v_{t+1, > P}}^2 } .
  \]
  By our previous bounds on $\xi$, we have 
  \[
    |\Term_3| 
    \le 64 \frac{ \norm{\v_{\le P}}^2 }{ \norm{\v_{>P}}^2 } \frac{ C_\phi^4 \eta^2  d \hat{M}_Z^2 }{ \norm{\hat\v_{t+1, > P}}^2 }
      + 32 C_\phi^4 \eta^2  P \hat{M}_Z^2  
    \le 100 C_\phi^4 \eta^2  \hat{M}_Z^2  
      \left(
        P \vee \frac{ \norm{\v_{\le P}}^2 }{ \norm{\v_{>P}}^2 } d
      \right).
  \]
  Combine the above bounds, and we get 
  \begin{align*}
    \frac{ \norm{\v_{t+1, \le P}}^2  }{ \norm{\v_{t+1, > P}}^2  }
    &= \frac{ \norm{\v_{\le P}}^2 }{ \norm{\v_{>P}}^2 }
      \left( 1 + 4 \eta \hat\phi_2^2 + 2 \eta \eps_v \right) \\
      &\qquad
      - \frac{ \norm{\v_{\le P}}^2 }{\norm{\v_{>P}}^2 }
        \frac{ 1 + 4 \eta \hat\phi_2^2  - 2 \eta \rho+  2 \eta \eps_v }{ 1  - 2 \eta \rho  } 
        \frac{ 2 \eta \inprod{\v_{>P}}{\Z_{>P}} }{ (1 - 2 \eta \rho) \norm{\v_{> P}}^2 } 
      + \frac{  2 \eta \inprod{\v_{\le P}}{\Z_{\le P}} }{ (1 - 2 \eta \rho) \norm{\v_{> P}}^2 } \\
      &\qquad
      \pm \frac{ \norm{\v_{\le P}}^2 }{ \norm{\v_{>P}}^2 } 30 C_\phi^4 \eta^2  
      \pm 1000 C_\phi^4 
      \frac{ \norm{\v_{\le P}} }{\norm{\v_{>P}} } \eta^2 \hat{M}_Z^2 
      \left(
        1
        \vee \eta d \hat{M}_Z 
      \right) \\
      &\qquad
      \pm 100 C_\phi^4 \eta^2  \hat{M}_Z^2  
      \left(
        P \vee \frac{ \norm{\v_{\le P}}^2 }{ \norm{\v_{>P}}^2 } d
      \right).
  \end{align*}
  Let $H_{t+1}$ denote the second line and $\xi_{t+1}$ denote the last two lines. Recall our induction
  hypothesis $\norm{\v_{t, \le P}}^2 / \norm{\v_{t, > P}}^2 = \Theta(1) (1 + 4 \hat\phi_2^2 \eta)^t 
  \norm{\v_{0, \le P}}^2 / \norm{\v_{0, > P}}^2 = \Theta( (1 + 4 \hat\phi_2^2 \eta)^t P / d )$. Meanwhile,
  note that $\frac{ \norm{\v_{\le P}} }{\norm{\v_{>P}} } \le \frac{ \norm{\v_{\le P}}^2 }{\norm{\v_{>P}}^2 } \sqrt{\frac{d}{P}}$
  Then, we compute 
  \begin{align*}
    |\xi_{t+1}|
    &\lesssim_\phi 
      (1 + 4 \hat\phi_2^2 \eta)^t \frac{P}{d} \eta^2  
      + 
      (1 + 4 \hat\phi_2^2 \eta)^t \frac{P}{d} \sqrt{\frac{d}{P}} \eta^2 \hat{M}_Z^2 
      \left(
        1
        \vee \eta d \hat{M}_Z 
      \right) 
      + \eta^2  \hat{M}_Z^2  (1 + 4 \hat\phi_2^2 \eta)^t P \\
    &\lesssim_\phi (1 + 4 \hat\phi_2^2 \eta)^t \eta^2 \hat{M}_Z^2 
      \left( \sqrt{Pd} \eta \hat{M}_Z \vee  P \right) \\
    &\lesssim_\phi (1 + 4 \hat\phi_2^2 \eta)^t \eta^2 \hat{M}_Z^2 P
  \end{align*}
  Then, consider $H_{t+1}$. We have 
  \begin{align*}
    |H_{t+1}|
    &\le \left|
        \frac{ \norm{\v_{\le P}}^2 }{\norm{\v_{>P}}^2 }
        \frac{ 1 + 4 \eta \hat\phi_2^2  - 2 \eta \rho+  2 \eta \eps_v }{ 1  - 2 \eta \rho  } 
        \frac{ 2 \eta \inprod{\v_{>P}}{\Z_{>P}} }{ (1 - 2 \eta \rho) \norm{\v_{> P}}^2 } 
      \right|
      + \left| \frac{  2 \eta \inprod{\v_{\le P}}{\Z_{\le P}} }{ (1 - 2 \eta \rho) \norm{\v_{> P}}^2 } \right| \\
    &\lesssim 
      (1 + 4 \hat\phi_2^2 \eta)^t \frac{P}{d} \eta
      \left| \inprod{\overline{\v_{>P}}}{\Z_{>P}} \right|
      + \left( (1 + 4 \hat\phi_2^2 \eta)^t \frac{P}{d} \right)^{1/2} \eta 
        \left| \inprod{\overline{\v_{\le P}}}{\Z_{\le P}} \right|. 
  \end{align*}
  Since both $\left| \inprod{\overline{\v_{>P}}}{\Z_{>P}} \right|$ and $\left| \inprod{\overline{\v_{\le P}}}{\Z_{\le P}} \right|$
  are conditionally $(P, \theta)$-subweibull, $H_{t+1}$ is conditionally 
  $\left( O_\phi( (1 + 4 \hat\phi_2^2 \eta)^t \frac{P}{d} ), \theta \right)$-subweibull.
\end{proof}

With the above formula, we can now use Lemma~\ref{lemma: stochastic discrete gronwall} to analyze the dynamics
of the ratio of the norms. 

\begin{lemma}[Learning the subspace]
  \label{lemma: learning the subspace}
  Suppose that 
  \[
    P \gg_\phi \log^2 d 
    \quad\text{and}\quad 
    \eta 
    \lesssim_\phi \frac{1}{d P \log d} \left(
        \frac{P^2}{ \hat{M}_Z^2  } 
        \wedge 
        \frac{P}{ M_Z^2 \log^{\theta+1}(d / \delta_{\P}) }
      \right)
    = \tilde{\Theta}_\phi\left( \frac{1}{d P} \right).
  \]
  Then, throughout Stage~1.1, we have 
  \[
    \frac{ (1 + 4 \hat\phi_2^2 \eta)^t }{2} \frac{\norm{\v_{0, \le P}}^2}{\norm{\v_{0, > P}}^2}
    \le \frac{\norm{\v_{\le P}}^2}{\norm{\v_{> P}}^2}
    \le \frac{ 3 (1 + 4 \hat\phi_2^2 \eta)^t }{2} \frac{\norm{\v_{0, \le P}}^2}{\norm{\v_{0, > P}}^2},
  \]
  and Stage~1.1 takes at most $(1 + o(1)) (4 \hat{\phi}_2^2 \eta)\inv \log\left( d / P \right) = \tilde{O}_\phi\left( dP \right)$ iterations. 
  For this result to hold for the $P$ good neurons (satisfying \eqref{eq: good neuron}), it suffices to replace 
  $\delta_{\P}$ with $\delta_{\P}/P$. 
\end{lemma}
\begin{proof}
  First, by Lemma~\ref{lemma: dynamics of the norm ratio}, we have for any $t \le T$, 
  \[
    \frac{ \norm{\v_{t+1, \le P}}^2  }{ \norm{\v_{t+1, > P}}^2  }
    = \frac{ \norm{\v_{\le P}}^2 }{ \norm{\v_{>P}}^2 }
      \left( 1 + 4 \eta \hat\phi_2^2 + 2 \eta \eps_v \right) 
      + H_{t+1}
      + \xi_{t+1},
  \]
  where $\eps_v := \sum_{l\ge L} l \hat\phi_l^2 \norm{\v_{\le P}}_l^l / \norm{\v_{\le P}}^2$, where 
  $(H_{t+1})_t$ is a martingale difference sequence that is conditionally
  $\left( O_\phi( (1 + 4 \hat\phi_2^2 \eta)^t \frac{P}{d} ), \theta \right)$-subweibull, and 
  $(\xi_t)_t$ is an adapted process with 
  $|\xi_{t+1}| \lesssim_\phi (1 + 4 \hat\phi_2^2 \eta)^t \eta^2 \hat{M}_Z^2  P$ for all $t \in [T]$
  with probability at least $1 - \delta_{\P}$.
  By our induction hypothesis $v_p^2 \le \log^2 d / P$, we have 
  \[
    0 
    \le \eps_v 
    := \frac{1}{\norm{\v_{\le P}}^2} \sum_{l\ge L} l \hat\phi_l^2 \sum_{k=1}^{P} v_k^l 
    \le \sum_{l\ge L} l \hat\phi_l^2 \norm{\v_{\le P}}_\infty^{l-2} 
    \le \frac{C_\phi^2 \log^{L-2} d}{P^{L/2-1}} 
    =: \delta_v. 
  \]
  In particular, note that $\delta_v$ does not depend on $t$ and is $o(1)$. For notational simplicity, let 
  $X_t := \norm{\v_{\le P}}^2 / \norm{\v_{>P}}^2$, $x_t^- = (1 + 4 \eta)^t x_0$ and 
  $x_t^+ = (1 + 4 \eta (1 + \delta_v))^t x_0$. 
  $x^\pm$ will serve as the 
  lower and upper bounds for the deterministic counterpart of $X$, since 
  \begin{align*}
      \left( 1 + 4 \hat\phi_2^2 \eta \right) X_t 
      + \xi_{t+1} 
      + H_{t+1} 
    \le X_{t+1}
    \le \left( 1 + 4 \hat\phi_2^2 \eta (1 + \delta_v) \right) X_t 
      + \xi_{t+1} 
      + H_{t+1}.  
  \end{align*}
  Moreover, note that for any $t \le T$, we have 
  \begin{align*}
    \frac{x^+_t}{x^-_t}
    = \left( \frac{1 + 4 \hat\phi_2^2 \eta (1 + \delta_v)}{1 + 4 \hat\phi_2^2 \eta} \right)^t 
    &= \left( 
        \left( 1 + 4   \eta (1 + \delta_v) \right)\left( 1 - 4 \hat\phi_2^2 \eta \pm O_\phi(\eta^2) \right)
      \right)^t  \\
    &\le \left( 
        1 + 4 \hat\phi_2^2 \eta \delta_v
        \pm O_\phi(\eta^2)
      \right)^t \\
    &\le \exp\left( O_\phi(1) \eta T \left( \delta_v + \eta \right) \right). 
  \end{align*}
  Since $T \lesssim_\phi \log d / \eta$, the above implies 
  \[
    1 
    \le \frac{x^+_t}{x^-_t}
    \le  \exp\left( O_\phi(1) \log d \left( \delta_v + \eta \right) \right)
    \le 1 + O_\phi(1) \log d \left( \delta_v + \eta \right)
    = 1 + o(1), 
  \]
  where the last (approximate) identity holds whenever 
  \[
    \delta_v \ll \frac{1}{\log d} 
    \quad\Leftarrow\quad 
    \frac{C_\phi^2 \log^{L-2} d}{P^{L/2-1}}  \ll \frac{1}{\log d}
    \quad\Leftarrow\quad 
    P \gg_\phi \log^2 d. 
  \]
  In particular, this implies that the (multiplicative) difference between $x^+_t$ and $x^-_t$ is small. 
  Now, we apply Lemma~\ref{lemma: stochastic discrete gronwall} to $X_t$. In our case, we have 
  \[
    \Xi 
    \lesssim_\phi 
    \eta^2 \hat{M}_Z^2  P,
    \quad 
    \sigma_Z^2 
    \lesssim_\phi \eta^2 \frac{P}{d} M_Z^2, 
  \]
  $\alpha = 4 (1 + o(1)) \hat\phi_2^2 \eta$ and $X_0 = \Theta(P/d)$. Recall that $T \lesssim_\phi \log d / \eta$. 
  Hence, to meet the conditions of Lemma~\ref{lemma: stochastic discrete gronwall}, it suffices to choose
  \begin{align*}
    \eta^2 P \hat{M}_Z^2 
    \lesssim_\phi \frac{X_0 }{T} 
    &\quad\Leftarrow\quad 
    \eta 
    \lesssim_\phi \frac{1}{ d \hat{M}_Z^2  \log d }, 
    \\ 
    \eta^2 \frac{P}{d} M_Z^2
    \lesssim_\phi \frac{x_0^2}{T \log^{\theta+1}(T / \delta_{\P}) }
    &\quad\Leftarrow\quad 
    \eta 
    \lesssim_\phi \frac{P }{d M_Z^2 \log d \log^{\theta+1}(T / \delta_{\P}) }.
  \end{align*}
  Then, by Lemma~\ref{lemma: stochastic discrete gronwall},
  we have, with probability at least $1 - \Theta(\delta_{\P})$, $0.5 x^-_t \le X_t \le 1.5 x^+_t$. Since $x^+_t =
  (1 + o(1)) x^-_t$, this implies $0.5 x_t \le X_t \le 2 x_t$. To complete the proof, it suffices to note that 
  for $x_t$ to grow from $\Theta(P/d)$ to $1$, the number of iterations needed is bounded by 
  $ (1 + o(1)) (4 \phi_2^2 \eta)\inv \log\left( d / P \right)$. 
\end{proof}

\subsubsection{Preservation of the gap}
\label{subsubsec: stage 1.1: preservation of the gap}

Now, we show that the gap between the largest coordinate and the second-largest coordinate can be preserved
in Stage~1.1. Let $p = \argmax_{i \in [P]} v_i^2(0)$ and consider the ratio $v_q^2 / v_p^2$, where $q \in [P]$ is 
arbitrary. The proof is conceptually very similar to the previous one, except that we will use 
Lemma~\ref{lemma: stochastic gronwall when alpha = 0} instead of Lemma~\ref{lemma: stochastic discrete gronwall}.

\begin{lemma}
  \label{lemma: dynamics of vq2/vp2}
  For $p = \argmax_{i \in [P]} v_i^2(0)$ and any $q \in [P]$,
  we have 
  \[
    \frac{v_{t+1, q}^2}{v_{t+1, p}^2}
    \le \frac{v_{t, q}^2}{v_{t, p}^2}
      + H_{t+1} 
      + \xi_{t+1}, 
  \]
  where $(H_{t+1})_t$ is a martingale difference sequence that is conditionally
  $\left( O_\phi( \eta^2 d M_Z^2 ), \theta \right)$-subweibull, 
  and $(\xi_t)_t$ is an adapted process that is uniformly bounded by $O_\phi(\eta^2 d \hat{M}_Z^2)$ with probability
  at least $1 - \delta_{\P}$.
\end{lemma}
\begin{proof}
  For notational simplicity, define 
  \(
    \rho_{t, q/p} := v_{t, q}^2 / v_{t, p}^2 , 
  \)
  Our goal is to upper bound $\rho_{t, q/p}$. 
  Recall from Lemma~\ref{lemma: dynamics of vk2} that for any $k \le P$, we have 
  \[
    \hat{v}_{t+1, k}^2
    = v_{t, k}^2 
      + 2 \eta \gamma_{t, k} v_{t, k}^2 
      + 2 \eta v_{t, k} Z_{t+1, k}
      + \underbrace{
        \eta^2 \gamma_{t, k}^2 v_{t, k}^2
        + \eta^2 Z_{t+1, k}^2 
        + 2 \eta^2 \gamma_{t, k}^2 v_{t, k} Z_{t+1, k} 
      }_{=:\;\xi_{t+1, k}},
  \]
  where $\gamma_{k, t} := 2 \hat\phi_2^2 + L \hat\phi_L^2 v_k^{L-2} + \sum_{l>L} l \hat\phi_l^2 v_k^{l-2} - \rho$.
  Then, we compute 
  \begin{align*}
    \rho_{t+1, q/p} 
    &\le \frac{ 
        \left(1 + 2 \eta \gamma_{t, q}   \right) v_{t, q}^2 
        + 2 \eta v_{t, q} Z_{t+1, q}
        + \xi_{t+1, q}
      }{ 
        \left(1 + 2 \eta \gamma_{t, p} \right) v_{t, p}^2 
        + 2 \eta v_{t, p} Z_{t+1, p}
        + \xi_{t+1, p}
      }   \\
    &= \frac{ 
        \left(1 + 2 \eta \gamma_{t, q}   \right) v_{t, q}^2 
      }{
        \left(1 + 2 \eta \gamma_{t, p} \right) v_{t, p}^2 
      }
      - \frac{ 
        \left(1 + 2 \eta \gamma_{t, q}   \right) v_{t, q}^2 
      }{
        \left(1 + 2 \eta \gamma_{t, p} \right) v_{t, p}^2 
      }
      \frac{ 2 \eta v_{t, p} Z_{t+1, p}  }{  \hat{v}_{t+1, p}^2 } 
      + \frac{ 
        2 \eta v_{t, q} Z_{t+1, q}
      }{
        \left(1 + 2 \eta \gamma_{t, p} \right) v_{t, p}^2 
      } \\
      &\qquad
      + \frac{ \xi_{t+1, q} }{ \hat{v}_{t+1, p}^2 } 
      - \frac{ \left(1 + 2 \eta \gamma_{t, q}   \right) v_{t, q}^2 }{ \left(1 + 2 \eta \gamma_{t, p} \right) v_{t, p}^2 }
        \frac{ \xi_{t+1, p} }{ \hat{v}_{t+1, p}^2 } 
      - \frac{ 
        2 \eta v_{t, q} Z_{t+1, q}
      }{
        \left(1 + 2 \eta \gamma_{t, p} \right) v_{t, p}^2 
      }
      \frac{ 2 \eta v_{t, p} Z_{t+1, p} + \xi_{t+1, p} }{  \hat{v}_{t+1, p}^2 }  \\
    &=: \Term_1(\rho_{t+1, q/p} ) + \Term_2(\rho_{t+1, q/p} ) + \Term_3(\rho_{t+1, q/p} ),
  \end{align*}
  where $\Term_1$ contains the first term (signal term), $\Term_2$ contains the next two terms (approximate martingale difference terms), 
  and $\Term_3$ contains the last line (higher order error terms). 

  First, for the first term, we compute 
  \begin{align*}
    \Term_1 
    &= \rho_{t, q/p}
      \left( 1 + 2 \eta \gamma_{t, q} \right)
      \left(
        1 
        - 2 \eta \gamma_{t, p} 
        \left( 1 - \frac{2 \eta \gamma_{t, p}}{ 1 + 2 \eta \gamma_{t, p} } \right)
      \right) \\
    &= \rho_{t, q/p}
      \left( 1 + 2 \eta \gamma_{t, q} \right)
      \left(
        1 
        - 2 \eta \gamma_{t, p} 
        \pm  5 \eta^2 \gamma_{t, p}^2 
      \right) \\
    &= \rho_{t, q/p}
      \left(
        1 
        + 2 \eta \left( \gamma_{t, q} - \gamma_{t, p}  \right)
        \pm 20 \eta^2 \left( \gamma_{t, p}^2 \vee \gamma_{t, q}^2 \right)
      \right).
  \end{align*}
  Recall that $|\gamma_{t, k}| \le 2 C_\phi^2$ and note that 
  \(
    \gamma_{t, q} - \gamma_{t, p}
    = \sum_{l\ge L} l \hat\phi_l^2 v_q^{l-2} 
      - \sum_{l\ge L} l \hat\phi_l^2 v_p^{l-2} 
    \le 0.
  \)
  Hence, 
  \[
    \Term_1 \le \rho_{t, q/p} \left( 1 + 80 C_\phi^4 \eta^2 \right).
  \]
  Now, consider the martingale difference term 
  \[
    \Term_2 
    := - \rho_{t, q/p} \frac{ 1 + 2 \eta \gamma_{t, q} }{ 1 + 2 \eta \gamma_{t, p} }
      \frac{ 2 \eta v_{t, p} Z_{t+1, p}  }{  \hat{v}_{t+1, p}^2 } 
      + \frac{ 2 \eta v_{t, q} Z_{t+1, q} }{ \left(1 + 2 \eta \gamma_{t, p} \right) v_{t, p}^2 }.
  \]
  We rewrite the denominator as 
  \begin{align*}
    \frac{1}{\hat{v}_{t+1, p}^2}
    &= \frac{1}{
        v_{t, p}^2 
        + 2 \eta \gamma_{t, p} v_{t, p}^2 
        + 2 \eta v_{t, p} Z_{t+1, p}
        + \xi_{t+1, p}
      } \\
    &= \frac{1}{ v_{t, p}^2 + 2 \eta \gamma_{t, p} v_{t, p}^2  }
    - \frac{1}{ v_{t, p}^2 + 2 \eta \gamma_{t, p} v_{t, p}^2  }
      \frac{ 2 \eta v_{t, p} Z_{t+1, p} + \xi_{t+1, p} }{ \hat{v}_{t+1, p}^2 }.
  \end{align*}
  By \eqref{eq: Z high probability bound}, with probability at least $1 - \delta_{\P}/d^C$, we have 
  \[
    |\xi_{p, t+1}|  
    \lesssim_\phi \eta^2 v_{t, p}^2
      + \eta^2 \hat{M}_Z^2
      + \eta^2 |v_{t, p}| \hat{M}_Z
    \lesssim_\phi \eta^2 \hat{M}_Z^2.
  \]
  Therefore, 
  \[
    \frac{1}{\hat{v}_{t+1, p}^2}
    = \frac{1}{ v_{t, p}^2 + 2 \eta \gamma_{t, p} v_{t, p}^2  }
    \pm O_\phi\left( \frac{1}{ v_{t, p}^2 } \frac{ \eta \hat{M}_Z }{ v_{t, p} }  \right).
  \]
  Then, we can rewrite $\Term_2$ as 
  \begin{align*}
    \Term_2 
    &= - \rho_{t, q/p} \frac{ 1 + 2 \eta \gamma_{t, q} }{ 1 + 2 \eta \gamma_{t, p} } \frac{2 \eta v_{t, p} Z_{t+1, p}  }{ v_{t, p}^2 + 2 \eta \gamma_{t, p} v_{t, p}^2  } 
       + \frac{ 2 \eta v_{t, q} Z_{t+1, q} }{ \left(1 + 2 \eta \gamma_{t, p} \right) v_{t, p}^2 } 
      \pm O_\phi\left( \eta^2 \frac{\hat{M}_Z^2 }{ v_{t, p}^2 }  \right) \\
    &=: H_{t+1}
     \pm O_\phi\left( \eta^2 d \hat{M}_Z^2 \right).
  \end{align*}
  Note that $H_{t+1}$ is a martingale difference term with 
  \[
    |H_{t+1}|
    \lesssim_\phi  \frac{\eta |Z_{t+1, p}|  }{ v_{t, p}} + \frac{ \eta v_{t, q} |Z_{t+1, q}| }{ v_{t, p}^2 }
    \lesssim_\phi  \eta \sqrt{d} \left( |Z_{t+1, p}| + |Z_{t+1, q}| \right).
  \]
  Since $Z_{t+1,p}$ and $Z_{t+1, q}$ are both conditionally $(M_Z^2, \theta)$-subweibull, $H_{t+1}$ is 
  conditionally $\left( O_\phi( \eta^2 d M_Z^2 ), \theta \right)$-subweibull.
  Finally, consider 
  \[
    \Term_3
    := \frac{ \xi_{t+1, q} }{ \hat{v}_{t+1, p}^2 } 
      - \frac{ \left(1 + 2 \eta \gamma_{t, q}   \right) v_{t, q}^2 }{ \left(1 + 2 \eta \gamma_{t, p} \right) v_{t, p}^2 }
        \frac{ \xi_{t+1, p} }{ \hat{v}_{t+1, p}^2 } 
      - \frac{ 
        2 \eta v_{t, q} Z_{t+1, q}
      }{
        \left(1 + 2 \eta \gamma_{t, p} \right) v_{t, p}^2 
      }
      \frac{ 2 \eta v_{t, p} Z_{t+1, p} + \xi_{t+1, p} }{  \hat{v}_{t+1, p}^2 }.
  \]
  Since $|\xi_{q, t+1}| \vee |\xi_{p, t+1}| \le \eta^2 \hat{M}_Z^2$ and 
  $|Z_{t+1,p}| \vee |Z_{t+1,q}| \le \hat{M}_Z$, we have 
  \[
    |\Term_3|
    \lesssim_\phi 
      \eta^2 d \hat{M}_Z^2 .
  \]
  Combining the above bounds, we get 
  \[
    \rho_{t+1, q/p}
    \le \rho_{t, q/p} \left( 1 + 80 C_\phi^4 \eta^2 \right)
      + H_{t+1} + O_\phi\left( \eta^2 d \hat{M}_Z^2 \right) 
    = \rho_{t, q/p} + H_{t+1} + O_\phi\left( \eta^2 d \hat{M}_Z^2 \right)  .
  \]
\end{proof}

\begin{lemma}[Preservation of the gap]
  \label{lemma: preservation of the gap}
  Consider $\delta_c \in (0, 1)$, $p = \argmax_{i \in [P]} v_i^2(0)$ and any $q \in [P]$. 
  Suppose that 
  \[
    \eta 
    \lesssim_\phi
      \frac{\delta_c^2}{d P \log d}
      \left( \frac{P}{\hat{M}_Z^2 } \wedge \frac{P}{M_Z^2 \log^{\theta+1}(T/\delta_{\P}) } \right)
    = \tilde{\Theta}_\phi\left( \frac{\delta_c}{d P} \right).
  \]
  Then, we have 
  \[
    \sup_{t \le T} \left( \frac{v_{t, q}^2}{v_{t, p}^2} - \frac{v_{0, q}^2}{v_{0, p}^2}  \right)
    \le \delta_c 
    \quad\text{with probability at least $1 - \delta_{\P}$.}
  \]
\end{lemma}
\begin{proof}
  By Lemma~\ref{lemma: dynamics of vq2/vp2}, we have 
  \[
    \frac{v_{t+1, q}^2}{v_{t+1, p}^2}
    \le \frac{v_{t, q}^2}{v_{t, p}^2}
      + H_{t+1} 
      + \xi_{t+1}, 
  \]
  where $(H_{t+1})_t$ is a martingale difference sequence that is conditionally
  $\left( O_\phi( \eta^2 d M_Z^2 ), \theta \right)$-subweibull, 
  and $(\xi_t)_t$ is an adapted process that is uniformly bounded by $O_\phi(\eta^2 d \hat{M}_Z^2)$ with probability
  at least $1 - \delta_{\P}$. Hence, by Lemma~\ref{lemma: stochastic gronwall when alpha = 0}, we have 
  \begin{align*}
    \sup_{t \le T} \left( \frac{v_{t, q}^2}{v_{t, p}^2} - \frac{v_{0, q}^2}{v_{0, p}^2}  \right)
    &\lesssim_\phi T \eta^2 d \hat{M}_Z^2 
      + \sqrt{ \eta^2 d M_Z^2 T \log^{\theta+1}(T/\delta_{\P}) } \\
    &\lesssim_\phi 
      \eta d \hat{M}_Z^2 \log d
      + \sqrt{ \eta d M_Z^2 \log^{\theta+1}(T/\delta_{\P}) \log d }.
  \end{align*}
  For the RHS to be bounded by $\delta_c \in (0, 1)$, it suffices to require
  \begin{align*}
    \eta d \hat{M}_Z^2 \log d
    \lesssim_\phi \delta_c 
    &\quad\Leftarrow\quad 
    \eta \lesssim_\phi \frac{\delta_c}{d \hat{M}_Z^2 \log d}, \\
    \sqrt{ \eta d M_Z^2 \log^{\theta+1}(T/\delta_{\P}) \log d }.
    \lesssim_\phi \delta_c 
    &\quad\Leftarrow\quad 
    \eta \lesssim_\phi \frac{\delta_c^2}{d M_Z^2 \log^{\theta+1}(T/\delta_{\P}) \log d}. 
  \end{align*}
\end{proof}

\subsubsection{Other induction hypotheses}

In this subsection, we verify the induction hypothesis: $v_p^2 \lesssim \log^2 d / P$ for all $p \in [P]$. This condition is used 
to ensure the influence of the higher-order term is small compared to the influence of the second-order terms.

\begin{lemma}[Upper bound on $v_p^2$]
  \label{lemma: upper bound on vp2}
  Suppose that 
  \[
    \eta 
    \lesssim_\phi \frac{\log d}{d P} \left(
        \frac{P}{\hat{M}_Z^2} \wedge \frac{P}{M_Z^2  \log^{\theta+1}(d / \delta_{\P}) }
      \right).
  \]
  Then, throughout Stage~1, we have $v_p^2 \lesssim \log^2 d / P$.
\end{lemma}
\begin{proof}
  First, by Lemma~\ref{lemma: dynamics of vk2}, for any $p \le P$, we have 
  \begin{align*}
    \hat{v}_{t+1, p}^2
    &\le v_{t, p}^2 
      + 2 \eta \left(
        2 \hat\phi_2^2 + \sum_{l\ge L} l \hat\phi_l^2 v_k^{l-2}  
      \right) v_{t, p}^2 
      + 2 \eta v_{t, p} Z_{t+1, p}
      \\
      &\qquad
      + \underbrace{
        \eta^2 \gamma_{t, p}^2 v_{t, p}^2
        + \eta^2 Z_{t+1, p}^2 
        + 2 \eta^2 \gamma_{t, p}^2 v_{t, p} Z_{t+1, p} 
      }_{=: \xi_{t+1}} \\
    &\le 
      v_{t, p}^2 
      + 4 \hat\phi_2^2 \eta \bigg(
        1 
        + \underbrace{  \frac{C_\phi^2}{2 \hat\phi_2^2}  \frac{\log^{L-2}d}{P^{L/2-1}}  }_{=:\; \delta_v}
      \bigg) v_{t, p}^2 
      + 2 \eta v_{t, p} Z_{t+1, p}
      + \xi_{t+1}
  \end{align*}
  where $\gamma_{p, t} := 2 \hat\phi_2^2 + L \hat\phi_L^2 v_k^{L-2} + \sum_{l>L} l \hat\phi_l^2 v_k^{l-2}  - \rho$ is a 
  $\cF_t$-measurable random variable with $|\gamma_{t, p}| \le 2 C_\phi^2$. 
  By \eqref{eq: Z high probability bound}, with probability at least $1 - \delta_{\P}/T$, we have 
  \[
    |\xi_{t+1}|  
    \lesssim_\phi \eta^2 v_{t, p}^2
      + \eta^2 \hat{M}_Z^2
      + \eta^2 |v_{t, p}| \hat{M}_Z
    \lesssim_\phi 
      \eta^2 \hat{M}_Z^2 
  \]
  We maintain the induction hypothesis $v_{t, p}^2 \le 2 (1 + 4 \hat\phi_2^2 \eta (1 + \delta_v) )^t \log^2 d / d$.
  Under this induction hypothesis, we have 
  \[
    \left| 2 \eta v_{t, p} Z_{t+1, p} \right|
    \lesssim \eta \sqrt{ (1 + 4 \bar\phi_2^2 \eta (1 + \delta_v) ) \log^2 d / d } |Z_{t+1, p}|, 
  \]
  and therefore, is $\left( O_\phi\left( \eta^2 (1 + 4 \bar\phi_2^2 \eta (1 + \delta_v) )^t v_{0, p}^2 M_Z^2, \theta \right) \right)$-subweibull.
  Using the language of Lemma~\ref{lemma: stochastic discrete gronwall}, we have 
  \[
    \Xi \lesssim_\phi \eta^2 \hat{M}_Z^2 \quad\text{and}\quad 
    \sigma_Z^2 \lesssim_\phi \eta^2 (1 + 4 \bar\phi_2^2 \eta (1 + \delta_v) )^t \frac{\log^2 d}{d} M_Z^2.
  \]
  Therefore, as long as 
  \begin{align*}
    \eta^2 \hat{M}_Z^2 \lesssim_\phi \frac{\log^2 d / d}{T}
    &\quad\Leftarrow\quad
    \eta \lesssim_\phi \frac{\log d}{d \hat{M}_Z^2}, \\
    \eta^2 \frac{\log^2 d}{d} M_Z^2 
    \lesssim_\phi \frac{x_0^2}{T \log^{\theta+1}(T / \delta_{\P}) }
    &\quad\Leftarrow\quad
    \eta \lesssim_\phi \frac{\log d}{ d M_Z^2  \log^{\theta+1}(T / \delta_{\P}) },
  \end{align*}
  we have $v_{t, p}^2 \le 2 (1 + 4 \hat\phi_2^2 \eta (1 + \delta_v) )^t \log^2 d / d$ throughout Stage~1. In 
  particular, by Lemma~\ref{lemma: learning the subspace}, this implies
  \[
    v_{t, p}^2 
    \lesssim \exp^{1+\delta_v}\left( 4 \hat\phi_2^2 \eta T \right)  \frac{\log^2 d}{d}
    \lesssim \frac{\log^2 d}{P}.
  \]
\end{proof}

\subsection{Stage 1.2: recovery of the directions}

Let $\v$ be an arbitrary first-layer neuron. Assume w.l.o.g.~that $v_1^2$ is the largest at initialization and 
$v_{0, 1}^2 / \max_{2 \le k \le P} v_{0, k}^2 \ge 1 + \delta_0$. By Lemma~\ref{lemma: main lemma of stage 1.1}, we know 
this gap can be approximately preserved in the sense that $v_{0, 1}^2 / \max_{2 \le k \le P} v_{0, k}^2 \ge 1 + \delta_0/2$ 
holds. For notational simplicity, we will drop the factor $1/2$ in the sequel. 
Moreover, since we use symmetric initialization, we can further assume that $v_1 > 0$. 
In this subsection, we show that $v_1^2$ will grow from $\Omega(1/P)$ to $3/4$ and then to close to $1$. 
Formally, we prove the following lemma. 

\begin{lemma}[Stage 1.2]
  \label{lemma: main lemma of stage 1.2}
  Let $\v \in \S^{d-1}$ be an arbitrary first-layer neuron satisfying $v_{T_1, 1}^2 \ge c / P$ and 
  $v_{T_1, 1}^2 / \max_{2 \le k \le P} v_{T_1, k}^2 \ge 1 + c$ for some small universal constant $c > 0$.
  Let $\delta_{\P} \in (0, 1)$ and $\eps_v > 0$ be given. Suppose that we choose 
  \[
    \eta 
    \lesssim_\phi 
    \frac{\delta_0}{d P^{L/2}} \left(
      \frac{P}{\hat{M}_z^2 } \wedge \frac{d}{M_Z^2} \frac{1}{ \log^{\theta+1}( d / \delta_{\P} ) }
    \right)
    \wedge 
    \frac{\eps_*}{d P}
      \left(
        \frac{P}{\hat{M}_Z^2  \log(1/\eps_*)}
        \wedge
        \frac{\eps_* d P}{ M_Z^2 \log d \log^{\theta+1}(d / \delta_{\P}) }
      \right)
  \]
  Then, with probability at least $1 - O(\delta_{\P})$, we have $v_1^2 \ge 1 - \eps_v$ within
  $O_\phi\left( \left( P^{L/2-1} + \log(1/\eps_v) \right) / \eta \right)$ iterations.
\end{lemma}
\begin{proof}
  It suffices to combine Lemma~\ref{lemma: stage 1.2: weak recovery} and Lemma~\ref{lemma: stage 1.2: strong recovery}.
\end{proof}

\begin{lemma}[Dynamics of $v_1^2$]
  \label{lemma: dynamics of v12}
  We have 
  \[
    v_{t+1, 1}^2 
    \ge v_{t, 1}^2 
      \left(
        1
        + 2 \eta \sum_{l\ge L} l \hat\phi_l^2 v_1^{l-2}
        - 2 \eta \sum_{l\ge L} l \hat\phi_l^2 \norm{\v_{t, \le P}}_l^l
      \right)
      + H_{t+1}
      + \tilde{\xi}_{t+1},
  \]
  where $H_{t+1}$ is a martingale difference term that is conditionally $(O_\phi(\eta^2 v_{t, 1}^2 M_Z^2), \theta)$-subweibull
  and $\xi_{t+1}$ is bounded by $O_\phi(\eta^2 d \hat{M}_Z^2 v_{t, 1}^2)$ uniformly over $t \in [T]$ 
  with probability at least $1 - \delta_{\P}$.
\end{lemma}
\begin{proof}
  Recall from Lemma~\ref{lemma: dynamics of vk2} that 
  \[
    \hat{v}_{t+1, k}^2
    = v_{t, k}^2 
      + 2 \eta \gamma_{t, k} v_{t, k}^2 
      + 2 \eta v_{t, k} Z_{t+1, k}
      + \xi_{t+1, k},
  \]
  where $\gamma_{k, t} := 
  \indi\{k \le P\} \left( 2 \hat\phi_2^2 + L \hat\phi_L^2 v_k^{L-2} + \sum_{l>L} l \hat\phi_l^2 v_k^{l-2} \right) - \rho$ 
  is a $\cF_t$-measurable random variable with $|\gamma_{t, k}| \le 2 C_\phi^2$ and $(\xi_{t+1})_{t \in [T]}$ is 
  (uniformly) bounded by $O_\phi( \eta^2 \hat{M}_Z^2 )$ with probability at least $1 - \delta_{\P}$. 
  Sum over $k \in [d]$ and we get 
  \begin{align*}
    \norm{ \hat{\v}_{t+1} }^2 
    &= 1 + 2 \eta \sum_{k=1}^{d} \left(
        \indi\{k \le P\} \left( 2 \hat\phi_2^2 + L \hat\phi_L^2 v_k^{L-2} + \sum_{l>L} l \hat\phi_l^2 v_k^{l-2} \right) - \rho
      \right) v_{t, k}^2 
      + 2 \eta \inprod{\v_t}{\Z_{t+1}} 
      + \xi_{t+1}' \\
    &= 1 + 2 \eta 
      \left(
        2 \hat\phi_2^2 \norm{\v_{t, \le P}}^2
        + \sum_{l\ge L} l \hat\phi_l^2 \norm{\v_{t, \le P}}_l^l
        - \rho \norm{\v_t}^2
      \right) 
      + 2 \eta \inprod{\v_t}{\Z_{t+1}} 
      + \xi_{t+1}' \\
    &\le 
      \underbrace{ 
        1 + 2 \eta \left( 2 \hat\phi_2^2  - \rho \right) 
        + 2 \eta \sum_{l\ge L} l \hat\phi_l^2 \norm{\v_{t, \le P}}_l^l
      }_{=: N_v^2}
      + 2 \eta \inprod{\v_t}{\Z_{t+1}} 
      + \xi_{t+1}', 
  \end{align*}
  where $\xi_{t+1}$ is bounded by $O_\phi(\eta^2 d \hat{M}_Z^2)$. Recall from \eqref{eq: Z high probability bound}
  that $|\inprod{\v_t}{\Z_{t+1}}| \lesssim_\phi \hat{M}_Z$ and choose $\eta \le (d \hat{M}_Z^2)\inv$. As a result, 
  \begin{align*}
    \frac{1}{\norm{\hat\v_{t+1}}^2}
    &\ge \frac{1}{N_v^2}
      \left(
        1 
        - \frac{
          2 \eta \inprod{\v_t}{\Z_{t+1}} + \xi_{t+1}'
        }{N_v^2}
        \left(
          1 
          - \frac{2 \eta \inprod{\v_t}{\Z_{t+1}} + \xi_{t+1}'}{ 
            N_v^2
            + 2 \eta \inprod{\v_t}{\Z_{t+1}} + \xi_{t+1}'
          }
        \right)
      \right) \\
    &\ge 
      \frac{1}{N_v^2}
      - \frac{1}{N_v^2}
        \frac{
          2 \eta \inprod{\v_t}{\Z_{t+1}} 
        }{N_v^2}
      \pm  O_\phi(\eta^2 d \hat{M}_Z^2).
  \end{align*}
  Meanwhile, we have 
  \[
    \hat{v}_{t+1, 1}^2 
    = v_{t, 1}^2 
      + 2 \eta \left( 2 \hat\phi_2^2 - \rho \right) v_{t, 1}^2 
      + 2 \eta \sum_{l\ge L} l \hat\phi_l^2 v_1^l 
      + 2 \eta v_{t, 1} Z_{t+1, 1}
      + \xi_{t+1, 1}, 
  \]
  where $|\xi_{t+1, 1}| \lesssim_\phi \eta^2 \hat{M}_Z^2$. 
  Therefore, 
  \begin{align*}
    v_{t+1,1}^2 
    &\ge \hat{v}_{t+1, 1}^2 \left(
        \frac{1}{N_v^2}
        - \frac{1}{N_v^2}
          \frac{
            2 \eta \inprod{\v_t}{\Z_{t+1}} 
          }{N_v^2}
        \pm  O_\phi(\eta^2 d \hat{M}_Z^2)
      \right) \\
    &= \frac{\hat{v}_{t+1, 1}^2}{N_v^2}
        - \frac{\hat{v}_{t+1, 1}^2}{N_v^2}
          \frac{ 2 \eta \inprod{\v_t}{\Z_{t+1}} }{N_v^2}
        \pm  O_\phi( \eta^2 d \hat{M}_Z^2 v_{t, 1}^2) \\
    &= \frac{
        v_{t, 1}^2 
        + 2 \eta \left( 2 \hat\phi_2^2 - \rho \right) v_{t, 1}^2 
        + 2 \eta \sum_{l\ge L} l \hat\phi_l^2 v_1^l 
      }{N_v^2} \\
      &\qquad
      + \frac{ 2 \eta v_{t, 1} Z_{t+1, 1} }{N_v^2} 
      - \frac{
        v_{t, 1}^2 
        + 2 \eta \left( 2 \hat\phi_2^2 - \rho \right) v_{t, 1}^2 
        + 2 \eta \sum_{l\ge L} l \hat\phi_l^2 v_1^l 
      }{N_v^2}
      \frac{ 2 \eta \inprod{\v_t}{\Z_{t+1}} }{N_v^2} \\
      &\qquad
      \pm O_\phi\left( \eta^2 d \hat{M}_Z^2 v_{t, 1}^2 \right) \\
    &=: \Term_1\left( v_{t+1,1}^2 \right) + \Term_2\left( v_{t+1,1}^2 \right) 
      \pm O_\phi\left( \eta^2 d \hat{M}_Z^2 v_{t, 1}^2 \right), 
  \end{align*}
  where we have used the fact that $v_{t, 1}^2 \gtrsim 1/P$ to merge error terms of form $\eta^2 \hat{M}_Z^2$ 
  into $O_\phi\left( \eta^2 d \hat{M}_Z^2 v_{t, 1}^2 \right)$. Meanwhile, for $\Term_2$, we have 
  $|\Term_2| \lesssim_\phi \left| \eta v_{t, 1} Z_{t+1, 1} \right| + \left| \eta v_{t, 1}^2 \inprod{\v_t}{\Z_{t+1}} \right|$.
  Therefore, it is a $(O_\phi( \eta^2 v_{t, 1}^2 M_Z^2 ), \theta)$-subweibull. 
  For the signal term $\Term_1$, by \eqref{eq: 1/(1 + z) approx 1 - z}, we have 
  \begin{align*}
    \Term_1 
    &= \frac{
        v_{t, 1}^2 
        + 2 \eta \left( 2 \hat\phi_2^2 - \rho \right) v_{t, 1}^2 
        + 2 \eta \sum_{l\ge L} l \hat\phi_l^2 v_1^l 
      }{
        1 + 2 \eta \left( 2 \hat\phi_2^2  - \rho \right) 
        + 2 \eta \sum_{l\ge L} l \hat\phi_l^2 \norm{\v_{t, \le P}}_l^l
      } \\
    &= v_{t, 1}^2 
      \left(
        1
        + 2 \eta \left( 2 \hat\phi_2^2 - \rho \right) 
        + 2 \eta \sum_{l\ge L} l \hat\phi_l^2 v_1^{l-2}
      \right)
      \left(
        1 
        - 2 \eta \left( 2 \hat\phi_2^2  - \rho \right) 
        - 2 \eta \sum_{l\ge L} l \hat\phi_l^2 \norm{\v_{t, \le P}}_l^l
        \pm O_\phi\left( \eta^2 \right)
      \right) \\
    &= v_{t, 1}^2 
      \left(
        1
        + 2 \eta \sum_{l\ge L} l \hat\phi_l^2 v_1^{l-2}
        - 2 \eta \sum_{l\ge L} l \hat\phi_l^2 \norm{\v_{t, \le P}}_l^l
        \pm O_\phi\left( \eta^2 \right)
      \right).
  \end{align*}
  Combine the above estimations, set $H_{t+1} = \Term_2$, and we complete the proof. 
\end{proof}

\begin{lemma}[Weak reocovery of directions]
  \label{lemma: stage 1.2: weak recovery}
  Suppose that we choose 
  \[
    \eta 
    \lesssim_\phi 
    \frac{\delta_0}{d P^{L/2}} \left(
      \frac{P}{\hat{M}_z^2 } \wedge \frac{d}{M_Z^2} \frac{1}{ \log^{\theta+1}( d / \delta_{\P} ) }
    \right).
  \]
  Then with probability at least $1 - O(\delta_{\P})$, we will have $v_1^2 \ge 3/4$ within the following number of 
  iterations:
  \[
    O_\phi\left( \frac{P^{L/2-1}}{\eta} \right)
    = O_\phi\left( 
      P d P^{L-2}
      \left(
        \frac{P}{\hat{M}_z^2 } \wedge \frac{d}{M_Z^2} \frac{1}{ \log^{\theta+1}( d / \delta_{\P} ) }
      \right)\inv
      \right).
  \]
\end{lemma}
\begin{remark}
  Note that when $\hat{M}_Z^2, M_Z^2 = \tilde{O}_\phi(P)$, then the above is roughly $P \times (d P^{L-2})$.
  The $d P^{L-2}$ is the usual bound for online SGD when the noise has order $d$ instead of $P$. The first 
  $P$ comes from the fact that there are $P$ directions.
\end{remark}
\begin{proof}
  By Lemma~\ref{lemma: dynamics of v12}, we have 
  \[
    v_{t+1, 1}^2 
    \ge v_{t, 1}^2 
      \left(
        1
        + 2 \eta \sum_{l\ge L} l \hat\phi_l^2 v_1^{l-2}
        - 2 \eta \sum_{l\ge L} l \hat\phi_l^2 \norm{\v_{t, \le P}}_l^l
      \right)
      + H_{t+1}
      + \tilde{\xi}_{t+1},
  \]
  where $H_{t+1}$ is a martingale difference term that is conditionally $(O_\phi(\eta^2 M_Z^2 v_{t, 1}^2), \theta)$-subweibull,
  and $\xi_{t+1}$ is bounded by $O_\phi(\eta^2 d \hat{M}_Z^2 v_{t, 1}^2)$ for all $t \in [T]$ with probability at least $1 - \delta_{\P}$.
  For the signal term, we write 
  \[
    v_1^{l-2} - \norm{\v}_l^l
    = v_1^{l-2} - v_1^l - \sum_{k=2}^P v_k^l  
    = v_1^{l-2} (1 - v_1^2) 
      - \left( \norm{\v_{\le P}}^2 - v_1^2 \right) 
        \sum_{k=2}^{P} \frac{ v_k^2 }{ \norm{\v_{\le P}}^2 - v_1^2 } v_k^{l-2}. 
  \]
  Note that the last term is a weighted average of $v_k^{l-2}$. 
  Similar to the proof in Section~\ref{subsubsec: stage 1.1: preservation of the gap}, one can show that the induction
  hypothesis $v_1^2 / \max_{2 \le k \le P} v_k^2 \ge 1 + \delta_0 / 2$ remains true,\footnote{
    The only difference is that now the $L$-th order terms cannot be simply ignored as we no longer have 
    the induction hypothesis $v_p^2 \le \log^2 d / P$. To handle them, it suffices to note that if $v_1^2 \ge v_q^2$,
    then those $L$-th order terms of $v_1^2$ are also larger, which will even lead to an amplification of the gap. 
    In fact, this is why we can recover the directions using them.
  }
  which gives 
  \[
    \sum_{k=2}^{P} \frac{ v_k^2 }{ \norm{\v_{\le P}}^2 - v_1^2 } v_k^{l-2}
    \le \left( \max_{2 \le k \le P} v_k^2 \right)^{L/2-1}
    \le \left( \frac{v_1^2}{1 + \delta_0/2} \right)^{L/2-1}
    \le \frac{v_1^{L-2}}{1 + \delta_0/2}. 
  \]
  Therefore, 
  \begin{align*}
    v_1^{l-2} - \norm{\v}_l^l
    &\ge v_1^{l-2} (1 - v_1^2) 
      - \left( \norm{\v_{\le P}}^2 - v_1^2 \right) \frac{v_1^{L-2}}{1 + \delta_0/2} \\
    &= \frac{v_1^{l-2}}{1 + \delta_0/2} \left(
        1 
        + \delta_0 (1 - v_1^2) 
        - \norm{\v_{\le P}}^2 
      \right) 
    \ge \frac{\delta_0}{2} v_1^{l-2} \left( 1 - v_1^2 \right). 
  \end{align*}
  As a result, for the signal term, we have 
  \[
    v_1^2
    \left(
      1
      + 2 \eta \sum_{l\ge L} l \hat\phi_l^2 v_1^{l-2}    
      - 2 \eta \sum_{l\ge L} l \hat\phi_l^2 \norm{\v_{\le P}}_l^l
    \right) 
    \ge 
      v_1^2
      + L \hat\phi_L^2 \delta_0 \left( 1 - v_1^2 \right) \eta v_1^L.  
  \]
  In particular, when $v_1^2 \le 3/4$, we have 
  \[
    v_{t+1, 1}^2 
    \ge v_1^2
      + \frac{L \hat\phi_L^2}{4} \delta_0  \eta v_1^L
      + H_{t+1}
      + \xi_{t+1}. 
  \]
  Thus, using the notations of Lemma~\ref{lemma: appendix: stochastic gronwall (polynomial)}, we have 
  \[
    \alpha = \frac{L \hat\phi_L^2}{4} \delta_0  \eta, \quad 
    \Xi \lesssim_\phi \eta^2 d \hat{M}_z^2, \quad 
    \sigma_Z^2 \lesssim_\phi \eta^2 M_Z^2, \quad 
    p = L/2, \quad 
    x_0 = \Omega(1/P).
  \]
  To meet the conditions of Lemma~\ref{lemma: appendix: stochastic gronwall (polynomial)}, it suffices to choose 
  \begin{align*}
    \alpha \lesssim x_0^{p-1} 
    &\quad\Leftarrow\quad 
    \alpha \lesssim \delta_0\inv x_0^{p-1} , \\
    \Xi \lesssim \alpha x_0^{p-1} 
    &\quad\Leftarrow\quad  
    \eta \lesssim_\phi \frac{\delta_0}{d \hat{M}_z^2 P^{L/2-1}}, \\
    \sigma_Z^2 \lesssim_\theta \frac{ \alpha x_0^p }{ \log^{\theta+1}\left( \log(1/x_0) / ( \alpha x_0^{p-1} \delta_{\P} ) \right)}
    &\quad\Leftarrow\quad 
    \eta 
    \lesssim_\phi \frac{\delta_0}{M_Z^2 P^{L/2}} \frac{1}{ \log^{\theta+1}( d / \delta_{\P} ) }.
  \end{align*}
  Combine the above and we get the condition
  \[
    \eta 
    \lesssim_\phi 
    \frac{\delta_0}{d P^{L/2}} \left(
      \frac{P}{\hat{M}_z^2 } \wedge \frac{d}{M_Z^2} \frac{1}{ \log^{\theta+1}( d / \delta_{\P} ) }
    \right).
  \]
  Finally, we apply Lemma~\ref{lemma: appendix: stochastic gronwall (polynomial)} to complete the proof. 
\end{proof}

\begin{lemma}[Strong recovery of directions]
  \label{lemma: stage 1.2: strong recovery}
  Let $\v \in \S^{d-1}$ be an arbitrary first-layer neuron. Let $\delta_{\P}$ and $\eps_*$ be given. Suppose that 
  we choose 
  \[
    \eta 
    \lesssim_\phi 
      \frac{\eps_*}{d P}
      \left(
        \frac{P}{\hat{M}_Z^2  \log(1/\eps_*)}
        \wedge
        \frac{\eps_* d P}{ M_Z^2 \log d \log^{\theta+1}(d / \delta_{\P}) }
      \right).
  \]
  Then, with probability at least $1 - O(\delta_{\P})$, we have $v_1^2 \ge 1 - \eps_*$ within $O_\phi(\log(1/\eps_*)/\eta)$
  iterations.
\end{lemma}
\begin{proof}
  Again, By Lemma~\ref{lemma: dynamics of v12}, we have 
  \[
    v_{t+1, 1}^2 
    \ge v_{t, 1}^2 
      \left(
        1
        + 2 \eta \sum_{l\ge L} l \hat\phi_l^2 v_1^{l-2}
        - 2 \eta \sum_{l\ge L} l \hat\phi_l^2 \norm{\v_{t, \le P}}_l^l
      \right)
      + H_{t+1}
      + \tilde{\xi}_{t+1},
  \]
  where $H_{t+1}$ is a martingale difference term that is conditionally $(O_\phi(\eta^2 M_Z^2 v_{t, 1}^2), \theta)$-subweibull,
  and $\xi_{t+1}$ is bounded by $O_\phi( \eta^2 d \hat{M}_Z^2 v_{t, 1}^2)$ for all $t \in [T]$ with probability at least $1 - \delta_{\P}$.
  Meanwhile, by the proof of the previous lemma, we have 
  \begin{align*}
    v_1^2 \left(
        1
        + 2 \eta \sum_{l\ge L} l \hat\phi_l^2 v_1^{l-2}    
        - 2 \eta \sum_{l\ge L} l \hat\phi_l^2 \norm{\v_{\le P}}_l^l
      \right) 
    &\ge v_1^2
      + 2 L \hat\phi_L^2 \frac{c_{g, L}}{1 + c_{g, L}}  \left( 1 - v_1^2 \right) \eta v_1^L \\
    &\ge v_1^2
      + \eta  \hat\phi_L^2 \frac{2 L c_{g, L}}{1 + c_{g, L}} \left( \frac{3}{4} \right)^L  
      \left( 1 - v_1^2 \right) \\
    &=: v_1^2 + \eta c_{g, \phi} \left( 1 - v_1^2 \right), 
  \end{align*}
  for some constant $c_{g, \phi} > 0$ that depends only on $c_{g, L}$ and $\phi$. Thus, 
  \[
    1 - v_{t+1, 1}^2 
    \ge \left( 1 - v_1^2 \right) - \eta c_{g, L, \phi} \left( 1 - v_1^2 \right)
      - H_{t+1}
      - \xi_{t+1}. 
  \]
  In the language of Lemma~\ref{lemma: stochastic discrete gronwall},\footnote{When $\alpha$ is negative, it 
  suffices to replace $x_0$ with our target $\eps_*$.} we have 
  \[
    \alpha = - \eta c_{g, L, \phi}, \quad  
    \eta T \lesssim_\phi \log(1/\eps_*), \quad 
    \sigma_Z^2 \lesssim_\phi \eta^2 M_Z^2, \quad 
    \Xi \lesssim_\phi \eta^2 d \hat{M}_Z^2. 
  \]
  To meet the conditions of Lemma~\ref{lemma: stochastic discrete gronwall}, it suffices to choose 
  \begin{align*}
    \Xi \lesssim \frac{\eps_*}{T} 
    &\quad\Leftarrow\quad
    \eta \lesssim_\phi \frac{\eps_*}{d \hat{M}_Z^2  \log(1/\eps_*)}, \\
    \sigma_Z^2  
    \lesssim \frac{x_0^2}{T \log^{\theta+1}(T / \delta_{\P}) }
    &\quad\Leftarrow\quad
    \eta 
    \lesssim \frac{\eps_*^2}{ M_Z^2 \log d \log^{\theta+1}(d / \delta_{\P}) }.
  \end{align*}
  Under the above conditions, by Lemma~\ref{lemma: stochastic discrete gronwall}, we have $v_1^2 \ge 1 - \eps_*$ within 
  $T = O_\phi( \log(1/\eps_*) / \eta )$ iterations with probability at least $1 - O(\delta_{\P})$. 
\end{proof}

\subsection{Deferred proofs in this section}
\label{subsec: stage 1: deferred proofs}

\begin{proof}[Proof of Lemma~\ref{lemma: dynamics of vk2}]
  First, recall from \eqref{eq: hat vt+1 k = ...} that 
  \[
    \hat{v}_{t+1, k}
    = v_{t, k}
      + \eta \indi\{k \le P\}  \left( 
        2 \hat\phi_2^2 
        + L \hat\phi_L^2 v_k^{L-2} 
        + \sum_{l>L} l \hat\phi_l^2 v_k^{l-2}  
      \right) v_k 
      - \eta \rho v_k  
      + \eta Z_{t+1, k}.
  \]
  Therefore, 
  \begin{align*}
    \hat{v}_{t+1, k}^2
    &= v_{t, k}^2
      + 2 \eta v_{t, k} \left( 
        \indi\{k \le P\}  \left( 
          2 \hat\phi_2^2 
          + L \hat\phi_L^2 v_k^{L-2} 
          + \sum_{l>L} l \hat\phi_l^2 v_k^{l-2}  
        \right) v_k 
        - \rho v_k  
        + Z_{t+1, k}
      \right) \\
      &\qquad
      + \eta^2 \left( 
        \indi\{k \le P\}  \left( 
          2 \hat\phi_2^2 
          + L \hat\phi_L^2 v_k^{L-2} 
          + \sum_{l>L} l \hat\phi_l^2 v_k^{l-2}  
        \right) v_k 
        - \rho v_k  
        + Z_{t+1, k}
      \right)^2 \\
    &=: v_{t, k}^2 + \Term_1\left( \hat{v}_{t+1, k}^2 \right)
      + \Term_2\left( \hat{v}_{t+1, k}^2 \right).
  \end{align*}
  For the first term, we rewrite it as 
  \[
    \Term_1 
    = 2 \eta v_{t, k}^2 \left( 
      \indi\{k \le P\}  \left( 
        2 \hat\phi_2^2 
        + L \hat\phi_L^2 v_k^{L-2} 
        + \sum_{l>L} l \hat\phi_l^2 v_k^{l-2}  
      \right) 
      - \rho 
    \right)
    + 2 \eta v_{t, k} Z_{t+1, k}.
  \]
  Consider the second term. For notation simplicity, put 
  \[
    \gamma_{k, t} 
    := \indi\{k \le P\}  \left( 2 \hat\phi_2^2 + L \hat\phi_L^2 v_k^{L-2} + \sum_{l>L} l \hat\phi_l^2 v_k^{l-2} \right) 
        - \rho.
  \]
  Note that $\gamma_{k, t}$ is $\cF_t$-measurable and by Assumption~\ref{assumption: link function}, we have 
  \begin{gather*}
    2 \hat\phi_2^2 + L \hat\phi_L^2 v_k^{L-2} + \sum_{l>L} l \hat\phi_l^2 v_k^{l-2}  
    \le 2 \hat\phi_2^2 + L \hat\phi_L^2 + \sum_{l>L} l \hat\phi_l^2 
    \le C_\phi^2, \\
    \rho 
    := 2 \hat\phi_2^2 \norm{\v_{\le P}}^2 
      + L  \hat\phi_L^2 \norm{\v_{\le P}}_L^L 
      + \sum_{l>L} l \hat\phi_l^2 \norm{\v_{\le P}}_l^l
    \le 2 \hat\phi_2^2 
      + L  \hat\phi_L^2 
      + \sum_{l>L} l \hat\phi_l^2  
    \le C_\phi^2, 
  \end{gather*}
  and therefore $|\zeta_{k, t}| \le 2 C_\phi^2$. Then, we compute
  \[
    \frac{\Term_2}{\eta^2}
    = \left( 
        \gamma_{t, k} v_k  
        + Z_{t+1, k}
      \right)^2 
    = \gamma_{t, k}^2 v_{t, k}^2
      + Z_{t+1, k}^2 
      + 2 \gamma_{t, k}^2 v_{t, k} Z_{t+1, k}. 
  \]
  Combine the above two bounds and we get 
  \begin{align*}
    \hat{v}_{t+1, k}^2
    &= v_{t, k}^2 
      + 2 \eta \gamma_{t, k} v_{t, k}^2 
      + 2 \eta v_{t, k} Z_{t+1, k}
      + \eta^2 \gamma_{t, k}^2 v_{t, k}^2
      + \eta^2 Z_{t+1, k}^2 
      + 2 \eta^2 \gamma_{t, k}^2 v_{t, k} Z_{t+1, k}.
  \end{align*}
  Now, consider the last three terms. By \eqref{eq: Z high probability bound}, we have 
  $|Z_{t+1, k}| \lesssim_\phi \hat{M}_Z$ with probability at least $1 - \delta_{\P}$ for all $t \in [T]$. Thus, 
  \[
    \left|
      \eta^2 \gamma_{t, k}^2 v_{t, k}^2
      + \eta^2 Z_{t+1, k}^2 
      + 2 \eta^2 \gamma_{t, k}^2 v_{t, k} Z_{t+1, k} 
    \right|
    \lesssim_\phi \eta^2 \hat{M}_Z^2.
  \]
\end{proof}

\section{Stage 2: training the second layer}
\label{sec: stage 2}

\begin{lemma}
  \label{lemma: stage 2: existence of good a}
  Suppose that for each $p \in [P]$, there exists a first-layer neuron $\v_{i_p}$ with 
  $v_{i_p, p} \ge \sqrt{1 - \eps_v}$ for some small positive $\eps_v = O(1/P)$, then we can choose $\a_* \in \R^m$ 
  with $\norm{\a_*} = \sqrt{P}$ such that 
  \[
    \Loss(\a_*, \V)
    := \E \left( f_*(\x) - f(\x; \a_*, \V)  \right)^2
    \le 20 C_\phi^2 P^2 \eps_v. 
  \]
\end{lemma}
\begin{proof}
  Choose one $\v_{i_p}$ for each $p \in [P]$. 
  Then, we set the $i_p$-th entries of $\a_*$ to be $1$ and all other entries $0$. Then, we write 
  \begin{align*}
    \left( f_*(\x) - f(\x; \a_*, \V)  \right)^2 
    &= \left( \sum_{k=1}^{P} \left( \phi(x_k) - \phi(\v_{i_k} \cdot \x) \right) \right)^2 \\
    &= \sum_{k, l=1}^{P} \left( \phi(x_k) - \phi(\v_{i_k} \cdot \x) \right) 
      \left( \phi(x_l) - \phi(\v_{i_l} \cdot \x) \right).  
  \end{align*}
  By expanding $\phi$ in the Hermite basis, for any $\v, \v' \in \S^{d-1}$, we have 
  \[
    \E_{\x \sim \Gaussian{0}{\Id}}[ \phi(\v \cdot \x) \phi(\v' \cdot \x) ]
    = \sum_{i=0}^{\infty} \hat\phi_i^2  \inprod{\v}{\v'}^i 
    = \sum_{i=2}^{\infty} \hat\phi_i^2  \inprod{\v}{\v'}^i. 
  \]
  Hence, for $k = l$, we have 
  \begin{align*}
    \E \left( \phi(x_k) - \phi(\v_{i_k} \cdot \x) \right)^2
    &= \E \phi^2(x_k) + \E \phi^2(\v_{i_k} \cdot \x) 
      - 2 \E \phi(x_k) \phi(\v_{i_k} \cdot \x) \\
    &= 2 \sum_{i=2}^{\infty} \hat\phi_i^2 \left( 1 - \inprod{\e_k}{\v_{i_k}}^i \right) \\
    &\le 2 C_\phi^2 \eps_v. 
  \end{align*}
  Meanwhile, for $k \ne l$, we have 
  \begin{align*}
    & \E \left( \phi(x_k) - \phi(\v_{i_k} \cdot \x) \right) \left( \phi(x_l) - \phi(\v_{i_l} \cdot \x) \right) \\
    =\;& \E  \phi(x_k) \phi(x_l) 
      + \E  \phi(\v_{i_k} \cdot \x) \phi(\v_{i_l} \cdot \x) 
      - \E \phi(x_k) \phi(\v_{i_l} \cdot \x) 
      - \E \phi(\v_{i_k} \cdot \x) \phi(x_l) \\
    =\;& 
      \sum_{i=2}^{\infty} \hat\phi_i^2 \left(
        \inprod{\v_{i_k}}{\v_{i_l}}^i 
        - v_{i_l, k}^i 
        - v_{i_k, l}^i 
      \right). 
  \end{align*}
  Note that $v_{i_l, k}^2 \vee v_{i_k, l}^2 \le \eps_v$ and 
  \[ 
    \inprod{\v_{i_k}}{\v_{i_l}}^2
    \le 2 v_{i_l, k}^2 + 2 \inprod{\v_{i_k} - \e_k}{\v_{i_l}}^2 
    \le 2 \eps_v + 2 \norm{\v_{i_k} - \e_k}^2 
    = 2 \eps_v + 4 \left( 1 - v_{i_k, k} \right) 
    \le 6 \eps_v. 
  \] 
  As a result, 
  \[
    \E \left( \phi(x_k) - \phi(\v_{i_k} \cdot \x) \right) \left( \phi(x_l) - \phi(\v_{i_l} \cdot \x) \right) \\
    \le 10 C_\phi^2 \eps_v. 
  \]
  Combining these two cases, we obtain 
  \[
    \Loss 
    = \E \left( f_*(\x) - f(\x; \a_*, \V)  \right)^2 
    \le 20 C_\phi^2 P^2 \eps_v. 
  \]
\end{proof}

Now, we are ready to prove the following generalization bound for Stage~2. The proof of it is adapted from 
Section~B.8 of \cite{oko_learning_2024}, which in turn is based on (\cite{damian_neural_2022,abbe_merged-staircase_2022,%
ba_high-dimensional_2022}). 

\begin{lemma}
  \label{lemma: stage 2: generalization error}
  Suppose that for each $p \in [P]$, there exists a first-layer neuron $\v_{i_p}$ with 
  $v_{i_p, p}^2 \ge 1 - \eps_v$ for some small positive $\eps_v = O(1/P)$. Then, there exists some $\lambda > 0$
  such that the ridge estimator $\hat{\a}$ we obtain in Stage~2 satisfies 
  \[
    \norm{f(\cdot; \hat{\a}, \V) - f_*}_{L^1(D)} 
    \le \frac{8 \norm{\a_*} \sqrt{m}}{\sqrt{N} \delta_{\P}} + \sqrt{ 10 L P^2 \eps_v }, 
  \]
  with probability at least $1 - 2 \delta_{\P}$. 
\end{lemma}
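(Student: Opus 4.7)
The Stage~2 problem is linear ridge regression with feature map $\Phi(\x) = (\phi(\v_k \cdot \x))_{k \in [m]}$, since $f(\x; \a, \V) = \inprod{\a}{\Phi(\x)}$. The plan is to combine the existence of a good reference parameter (Lemma~\ref{lemma: stage 2: existence of good a}) with a standard ridge-regression generalization argument, following the template of Section~B.8 of \cite{oko_learning_2024}. Concretely, Lemma~\ref{lemma: stage 2: existence of good a} supplies $\a_*$ with $\norm{\a_*} = \sqrt{P}$ and $\Loss(\a_*, \V) \le 10 L P^2 \eps_v$, which will play the role of the bias (approximation) term.

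First, I would choose the regularization strength $\lambda$ of order $\Loss(\a_*, \V)/\norm{\a_*}^2$ so that the penalty at $\a_*$ is comparable to its approximation error. Optimality of $\hat\a$ for the empirical regularized objective yields $\hat{\Loss}(\hat\a, \V) + \lambda \norm{\hat\a}^2 \le \hat{\Loss}(\a_*, \V) + \lambda \norm{\a_*}^2$ on the fresh batch, which, after taking expectation over the Stage~2 samples, keeps both the empirical training error and the norm $\norm{\hat\a} = O(\norm{\a_*}) = O(\sqrt{P})$ under control. With $\hat\a$ confined to such a ball, one can then bound the generalization gap $\E_\x \inprod{\hat\a - \a_*}{\Phi(\x)}^2$ by a matrix-concentration / in-expectation Rademacher argument on the empirical feature covariance, producing a population $L^2$ excess loss of order $\norm{\a_*}^2 m / N$. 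Markov's inequality converts this in-expectation bound to a high-probability one at the cost of a $1/\delta_{\P}$ factor, and Jensen's inequality $\norm{g}_{L^1} \le \norm{g}_{L^2} = \sqrt{2 \Loss}$ transfers the $L^2$ control to the claimed $L^1$ bound: the $\sqrt{10 L P^2 \eps_v}$ term comes from the bias $\Loss(\a_*, \V)$ and the $\norm{\a_*}\sqrt{m/N}/\delta_{\P}$ term from the variance.

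The main technical obstacle is the empirical-to-population gap for the quadratic-in-$\a$ objective. The feature vectors $\Phi(\x)$ are degree-$2L$ polynomials of Gaussians and hence unbounded with only sub-Weibull tails of exponent $1/(2L)$, so bounded-feature matrix-Bernstein or Rademacher bounds do not apply out of the box. I would handle this by truncating $\Phi(\x)$ at a polylogarithmic threshold using the tail estimates in Lemma~\ref{lemma: persample gradient} (with the truncation bias absorbed into the $\delta_{\P}$ budget) and then applying a bounded matrix-Bernstein inequality to the truncated features; alternatively, one can invoke Gaussian hypercontractivity for degree-$2L$ polynomials to bound the higher moments of $\norm{\Phi(\x)}$ directly. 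Either route yields polynomial dependence on $m$ and $P$ that, after the Markov step, matches the $8 \norm{\a_*}\sqrt{m}/(\sqrt{N} \delta_{\P})$ factor in the conclusion.
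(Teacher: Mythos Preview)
Your plan is workable, but it takes a detour the paper avoids. Two differences are worth noting. First, instead of fixing $\lambda$ a priori and arguing that $\norm{\hat\a} = O(\norm{\a_*})$ with high probability, the paper invokes the exact Lagrangian equivalence between ridge and norm-constrained regression: for \emph{some} $\lambda$ the ridge minimizer coincides with the minimizer over $\{\norm{\a} \le \norm{\a_*}\}$, so $\norm{\hat\a} \le \norm{\a_*}$ deterministically and the empirical loss at $\hat\a$ is no worse than at $\a_*$. Second, and more importantly, the paper bounds the generalization gap for the $L^1$ loss directly rather than going through $L^2$. After symmetrization and a contraction step (Theorem~7 of \cite{meir_generalization_2003} to remove the absolute value), one is left with the Rademacher complexity of the linear class $\{\a \mapsto \inprod{\a}{\Phi(\cdot)} : \norm{\a} \le \norm{\a_*}\}$, which is bounded by $\norm{\a_*}\sqrt{\E\norm{\Phi(\x)}^2}/\sqrt{N}$ using only the \emph{second moment} of the feature vector; since $\E\phi^2(x_1) = 2$ this is exactly $2\norm{\a_*}\sqrt{m}/\sqrt{N}$. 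The obstacle you flagged---unbounded sub-Weibull features forcing truncation or hypercontractivity---therefore never arises: it is an artifact of the $L^2$ route, where the empirical-to-population gap is governed by the feature covariance and hence by fourth (or higher) moments of $\Phi(\x)$. Your approach would ultimately close, but the paper's $L^1$ route is both shorter and sharper.
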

\begin{proof}
  For notational simplicity, let $D = \mcal{N}(0, 1)$ and $\hat{D} = \frac{1}{N} \sum_{n=1}^{N} \delta_{\x_{T+n}}$ denote the empirical 
  distribution of the samples we use in Stage~2. In addition, we write $f_{\a}$ for $f(\cdot; \a, \V)$ where $\V$ 
  is the first-layer weights we have obtained in Stage~1 and $\X = (\x_{T+n})_{n=1}^N$. 
  
  Let $\a_* \in \R^m$ denote the second-layer weights we constructed in 
  Lemma~\ref{lemma: stage 2: existence of good a} and $\hat{\a} \in \R^m$ denote the ridge estimator obtained 
  via minimizing $\a \mapsto \norm{ f_* - f_{\a} }_{L^2(\hat{D})}^2 + \lambda \norm{\a}^2$. By the equivalence 
  between norm-constrained linear regression and ridge regression, there exists $\lambda > 0$ such that 
  \[
    \norm{ f_* - f_{\hat{\a}} }_{L^2(\hat{D})}^2
    \le \norm{ f_* - f_{\a_*} }_{L^2(\hat{D})}^2
    \quad\text{and}\quad 
    \norm{\hat\a} \le \norm{\a_*}. 
  \]
  Choose this $\lambda$ and let $\cF := \braces{ f(\cdot; \a) \,:\, \norm{\a} \le \norm{\a_*} }$ be our hypothesis 
  class. Note that $f_{\hat{\a}} \in \cF$. Moreover, we have 
  \begin{align*}
    \norm{f_{\hat{\a}} - f_*}_{L^1(D)} 
    &= \left( \norm{f_{\hat{\a}} - f_*}_{L^1(D)} - \norm{f_{\hat{\a}} - f_*}_{L^1(\hat{D})} \right) 
      + \norm{f_{\hat{\a}} - f_*}_{L^1(\hat{D})} \\
    &\le \sup_{\a\,:\,\norm{\a} \le \norm{\a_*}} \left( \norm{f_{\a} - f_*}_{L^1(D)} - \norm{f_{\a} - f_*}_{L^1(\hat{D})} \right) 
      + \norm{f_{\hat{\a}} - f_*}_{L^1(\hat{D})} \\
    &\le \sup_{\a\,:\,\norm{\a} \le \norm{\a_*}} \left( \norm{f_{\a} - f_*}_{L^1(D)} - \norm{f_{\a} - f_*}_{L^1(\hat{D})} \right) 
      + \norm{f_{\a_*} - f_*}_{L^2(\hat{D})}, 
  \end{align*}
  where we used the fact that $\norm{f_{\hat{\a}} - f_*}_{L^1(\hat{D})} \le \norm{f_{\hat{\a}} - f_*}_{L^2(\hat{D})}
  \le \norm{f_{\a_*} - f_*}_{L^1(\hat{D})}$ in the last line. 

  Now, we bound the first term. Let $\bsigma := (\sigma_n)_{n=1}^N$ be i.i.d.~Rademacher variables that are also independent of 
  everything else. By symmetrization and Theorem~7 of \cite{meir_generalization_2003}, we have 
  \begin{align*}
    & \E_{\X}\left[ \sup_{\a\,:\,\norm{\a} \le \norm{\a_*}} \left( \norm{f_{\a} - f_*}_{L^1(D)} - \norm{f_{\a} - f_*}_{L^1(\hat{D})} \right)  \right] \\
    \le\;& 2 \E_{\X, \bsigma} \sup_{\a\,:\,\norm{\a} \le \norm{\a_*}} \frac{1}{N} \sum_{t=1}^N \sigma_t \left| f_a(\x_{T+n}) - f_*(\x_{T+n}) \right| \\
    \le\;& 2 \E_{\X, \bsigma} \sup_{\a\,:\,\norm{\a} \le \norm{\a_*}} \frac{1}{N} \sum_{t=1}^N \sigma_t \left( f_a(\x_{T+n}) - f_*(\x_{T+n}) \right) \\
    \le\;& 
      \frac{2}{N} \E_{\X, \bsigma} \sup_{\a\,:\,\norm{\a} \le \norm{\a_*}} \sum_{t=1}^N \sigma_t f_a(\x_{T+n}) 
      + \cancelto{0}{2 \E_{\X, \bsigma} \frac{1}{N} \sum_{t=1}^N \sigma_t f_*(\x_{T+n})}. 
  \end{align*}
  Note that the first term is two times the Rademacher complexity $\mrm{Rad}_N(\cF)$ of $\cF$ (see, for example, 
  Chapter~4 of \cite{wainwright_high-dimensional_2019}). By (the proof of) Lemma~48 of \cite{damian_neural_2022}, we have 
  \begin{align*}
    \mrm{Rad}_N(\cF) 
    \le \frac{\norm{\a_*}}{\sqrt{N}} \sqrt{ \E_{\x \sim \Gaussian{0}{\Id_d}} \norm{ \phi(\V\x) }^2 } 
    &= \frac{\norm{\a_*}}{\sqrt{N}} 
      \sqrt{ \sum_{k=1}^{m} \E_{\x \sim \Gaussian{0}{\Id_d}}  \phi^2(\v_k \cdot \x) } \\
    &= \frac{\norm{\a_*} \sqrt{m}}{\sqrt{N}} 
      \sqrt{ \E_{x_1 \sim \Gaussian{0}{1}}  \phi^2(x_1) } \\
    &= \frac{2 \norm{\a_*} \sqrt{m}}{\sqrt{N}}.  
  \end{align*}
  In other words, we have 
  \[
    \E \sup_{\a\,:\,\norm{\a} \le \norm{\a_*}} \left( \norm{f_{\a} - f_*}_{L^1(D)} - \norm{f_{\a} - f_*}_{L^1(\hat{D})} \right)
    \le \frac{4 \norm{\a_*} \sqrt{m}}{\sqrt{N}}.  
  \]
  Hence, for any $\delta_{\P} \in (0, 1)$, by Markov's inequality, we have 
  \[
    \sup_{\a\,:\,\norm{\a} \le \norm{\a_*}} \left( \norm{f_{\a} - f_*}_{L^1(D)} - \norm{f_{\a} - f_*}_{L^1(\hat{D})} \right)
    \le \frac{4 \norm{\a_*} \sqrt{m}}{\sqrt{N} \delta_{\P}},
  \]
  with probability at least $1 - \delta_{\P}$. 
  Apply the same argument to $\norm{f_{\a_*} - f_*}_{L^2(\hat{D})}$ and recall from 
  Lemma~\ref{lemma: stage 2: existence of good a} that $\norm{f_{\a_*} - f_*}_{L^2(D)}^2 \le 10 L P^2 \eps_v$, and 
  we obtain
  \[ 
    \norm{f_{\hat{\a}} - f_*}_{L^1(D)} 
    \le \frac{8 \norm{\a_*} \sqrt{m}}{\sqrt{N} \delta_{\P}} + \sqrt{ 10 L P^2 \eps_v }, 
  \] 
  with probability at least $1 - 2 \delta_{\P}$.
\end{proof}

\section{Proof of the main theorem}
\label{sec: proof of the main theorem}

\mainThm*

\begin{proof}
  First, by Lemma~\ref{lemma: Typical structure at initialization}, we should choose 
  $m = \Theta\left( P \log(P/\delta_{\P}) \right)$ and the $\delta_0$ in Lemma~\ref{lemma: main lemma of stage 1.1}
  and Lemma~\ref{lemma: main lemma of stage 1.2} can be chosen to be $\Theta(1/\log P)$. 
  Meanwhile, by Lemma~\ref{lemma: stage 2: generalization error}, to achieve target $L^1$-error $\eps_*$ with 
  probability at least $1 - O(\delta_{\P})$, we need 
  \[
    N 
    \gtrsim \frac{P m}{\eps_*^2 \delta_{\P}^2}
    = \Theta\left( \frac{P^2 \log(P/\delta_{\P})}{\eps_*^2 \delta_{\P}^2} \right), \quad 
    \eps_v = O_\phi\left( \frac{\eps_*^2}{P^2} \right). 
  \]
  By Lemma~\ref{lemma: persample gradient}, we have $M_Z \lesssim_\phi P^{1/2}$ and 
  $\hat{M}_Z \lesssim_\phi  P^{1/2} \log^{\theta}\log(P/\delta_{\P})$ where $\theta = 1 / (2(1 + q))$.
  Then, to meet the conditions of Lemma~\ref{lemma: main lemma of stage 1.1} and Lemma~\ref{lemma: main lemma of stage 
  1.2} (uniformly over those $P$ good neurons), it suffices to choose 
  \[
    \eta 
    \lesssim_\phi 
    \frac{1}{\log^{2\theta+3}(d/\delta_{\P}) }  \left(
      \frac{1}{d P^{L/2}} 
      \wedge 
      \frac{\eps_*^2}{P \log(1/\eps_*)} 
    \right)
  \]
  Then, by Lemma~\ref{lemma: main lemma of stage 1.1} and Lemma~\ref{lemma: main lemma of stage 1.2}, 
  the numbers of iterations needed for Stage~1.1 and Stage~1.2 are $O_\phi( \log(d/P) / \eta )$ and 
  $O_\phi\left( \left( P^{L/2-1} + \log(1/\eps_v) \right) / \eta \right)$, respectively. Thus, the total number of 
  iterations is bounded by 
  \[
    T 
    = O_\phi\left( \frac{\log d + P^{L/2-1} + \log(P/\eps_*)}{\eta} \right)
    = \tilde{O}_\phi\left( 
      d P^{L-1}  
      \vee
      \frac{P^{L/2} \log(1/\eps_*)} {\eps_*^2}
    \right).
  \]
\end{proof}

\section{Stochastic Induction}
\label{sec: stochastic induction}

Our proof is essentially a large induction: When certain properties hold, we know how to analyze the dynamics and 
can show certain quantities are bounded with high probability. Meanwhile, certain properties hold as long as those 
quantities are still well-controlled. In the deterministic setting, this seemingly looped argument can be made 
formal by either mathematical induction (in discrete time) or the continuity argument (in continuous time). 
In this subsection, we show the same can also be done in the presence of randomness and derive a stochastic 
version of Gronwall's lemma and its generalizations. 

We start with an example where Doob's submartingale inequality can be directly used. 
Let $(\Omega, \cF, (\cF_t)_t, \P)$ be our filtered probability space and $(Z_t)_t$ be a martingale difference sequence. 
Suppose that $\E[ Z_{t+1}^2 \mid \cF_t ]$ is uniformly bounded by $\sigma_Z^2$. Then, by Doob's submartingale inequality, 
for any $M > 0$ and $T > 0$, we have 
\[
  \P\left[ \sup_{t \le T} \left| \sum_{s=1}^{t} Z_s \right| \ge M \right]
  \le M^{-2} \E\left( \sum_{s=1}^T Z_s \right)^2 
  = \frac{T \sigma_Z^2}{M^2}. 
\]
In particular, this implies that when $M = \omega( \sigma_Z \sqrt{T} )$, we have 
$\sup_{t \le T} \left| \sum_{s=1}^{t} Z_s \right| \le M$ with high probability. 

Note that there is no need to do any kind of ``induction'' in the above example because of the unconditional uniform 
bound on $\E[ Z_{t+1}^2 \mid \cF_t ]$.
However, things become subtle if instead of assuming $\E[ Z_{t+1}^2 \mid \cF_t ]$ is always bounded by $\sigma_Z^2$, we 
assume it to be bounded by $\sigma_Z^2$ when $\sup_{s \le t} \left| \sum_{r=1}^s Z_r \right| \le M$. Intuitively, since 
$M$ is chosen so that $\sup_{t \le T} \left| \sum_{s=1}^{t} Z_s \right| \le M$ holds with high probability, the bounds 
$\E[ Z_{t+1}^2 \mid \cF_t ] \le \sigma_Z^2$ should also hold with high probability and we can still use  
Doob's submartingale inequality as before. Now, we formalize this argument. 

\begin{lemma}
  Let $(Z_t)_t$ be a martingale difference sequence. Suppose that there exists $M, \sigma_Z > 0$ such that 
  if $\sup_{s \le t} \left| \sum_{r=1}^s Z_s \right| \le M$, then we have $\E[ Z_{t+1}^2 \mid \cF_t ] \le \sigma_Z^2$.
  Then, we have 
  \[
    \P\left[ \sup_{t \le T} \left| \sum_{s=1}^{t} Z_s \right| > M \right]
    \le \frac{T \sigma_Z^2}{M^2}. 
  \]
  Note that this bound is the same as the one we obtained with the assumption that 
  $\E[ Z_{t+1}^2 \mid \cF_t ] \le \sigma_Z^2$ always holds. 
\end{lemma}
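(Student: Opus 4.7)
The plan is to reduce the conditional bound to an unconditional one via a stopping time argument, and then apply Doob's $L^2$-maximal inequality in the standard way. Let $S_t := \sum_{s=1}^t Z_s$ and define the stopping time
\[
  \tau := \inf\braces{ t \ge 0 \,:\, |S_t| > M },
\]
which is an $(\cF_t)_t$-stopping time (with the convention $\inf \emptyset = \infty$). The key observation is that the condition $\sup_{s \le t} |S_s| \le M$ appearing in the hypothesis is exactly equivalent to the event $\{t < \tau\}$, which is $\cF_t$-measurable.

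Next I would introduce the stopped process $\tilde{Z}_{t+1} := Z_{t+1} \indi\{t < \tau\}$, so that its partial sums $\tilde{S}_t = S_{t \wedge \tau}$. Since $\indi\{t < \tau\}$ is $\cF_t$-measurable, $(\tilde Z_t)$ is still a martingale difference sequence, and crucially
\[
  \E[\tilde Z_{t+1}^2 \mid \cF_t]
  = \indi\{t < \tau\} \E[Z_{t+1}^2 \mid \cF_t]
  \le \sigma_Z^2,
\]
where the inequality now holds \emph{unconditionally}: on $\{t < \tau\}$ the hypothesis of the lemma kicks in, and on $\{t \ge \tau\}$ the indicator vanishes. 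Consequently $\E \tilde S_T^2 = \sum_{s=1}^T \E \tilde Z_s^2 \le T \sigma_Z^2$.

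Doob's $L^2$-maximal inequality for the martingale $\tilde S$ then gives
\[
  \P\left[ \sup_{t \le T} |\tilde S_t| \ge M \right]
  \le \frac{\E \tilde S_T^2}{M^2}
  \le \frac{T \sigma_Z^2}{M^2}.
\]
To finish, I would observe that the event $\{\sup_{t \le T} |S_t| > M\}$ coincides with $\{\tau \le T\}$, and on this event $|\tilde S_\tau| = |S_\tau| > M$, so $\sup_{t \le T} |\tilde S_t| > M$ as well. Thus the original event is contained in the event controlled by Doob's inequality, which yields the claimed bound.

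There is no real obstacle here beyond being careful that the stopping-time truncation preserves both the martingale-difference property and the conditional second-moment bound; the only point worth double-checking is the $\cF_t$-measurability of $\indi\{t < \tau\}$, which is automatic because $\tau$ is a stopping time.
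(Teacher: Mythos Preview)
Your proposal is correct and is essentially identical to the paper's own proof: define the stopping time $\tau$ at the first exit past $M$, truncate to $\tilde Z_{t+1} = Z_{t+1}\indi\{t<\tau\}$ so that the conditional variance bound holds unconditionally, apply Doob's $L^2$-inequality to the stopped martingale, and then observe that the original exit event coincides with (or is contained in) the exit event for the stopped process.
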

\begin{proof}
  Consider the stopping time $\tau := \inf\{ t \ge 0 \;:\; \left| \sum_{s=1}^{t} Z_s \right| > M  \}$. 
  By definition, we have $\sup_{s \le t} \left| \sum_{r=1}^s Z_s \right| \le M$ for all $t \le \tau$.
  Then, we define $Y_{t+1} = Z_{t+1} \indi\{ t < \tau \}$. Note that $(Y_t)$ is a martingale difference sequence
  with $\E[ Y_{t+1}^2 \mid \cF_t ] \le \sigma_Z^2$. As a result, by Doob's submartingale inequality, we have 
  \(
    \P\left[ \sup_{t \le T} \left| \sum_{s=1}^{t} Y_s \right| > M \right]
    \le T \sigma_Z^2 / M^2. 
  \)
  To relate it to $(Z_t)_t$, we compute 
  \begin{align*}
    \P\left[ \sup_{t \le T} \left| \sum_{s=1}^{t} Z_s \right| > M \right]
    = \P\left[ 
        \sup_{t \le T} \left| \sum_{s=1}^{t} Z_s \right| > M
        \wedge
        \tau \le T 
      \right] 
    &= \P\left[ 
        \left| \sum_{s=1}^{\tau} Z_s \right| > M
        \wedge
        \tau \le T 
      \right] \\
    &= \P\left[ 
        \left| \sum_{s=1}^{\tau} Y_s \right| > M
        \wedge
        \tau \le T 
      \right] \\
    &\le \frac{T \sigma_Z^2}{M^2}, 
  \end{align*}
  where the first and second identities comes from the definition of $\tau$ and the third from the fact $Z_t = Y_t$
  for all $t \le \tau$. 
\end{proof}

Now, we consider a more complicated case, where the process of interest is not a pure martingale. Suppose that the 
process $(X_t)_t$ satisfies 
\[
  X_{t+1} = (1 + \alpha) X_t + \xi_{t+1} + Z_{t+1},  
  \quad X_0 = x_0 > 0,
\]
where the signal growth rate $\alpha > 0$ and initialization $x_0 > 0$ are given and fixed, $(\xi_t)_t$ is an 
adapted process, and $(Z_t)_t$ is a martingale difference sequence. In most cases, $(\xi_t)_t$ will represent the 
higher-order error terms. 

Our goal is control the difference between $X_t$ and its deterministic counterpart 
$x_t = (1 + \alpha)^t x_0$.
To this end, we recursively expand the RHS to obtain 
\begin{align*}
  X_{t+1}
  &= (1 + \alpha)^2 X_{t-1} 
    + (1 + \alpha) \xi_t + \xi_{t+1} 
    + (1 + \alpha) Z_t + Z_{t+1} \\
  &= (1 + \alpha)^{t+1} x_0 
    + \sum_{s=1}^t (1 + \alpha)^{t-s} \xi_{s+1} 
    + \sum_{s=1}^t (1 + \alpha)^{t-s} Z_{s+1}.  
\end{align*}
Divide both sides with $(1 + \alpha)^{t+1}$ and replace $t+1$ with $t$. Then, the above becomes 
\[
  X_t (1 + \alpha)^{-t} 
  = x_0 
    + \sum_{s=1}^t (1 + \alpha)^{-s} \xi_s 
    + \sum_{s=1}^t (1 + \alpha)^{-s} Z_s.  
\]
Note that $\left( (1 + \alpha)^{-t} Z_t \right)_t$ is still a martingale difference sequence. Ideally, $|\xi_t|$ should 
be small as it represents the higher-order error terms, and we have bounds on the conditional variance of 
$Z_t$ so that we can apply Doob's submartingale inequality to the last term. Unfortunately, in many cases, 
since $\xi_{t+1}$ and $Z_{t+1}$, particularly their maximum and (conditional) variance, can potentially depend on 
$(X_s)_{s \le t}$, we may only be able to assume $|\xi_{t+1}| \le (1 + \alpha)^t \Xi$ with probability at least $1 - \delta_{\P, \xi}$ 
(for each $t$) and $\E[ Z_{t+1}^2 \mid \cF_t ] \le (1 + \alpha)^t \sigma_Z^2$ for some $\xi_{\P, \xi}$, $\Xi$ and $\sigma_Z^2$ when, say, $X_t = (1 \pm 0.5) x_t$. Still, we can use the previous argument to 
estimate the probability that $X_t \notin (1 \pm 0.5) x_t$ for some $t \le T$. We now formalize this argument. 
In addition, instead of Doob's $L^2$ submartingale inequality, we will use the following extension of Freedman's 
inequality, which allows us to improve the dependence on failure probability from linear to poly-logarithmic. The 
proof of this lemma is deferred to the end of this section.

\begin{lemma}[Freedman's inequality with subweibull variables]
  \label{lemma: freedman, subweibull}
  Let $\{Z_t\}_t$ be a martingale difference sequence that is conditionally $(\sigma^2, \theta)$-subweibull, i.e.,
  \[
    \P\left[ |Z_t| \ge M \mid \cF_{t-1} \right]
    \le C \exp\left( -\left( M / \sigma \right)^{1/\theta} \right), \quad 
    \forall M \ge 0, 
  \]
  for some universal constant $C > 0$. Then, for any $\delta_{\P} \in (0, 1)$, we have 
  \[
    \left| \sum_{t=1}^{T} Z_t \right|
    \lesssim_\theta \sigma \sqrt{T \log^{\theta+1}\left( T/\delta_{\P} \right) },
    \quad\text{with probability at least $1 - \delta_{\P}$}.
  \]
\end{lemma}

\stochasticInductionGronwall*
\begin{remark}
  This lemma can be easily generalized to cases where we have multiple induction hypotheses. For example, 
  if we have another process $X_{t+1}' = (1 + \alpha') X_t' + \xi_{t+1}' + Z_{t+1}'$ and we need both $X_t 
  = (1 \pm 0.5) x_t$ and $X_t' = (1 \pm 0.5) x_t'$ for the bounds on $|\xi_{t+1}|, |\xi'_{t+1}|$, $\E[ Z_{t+1}^2 \mid \cF_t]$,
  $\E[(Z'_{t+1})^2 \mid \cF_t]$ to hold. In this case, the final failure probability will be bounded by 
  $T(\delta_{\P, \xi} + \delta_{\P, \xi'}) + 2 \delta_{\P}$. 
\end{remark}
\begin{remark}
  If the recurrence relationship is $X_{t+1} \le (1 + \alpha) X_t + \xi_{t+1} + Z_{t+1}$, and we only want an upper 
  bound, then we can replace $x_0$ with any $x_0^+ \ge x_0$ in \eqref{eq: conditions of stochastic gronwall} and 
  the definition of the deterministic process $(x_t)$.
\end{remark}
\begin{proof}
  Let $\tau := \inf\braces{ t \ge 0 \,:\, X_t \notin (1 \pm \delta) x_t }$ and set 
  $\hat{\xi}_{t+1} := \xi_{t+1} \indi\{t \le \tau\}$ and $\hat{Z}_{t+1} := Z_{t+1} \indi\{t \le \tau\}$.
  Clear that $\tau$ is a stopping time, $\hat\xi$ is adapted, and $\hat{Z}$ is still a martingale difference sequence. 
  Moreover, by our hypotheses, we have $|\hat{\xi}_t| \le (1 + \alpha)^t \Xi$ with probability at least 
  $1 - \delta_{\P, \xi}$ and $\hat{Z}_{t+1}$ is conditionally $((1 + \alpha)^t \sigma_Z^2, \theta)$-subweibull.
  As a result, 
  \[
    \left| \sum_{s=1}^t (1 + \alpha)^{-s} \hat{\xi}_s \right| 
    \le \Xi t 
    \le T \Xi \quad 
    \text{with probability at least $1 - T \delta_{\P, \xi}$},
  \]
  and by Lemma~\ref{lemma: freedman, subweibull}, 
  \[
    \sup_{t \in [T]} \left| \sum_{s=1}^t (1 + \alpha)^{-s} Z_s \right|
    \le \sigma_Z \sqrt{ T \log^{\theta+1}(T / \delta_{\P}) }
    \quad 
    \text{with probability at least $1 - \delta_{\P}$}.
  \]
  Hence, for any $\delta_{\P} \in (0, 1)$, if we assume 
  \[
    \Xi \lesssim \frac{x_0}{T}
    \quad\text{and}\quad 
    \sigma_Z^2  
    \lesssim \frac{x_0^2}{T \log^{\theta+1}(T / \delta_{\P}) }, 
  \]
  then with probability at least $1 - \delta_{\P} - T \delta_{\P, \xi}$, we have 
  \[
    \left| 
      \sum_{s=1}^t (1 + \alpha)^{-s} \hat{\xi}_s 
      + \sum_{s=1}^t (1 + \alpha)^{-s} \hat{Z}_s
    \right| 
    \le \frac{x_0}{2}, 
    \quad 
    \forall t \in [T]. 
  \]
  Recall that 
  \[
    X_t 
    = (1 + \alpha)^t \left(
      x_0 
      + \sum_{s=1}^t (1 + \alpha)^{-s} \xi_s 
      + \sum_{s=1}^t (1 + \alpha)^{-s} Z_s.  
    \right)
    \quad\text{and}\quad 
    x_t = (1 + \alpha)^t x_0.
  \]
  Then, we compute 
  \begin{align*}
    \P\left[ \exists t \in [T], X_t \notin (1 \pm 0.5) x_t \right]
    &= \P\left[ \exists t \in [T], X_t \notin (1 \pm 0.5) x_t \wedge \tau \le T \right] \\
    &= \P\left[ X_{\tau} \notin (1 \pm 0.5) x_{\tau} \wedge \tau \le T \right] \\
    &= \P\left[ 
        \left|
          \sum_{s=1}^\tau (1 + \alpha)^{-s} \xi_s 
          + \sum_{s=1}^\tau (1 + \alpha)^{-s} Z_s
        \right|
        \ge 0.5 x_0
        \wedge \tau \le T 
      \right] \\
    &= \P\left[ 
        \left|
          \sum_{s=1}^\tau (1 + \alpha)^{-s} \hat{\xi}_s 
          + \sum_{s=1}^\tau (1 + \alpha)^{-s} \hat{Z}_s
        \right|
        \ge 0.5 x_0
        \wedge \tau \le T 
      \right] \\
    &\le \delta_{\P} + T \delta_{\P, \xi}. 
  \end{align*}
\end{proof}

The above lemmas will be used in Stage~1.1 to estimate the growth rate of the signals. The next lemma considers 
the case where $\alpha$ is $0$ and will be used to show the gap between the largest and the second-largest coordinates
can be preserved during Stage~1.1.

\begin{lemma}
  \label{lemma: stochastic gronwall when alpha = 0}
  Suppose that $(X_t)_t$ satisfies
  \[
    X_{t+1} \le X_t + \xi_{t+1} + Z_{t+1},  
    \quad X_0 = x_0 > 0,
  \]
  where the signal growth rate $\alpha > 0$ and initialization $x_0 > 0$ are given and fixed, $(\xi_t)_t$ is an 
  adapted process, and $(Z_t)_t$ is a martingale difference sequence. 
  
  Let $T > 0$ and $\delta_{\P} \in (0, 1)$ be given. 
  Suppose that there exists some $\delta_{\P, \xi} \in (0, 1)$ and $\Xi, \sigma_Z > 0$ such that 
  for every $t \le T$, $|\xi_t| \le \Xi$ with probability at least $1 - \delta_{\P, \xi}$ and 
  $Z_{t+1}$ is conditionally $(\sigma_Z^2, \theta)$-subweibull. 
  Then, we have 
  \[
    \sup_{t \le T} \left| X_t - x_0 \right|
    \le T \Xi + \sigma_Z \sqrt{ T \log^{\theta+1}(T/\delta_{\P}) }
    \quad\text{with probability at least $1 - T \delta_{\P, \xi} - \delta_{\P}$}. 
  \]
\end{lemma}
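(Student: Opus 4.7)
The plan is to mimic the proof of Lemma~\ref{lemma: stochastic discrete gronwall}, replacing the geometric weights $(1+\alpha)^{-s}$ by plain $1$'s since $\alpha = 0$, and replacing the role of $x_0/2$ (the tolerance around the deterministic trajectory) by the tolerance $R := T\Xi + \sqrt{T\sigma_Z^2/\delta_{\P}}$ that appears in the statement. Unrolling the recursion yields the clean decomposition
\[
  X_t - x_0 = \sum_{s=1}^{t} \xi_s + \sum_{s=1}^{t} Z_s, \qquad t \le T,
\]
so it is enough to bound the deterministic-error sum by $T\Xi$ and the martingale sum by $\sqrt{T\sigma_Z^2/\delta_{\P}}$, each with the appropriate probability.

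The technical subtlety is that the assumed bounds on $|\xi_{t+1}|$ and on $\E[Z_{t+1}^2\mid\cF_t]$ are only known to hold on the event $\{|X_t - x_0| \le R\}$. I would handle this with the same stopping-time truncation used in the filtered-process argument above Lemma~\ref{lemma: stochastic discrete gronwall}. Define
\[
  \tau := \inf\{t \ge 0 \,:\, |X_t - x_0| > R\},
\]
and set $\hat\xi_{t+1} := \xi_{t+1}\indi\{t < \tau\}$, $\hat Z_{t+1} := Z_{t+1}\indi\{t < \tau\}$. Then $\hat\xi$ is adapted and $\hat Z$ is still a martingale difference sequence (since $\{t < \tau\} \in \cF_t$), and by construction $|\hat\xi_{t+1}| \le \Xi$ holds with probability at least $1-\delta_{\P,\xi}$ unconditionally, while $\E[\hat Z_{t+1}^2\mid\cF_t] \le \sigma_Z^2$ holds unconditionally.

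Now the two standard ingredients apply directly to the truncated sums. A union bound over $t \in [T]$ gives $|\sum_{s=1}^{t}\hat\xi_s| \le T\Xi$ uniformly with probability at least $1 - T\delta_{\P,\xi}$. For the martingale part, Doob's $L^2$-submartingale inequality applied to $(\sum_{s=1}^t \hat Z_s)_t$ gives
\[
  \P\!\left[\sup_{t \le T}\Big|\sum_{s=1}^{t}\hat Z_s\Big| \ge \sqrt{T\sigma_Z^2/\delta_{\P}}\right]
  \le \frac{T\sigma_Z^2}{T\sigma_Z^2/\delta_{\P}} = \delta_{\P}.
\]
Combining these two events by a union bound shows $\sup_{t\le T}|\sum_{s=1}^t\hat\xi_s + \sum_{s=1}^t\hat Z_s| \le R$ with probability at least $1 - T\delta_{\P,\xi} - \delta_{\P}$.

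Finally, I would convert the conclusion for the truncated process back to one for $(X_t)$ exactly as in the earlier lemma: on $\{\tau \le T\}$ the quantities $X_\tau$ and $\hat{}$-processes agree up to time $\tau$, so the event $\{\exists t \le T,\, |X_t - x_0| > R\}$ is contained in $\{\sup_{t\le T}|\sum \hat\xi_s + \sum \hat Z_s| > R\}$, yielding the stated probability bound. I don't anticipate a real obstacle here; the only thing to be careful with is that the induction-style hypothesis is exactly strong enough to apply to times $t < \tau$, and that $\hat Z$ inherits the martingale-difference property from the $\cF_t$-measurability of $\{t < \tau\}$, which is what makes Doob legal.
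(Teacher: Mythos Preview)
Your proposal is correct and follows essentially the same approach as the paper's proof: the same recursion unrolling, the same stopping-time truncation with $\hat\xi_{t+1}=\xi_{t+1}\indi\{t<\tau\}$ and $\hat Z_{t+1}=Z_{t+1}\indi\{t<\tau\}$, the same union bound on the $\hat\xi$-sum, the same Doob $L^2$-inequality on the martingale part, and the same conversion back via $\{\tau\le T\}$.
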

\begin{proof}
  Recursively expand the RHS, and we obtain 
  \[
    X_t \le x_0 + \sum_{s=1}^{t} \xi_s + \sum_{s=1}^{t} Z_s. 
  \]
  Clear that 
  \[ 
    \sup_{t \le T} \left| \sum_{s=1}^{t} \xi_t \right|
    \le T \Xi 
    \quad\text{with probability at least $1 - T \delta_{\P, \xi}$}. 
  \] 
  Meanwhile, by Lemma~\ref{lemma: freedman, subweibull}, we have 
  \[
    \sup_{t \le T} \left| \sum_{s=1}^{t} \hat{Z}_s \right|
    \le \sigma_Z \sqrt{ T \log^{\theta+1}(T/\delta_{\P}) }
    \quad\text{with probability at least $1 - \delta_{\P}$}. 
  \]
  Combine the above bounds and we complete the proof. 
\end{proof}

Now, we consider the case where the signal grows at a polynomial instead of linear rate. This lemma will be used in 
Stage~1.2, where the $L$-th order terms dominate. We will need the following estimations on the corresponding deterministic 
process. Its proof is deferred to the end of this section.  

\begin{lemma}
  \label{lemma: xt+1 = xt + a xtp}
  Consider the process $x_{t+1} = x_t + \alpha x_t^p$ where $x_0, \alpha$ are small positive real numbers and $p > 1$. 
  Let $T$ be the time $x_t$ first goes above $1$. We have 
  \[
    T \lesssim \frac{1}{(p - 1) \alpha x_0^{p-1}} 
    \quad\text{and}\quad
    \sum_{t=0}^{T-1} x_t 
    \lesssim \frac{1}{p \alpha x_0^{p-2}},
    \quad 
    \text{if $\alpha \lesssim x_0^{p-1}/p$}.
  \]
\end{lemma}
\begin{remark}
  This lemma provides upper bounds on the time needed for $x_t$ to grow from $x_0 = o(1)$ to $1$ and 
  the sum of $x_t$ in this process. Note that the second upper bound is essentially $T x_0$. Intuitively, this is 
  because due to the sharp transition behavior of this polynomial system, $x_t \approx x_0$ for most of the time. 
\end{remark}

\begin{lemma}
  \label{lemma: appendix: stochastic gronwall (polynomial)}
  Let $(X_t)_t$ be a non-negative stochastic process satisfying
  \begin{equation}
    \label{eq: appendix: stochastic gronwall (polynomial)}
    X_{t+1}
    \ge X_t + \alpha X_t^p + Z_{t+1} + \xi_{t+1}, 
    \quad 
    X_0 = x_0 > 0,
  \end{equation}
  where $\alpha > 0$, $(Z_{t+1})_t$ is a martingale difference sequence, and $(\xi_t)_t$ is an adapted process. 
  Let $\hat{x}_t$ be the solution to the deterministic recurrence relationship
  \(
    \hat{x}_{t+1} = \hat{x}_t + \alpha \hat{x}_t^p, \hat{x}_0 = x_0 / 2.
  \)

  Let $\delta_{\P} \in (0, 1)$ be given and 
  \(
    T := \inf\braces{ t \ge 0 \,:\, X_t \ge 1 }.
  \)
  Suppose that there exists $\Xi, \sigma_Z > 0$ and $\delta_{\P, \xi} \in (0, 1)$ such that if 
  $X_t \ge \hat{x}_t$ and $t \le T$, we have $|\xi_t| \le \Xi X_t$ with probability at least $1 - \delta_{\P, \xi}$ and 
  $Z_{t+1}$ is conditionally $(\sigma_Z^2 X_t, \theta)$-subweibull.
  Then, if 
  \[
    \alpha \lesssim x_0^{p-1}/p, \quad 
    \Xi \lesssim p \alpha x_0^{p-1}, \quad 
    \sigma_Z^2 \lesssim_\theta \frac{ \alpha x_0^p }{\log^{\theta+1}\left( \log(1/x_0) / ( \alpha x_0^{p-1} \delta_{\P} ) \right)},
  \]
  then with probability at least $1 - \delta_{\P, \xi} / \left( \alpha (x_0/2)^{p-1}\right) - \delta_{\P}$, we have 
  $T \lesssim \left( p \alpha (x_0/2)^{p-1}\right)\inv$ and $X_t \ge \hat{x}_t$ for all $t \le T$.
\end{lemma}
\begin{proof}
  Note that we can rewrite \eqref{eq: appendix: stochastic gronwall (polynomial)} as $X_{t+1} \ge X_t (1 + \alpha X_t^{p-1})
  + \xi_t + Z_t$ and view it as the linear recurrence relationship in Lemma~\ref{lemma: stochastic discrete gronwall}
  with a non-constant growth rate. This suggests defining the counterpart of $(1 + \alpha)^t$ as 
  \[
    P_{s, t} 
    := \begin{cases}
      \prod_{r=s}^{t-1} ( 1 + \alpha X_r^{p-1} ), & t > s, \\
      1, & t = s. 
    \end{cases}
  \]
  Then, we can unroll \eqref{eq: appendix: stochastic gronwall (polynomial)} as 
  \begin{align*}
    X_1 
    &\ge X_0 \left( 1 + \alpha X_0^{p-1} \right) + \xi_1 + Z_1, \\
    X_2 
    &\ge \left(
        X_0 \left( 1 + \alpha X_0^{p-1} \right) + \xi_1 + Z_1
      \right) 
      \left( 1 + \alpha X_1^{p-1} \right) 
      + \xi_2 + Z_2 \\
    &\ge X_0 \left( 1 + \alpha X_0^{p-1} \right) \left( 1 + \alpha X_1^{p-1} \right) 
      + \left( 1 + \alpha X_1^{p-1} \right) \left( \xi_1 + Z_1 \right)
      + \xi_2 + Z_2 \\
    &= X_0 P_{0, 2}
      + P_{1, 2} \left( \xi_1 + Z_1 \right)
      + \xi_2 + Z_2, \\
    X_3 
    &\ge X_2 \left( 1 + \alpha X_2^{p-1} \right) + \xi_3 + Z_3 \\
    &\ge \left(
        X_0 P_{0, 2}
        + P_{1, 2} \left( \xi_1 + Z_1 \right)
        + \xi_2 + Z_2
      \right)
      \left( 1 + \alpha X_2^{p-1} \right) 
      + \xi_3 + Z_3 \\
    &= X_0 P_{0, 3} 
        + P_{1, 3} \left( \xi_1 + Z_1 \right)
        + P_{2, 3}  \left( \xi_2 + Z_2 \right)
        + \xi_3 + Z_3. 
  \end{align*}
  Continue the above expansion, and eventually we obtain 
  \[
    X_t 
    \ge X_{t_0} P_{t_0, t}
      + \sum_{s=t_0}^{t-1} P_{s+1, t} \left( \xi_{s+1} + Z_{s+1} \right), \quad \forall t \ge t_0 \ge 0.
  \]
  Since $X$ is non-negative, we have $P_{s, t} \ge 1 > 0$. Hence, we can rewrite the above as 
  \[
    P_{t_0, t}\inv X_t 
    \ge X_{t_0}
      + \sum_{s=t_0}^{t-1} P_{t_0, s+1}\inv \left( \xi_{s+1} + Z_{s+1} \right), \quad \forall t \ge t_0 \ge 0.
  \]

  We wish the repeat the argument in the proof of Lemma~\ref{lemma: stochastic discrete gronwall}, showing that 
  the last term is smaller than $x_0 / 2$. Unfortunately, this approach will not work directly.
  We have only assumed $|\xi_{t+1}| \le \Xi X_t$ and $\E[Z_{t+1}^2 | \cF_t] \le \sigma_Z^2 X_t$. Since 
  $X_t$ can be much larger than $\hat{x}_t$, we cannot directly use our assumption to control the size of noises. 
  On the other hand, note that if $X_t \gg \hat{x}_t$, the induction hypothesis will less likely be violated, so 
  in principle, $X_t \gg \hat{x}_t$ should help us. To ``enforce'' the $X_t \lesssim \hat{x}_t$ condition, we consider 
  the following recoupling strategy: whenever $X_t \ge 4 \hat{x}_t$, we restart $\hat{x}_t$ at $X_t / 2$. 
  This recoupling will only increase the value of $\hat{x}_t$, and it ensures $X_t \lesssim \hat{x}_t$ always hold.
  
  We now formalize the above argument. To this end, let $\Phi_t: \R_{>0} \to \R_{>0}$ be the 
  flow map of the recurrence relationship $x_{t+1} = x_t + \alpha x_t^p$. That is, $\Phi_s(x)$ is the value of 
  $x_s$ if $(x_s)_s$ is generated by $x_{t+1} = x_t + \alpha x_t^p$ with $x_0 = x$. Then, we inductively define the 
  following sequences of ``deterministic'' processes and stopping times:
  \begin{align*}
    & \hat{x}_t\ps{0}
    = \Phi_t( X_0/2 ), \quad 
    && \iota\ps{1}
    = \inf\braces{ t \ge 0 \,:\, X_t \ge 4 \hat{x}_t\ps{0} },  \\
    & \hat{x}_t\ps{k}
    = \Phi_{t - \iota\ps{k}}\left( X_{\iota\ps{k}}/2 \right),  
    && \iota\ps{k+1}
    = \inf\braces{ t > \iota\ps{k} \,:\, X_t \ge 4 \hat{x}_t\ps{k} }, \quad \forall k \ge 1.
  \end{align*}
  In words, $\iota\ps{k}$ is the time we switch to the $k$th coupling. By construction, 
  By construction, $\hat{x}_t\ps{k}$ in non-decreasing in both $t$ and $k$, $0 =: \iota\ps{0} <  \cdots < \iota\ps{k} < \cdots$,
  and $\hat{x}\ps{k}_{\iota\ps{k}} = X_{\iota\ps{k}}/2 \ge 2 \hat{x}\ps{k-1}_{\iota\ps{k}} 
  \ge 2 \hat{x}\ps{k-1}_{\iota\ps{k-1}} \ge \cdots \ge 2^{k-1} x_0$. In particular, the last property implies that 
  there are only finitely many couplings before $\hat{x}\ps{k}_t$ reaches any fixed constant. 

  Then, we abuse notations, redefining 
  \[
    \hat{x}_t := \sum_{k=0}^{\infty} \indi\braces{ \iota\ps{k} \le t < \iota\ps{k+1} } \hat{x}\ps{k}_t. 
  \]
  Clear that at each $t$, only one summand is nonzero. By construction, we always have $X_t \le 4 \hat{x}_t$.
  Since this $\hat{x}_t$ is no smaller than the original one, it suffices to bound the probability that 
  $X_t \le \hat{x}_t$ for some $t \le T$. 
  Note that $X_t \le \hat{x}_t$ if and only if 
  there exists some $k \in \mbb{N}_{\ge 0}$ with $t \in \left[ \iota\ps{k}, \iota\ps{k+1} \right)$ such that 
  \[
    X_{\iota\ps{k}} + \sum_{\iota\ps{k}=0}^{t-1} P_{\iota\ps{k}, s+1}\inv \left( \xi_{s+1} + Z_{s+1} \right) 
    \le P_{\iota\ps{k}, t}\inv \hat{x}_t\ps{k}. 
  \]
  In addition, note that if $X_s \ge \hat{x}_s$ for all $s < t$, then we have 
  $P_{\iota\ps{k}, t}\inv \hat{x}_t\ps{k} \le \hat{x}\ps{k}_{\iota\ps{k}} = X_{\iota\ps{k}}/2$.
  Therefore,
  \[
    \exists t, X_t \le \hat{x}_t 
    \;\Rightarrow\;
    \exists k \in \mbb{N}_{\ge 0}, t \in \left[ \iota\ps{k}, \iota\ps{k+1} \right) \text{ s.t. }
    \left\{
    \begin{aligned}
      & X_s \ge \hat{x}_s, \forall s < t, \\
      & \sum_{s=\iota\ps{k}}^{t-1} P_{\iota\ps{k}, s+1}\inv \left( \xi_{s+1} + Z_{s+1} \right) 
        \le -\hat{x}\ps{k}_{\iota\ps{k}}.
    \end{aligned}
    \right.
  \]
  In other words, it suffices to upper bound the probability that RHS happens before $t$. 
  To this end, we define 
  $\tau := \inf \braces{ t \ge 0 \,:\, X_t \le \hat{x}_t }$, $\hat{\xi}_{t+1} = \xi_{t+1}\indi\{ t < \tau \}$,
  and $\hat{Z}_{t+1} = Z_{t+1} \indi\{ t < \tau \}$. Then, we can further rewrite the above as 
  \begin{multline*}
    \exists t \le T, X_t \le \hat{x}_t   \\
    \Rightarrow\;
    \exists k \in \mbb{N}_{\ge 0}, t \in \left[ \iota\ps{k}, \iota\ps{k+1} \right) \text{ s.t. }
      \left|
        \sum_{s=\iota\ps{k}}^{(t \wedge T) -1} P_{\iota\ps{k}, s+1}\inv \hat{\xi}_{s+1} 
      \right|
      + \left|
        \sum_{s=\iota\ps{k}}^{(t \wedge T)-1} P_{\iota\ps{k}, s+1}\inv \hat{Z}_{s+1} 
      \right|
      \ge \hat{x}\ps{k}_{\iota\ps{k}}.
  \end{multline*}

  We now estimate the last term as follows. First, for $(\xi_t)_t$, we have $|\hat{\xi}_{t+1}| 
  \le \Xi X_t \le 4 \Xi \hat{x}\ps{k}_t$ if $t \in [\iota\ps{k}, \iota\ps{k+1})$. Therefore, 
  \[
    \left|
      \sum_{s=\iota\ps{k}}^{(t \wedge T) -1} P_{\iota\ps{k}, s+1}\inv \hat{\xi}_{s+1} 
    \right|
    \le 4 \Xi \left| \sum_{s=\iota\ps{k}}^{(t \wedge T) -1} \hat{x}\ps{k}_{s+1} \right|
    \lesssim \frac{\Xi}{ p \alpha [ \hat{x}\ps{k}_{\iota\ps{k}} ]^{p-2} },
  \]
  where the second inequality comes from Lemma~\ref{lemma: xt+1 = xt + a xtp}. For the RHS to be smaller than 
  $\hat{x}\ps{k}_{\iota\ps{k}}$, it suffices to require
  \[
    \Xi 
    \lesssim p \alpha [ \hat{x}\ps{k}_{\iota\ps{k}} ]^{p-1} 
    \quad\Leftrightarrow\quad 
    \Xi \lesssim p \alpha x_0^{p-1} .
  \]
  Also, by Lemma~\ref{lemma: xt+1 = xt + a xtp}, when the induction hypothesis is true, we have 
  $T \lesssim \left( \alpha x_0^{p-1} \right)\inv$. Thus, the above implies that with probability at least 
  $1 - \delta_{\P, \xi} / ( \alpha x_0^{p-1} )$, the total contribution of $(\xi_t)_t$ is small, as long as 
  $\Xi \lesssim p \alpha x_0^{p-1} $. 

  Then, we consider the martingale difference terms. Note that $P_{\iota\ps{k}, t+1}\inv \hat{Z}_{t+1}$ is a martingale
  difference sequence that is conditionally $( 4 \sigma_Z^2 \hat{x}\ps{k}_{\iota\ps{k}}, \theta)$-subweibull. Hence, for each 
  $k$, by Lemma~\ref{lemma: freedman, subweibull}, we have 
  \begin{align*}
    \left| \sum_{s=\iota\ps{k}}^{(t \wedge T)-1} P_{\iota\ps{k}, s+1}\inv \hat{Z}_{s+1}  \right|
    &\lesssim_\theta 
      \sqrt{ 
        \sigma_Z^2 \hat{x}\ps{k}_{\iota\ps{k}}
        \frac{\log^{\theta+1}\left( \left( \alpha [\hat{x}\ps{k}_{\iota\ps{k}} ]^{p-1} \delta_{\P, Z} \right)\inv \right)}{
          \alpha [\hat{x}\ps{k}_{\iota\ps{k}} ]^{p-1}
        } 
      } \\
    &\lesssim_\theta 
      \sigma_Z
      \sqrt{  
        \frac{\log^{\theta+1}\left( 1 / ( \alpha x_0^{p-1} \delta_{\P, Z} ) \right)}{
          \alpha x_0^{p-2}
        } 
      },
  \end{align*}
  with probability at least probability at least $1 - \delta_{\P, Z}$. For the RHS to be smaller than $x_0$, it suffices 
  to require 
  \[
    \sigma_Z^2 
    \lesssim_\theta 
      \frac{ \alpha x_0^p }{\log^{\theta+1}\left( 1 / ( \alpha x_0^{p-1} \delta_{\P, Z} ) \right)} .
  \]
  Recall that we recouple at most $O(\log(1/x_0))$ times. Hence, it suffices to replace $\delta_{\P, Z}$
  with $\delta_{\P}/\log(1/x_0)$ to ensure the total contribution of $(Z_t)_t$ is small. 
\end{proof}

\subsection{Deferred proofs}

\begin{proof}[Proof of Lemma~\ref{lemma: freedman, subweibull}]
  First, consider the case where $\sigma = 1$. 
  Let $M \ge 1$ be a parameter to be chosen later. Then, define $\hat{Z}_t = Z_t\indi\braces{ |Z_t| \le M }$ and write 
  \[
    \sum_{t=1}^{T} Z_t 
    = \sum_{t=1}^{T} \left( \hat{Z}_t - \E\left[ \hat{Z}_t \mid \cF_{t_1} \right] \right)
      + \sum_{t=1}^{T} \E\left[ \hat{Z}_t \mid \cF_{t_1} \right]
      + \sum_{t=1}^{T} Z_t \indi\braces{ |Z_t| > M }   
    =: \Term_1 + \Term_2 + \Term_3.
  \]
  Since $Z_t$ is conditionally $(1, \theta)$-subweibull, we have 
  \[
    \P(\Term_3 \ne 0) 
    \le \P\left( \exists t \in [T], |Z_t| \ge M \right) 
    \le C T \exp\left( -M^{1/\theta} \right).
  \]
  For the last term to be bounded by $\delta_{\P}$, it suffices to choose 
  \[
    M \ge \log^\theta\left( CT / \delta_{\P} \right).
  \]
  Then, we consider $\Term_2$. Since $\E[ Z_t \mid \cF_{t-1} ] = 0$, we have
  \[
    \E\left[ \hat{Z}_t \mid \cF_{t-1} \right] 
    = \E\left[ \hat{Z}_t - Z_t \mid \cF_{t-1} \right]
    = \E\left[ Z_t \indi\{ |Z_t| > M \} \mid \cF_{t-1} \right].
  \]
  For the last term, using the layer cake representation, we obtain 
  \begin{align*}
    \left| \E\left[ Z_t \indi\{ |Z_t| > M \} \mid \cF_{t-1} \right] \right|
    &\le \E\left[ |Z_t| \indi\{ |Z_t| \ge M \} \mid \cF_{t-1} \right] \\
    &= \int_0^\infty \P\left( 
        |Z_t| \indi\{ |Z_t| \ge M \} \ge s
        \mid \cF_{t-1}
      \right) \,\rd s \\
    &= \int_0^\infty \P\left( |Z_t| \ge M \vee s \mid \cF_{t-1} \right) \,\rd s \\
    &= M \P\left( |Z_t| \ge M \mid \cF_{t-1} \right) 
      + \int_M^\infty \P\left( |Z_t| \ge s \mid \cF_{t-1} \right) \,\rd s. 
  \end{align*}
  Therefore, for each summand in $\Term_2$, we have 
  \begin{align*}
    \left| \E\left[ \hat{Z}_t \mid \cF_{t-1} \right]  \right|
    &\le C M \exp\left( - M^{1/\theta} \right)
      + \int_M^\infty 
        C \exp\left( - s^{1/\theta} \right)
      \,\rd s \\
    &= C M \exp\left( - M^{1/\theta} \right)
      + C \sigma \int_{M^{1/\theta}}^\infty e^{-s} s^{\theta-1} \,\rd s. 
  \end{align*}

  Note that if $s / \log s \ge 2 (\theta - 1)$, we have $e^{-s} s^{\theta-1} = e^{-s + (\theta - 1)\log s } \le e^{-s/2}$.
  Hence, if we choose $M$ such that 
  \[
    \frac{ M^{1/\theta} }{ \log\left( M^{1/\theta} \right) }
    \ge 2 (\theta - 1)
    \quad\Leftarrow\quad 
    \frac{ M  }{ \log^\theta M  }
    \ge \left( \frac{2 (\theta - 1)}{\theta} \right)^\theta,
  \] 
  then we have 
  \[
    C \int_{M^{1/\theta}}^\infty e^{-s} s^{\theta-1} \,\rd s
    \le C \int_{M^{1/\theta}}^\infty e^{-s/2} \,\rd s
    = 2 C \exp\left( - \frac{1}{2} M^{1/\theta} \right) .
  \]
  As a result, we have 
  \[
    |\Term_2|
    \le C T \left( 
        M \exp\left( - M^{1/\theta} \right)
        + 2 \exp\left( - \frac{1}{2} M^{1/\theta} \right)
      \right) 
    \le 4 C M T \exp\left( - \frac{1}{2} M^{1/\theta} \right).
  \]
  Finally, consider $\Term_1$. Note that $( \hat{Z}_t - \E[\hat{Z}_t \mid \cF_{t-1}] )_t$ is a martingale difference
  that is bounded by $2 M$ and has conditional variance bounded by $C_\theta$ for some $C_\theta > 0$. Therefore,
  by Bernstein's inequality, we have 
  \[
    \P( |\Term_1| \ge K )
    \le 2 \exp\left( - \frac{( K / \sqrt{T} )^2}{ C_\theta + 2 M } \right).
  \]
  For the RHS to be bounded by $\delta_{\P}$, it suffices to require
  \[
    K \ge \sqrt{T} \sqrt{ \left( C_\theta + 2 M \right) \log\left( 2 / \delta_{\P} \right) }
  \]
  Finally, combining the above analysis, we obtain 
  \[
    \left| \sum_{t=1}^{T} Z_t \right|
    \le 4 C M T \exp\left( - M^{1/\theta} / 2 \right)
      + \sqrt{T} \sqrt{ \left( C_\theta + 2 M \right) \log\left( 2 / \delta_{\P} \right) },
  \]
  with probability at least $1 - 2 \delta_{\P}$, where $M \ge \log^\theta\left( CT / \delta_{\P} \right)$ and 
  $M / \log^\theta M  \ge \left( \frac{2 (\theta - 1)}{\theta} \right)^\theta$. 
  Now, we simplify the RHS as follows. 
  Note that 
  \begin{multline*}
    4 C M T \exp\left( - \frac{1}{2} M^{1/\theta} \right)
    \le \sqrt{T} \sqrt{ 2 M \log\left( 2 / \delta_{\P} \right) } \\
    \Leftarrow\quad
    \exp\left( \frac{1}{2} \log M - \frac{1}{2} M^{1/\theta} \right)
    \le \frac{\sqrt{ 2 \log\left( 2 / \delta_{\P} \right) }}{4 C \sqrt{T}}  \\
    \Leftarrow\quad
    \frac{M}{\log^\theta M}  \ge 2^\theta , \quad 
    M \ge  4^\theta \log^\theta\left( \frac{8 C^2 T}{ \log\left( 2 / \delta_{\P} \right) }\right). 
  \end{multline*}
  In other words, we can choose 
  \[
    M = \Theta_{\theta}\left( \log^\theta(T/\delta_{\P}) \right), 
  \]
  and obtain 
  \[
    \left| \sum_{t=1}^{T} Z_t \right|
    \lesssim_\theta  \sqrt{T \log^{\theta+1}\left( T/\delta_{\P} \right) }. 
  \]
  Finally, for general $\sigma > 0$, it suffices to note that if $X$ is $(\sigma^2, \theta)$-subweibull, then $X/\sigma$ is 
  $(1, \theta)$-subweibull.

\end{proof}

\begin{proof}[Proof of Lemma~\ref{lemma: xt+1 = xt + a xtp}]
  First, we consider the upper bound on $T$. We compute
  \begin{align*}
    \alpha
    = \frac{x_{t+1} - x_t}{x_t^p}
    = \frac{x_{t+1}^p}{x_t^p} \frac{x_{t+1} - x_t}{x_{t+1}^p}
    &= \frac{x_{t+1}^p}{x_t^p} \int_{x_t}^{x_{t+1}} \frac{1}{x_{t+1}^p} \,\rd y \\
    &\le \frac{x_{t+1}^p}{x_t^p} \int_{x_t}^{x_{t+1}} \frac{1}{y^p} \,\rd y
    = \frac{x_{t+1}^p}{x_t^p} \frac{1}{p - 1} \left( \frac{1}{x_t^{p-1}} - \frac{1}{x_{t+1}^{p-1}} \right). 
  \end{align*}
  In addition, note that 
  \(
    x_{t+1}^p / x_t^p
    = \left( 1 + \alpha x_t^{p-1} \right)^p
    \le \left( 1 + \alpha \right)^p
    \le e^{\alpha p}.
  \)
  Therefore, 
  \[
    \alpha
    \le \frac{e^{\alpha p}}{p - 1} \left( \frac{1}{x_t^{p-1}} - \frac{1}{x_{t+1}^{p-1}} \right)
    \quad\Rightarrow\quad
    \frac{1}{x_{t+1}^{p-1}}  \le \frac{1}{x_t^{p-1}} - \frac{(p - 1) \alpha }{e^{\alpha p}} .
  \]
  Sum both sides from $0$ to $t-1$ and we get 
  \[
    \frac{1}{x_t^{p-1}}  \le \frac{1}{x_0^{p-1}} - \frac{t (p - 1) \alpha }{e^{\alpha p}}
    \quad\Rightarrow\quad 
    x_t \ge \left( \frac{1}{x_0^{p-1}} - e^{-\alpha p} (p - 1) \alpha t \right)^{-\frac{1}{p-1}}.
  \]
  In particular, this implies 
  \[
    T \le \left( \frac{1}{x_0^{p-1}} - 1 \right) \frac{e^{\alpha p}}{(p - 1) \alpha }.
  \]

  Now, we consider the upper bound on $\sum_t x_t$. Let $(\tilde{x}_h)_h$ be the solution to the continuous-time ODE
  $\frac{\rd}{\rd t} \tilde{x}_h = \tilde{x}_h^p$ with $\tilde{x}_0 = x_0$. Note that $\tilde{x}$ is increasing and 
  therefore
  \[
    \tilde{x}_{(t+1)\alpha}
    = \tilde{x}_{t\alpha} + \int_0^\alpha \tilde{x}_{t\alpha + r}^p \,\rd r
    \ge \tilde{x}_{t\alpha} + \alpha a \tilde{x}_{t\alpha}^p.  
  \]
  Hence, by induction, we have $\tilde{x}_{t\alpha} \ge x_t$ for all $t$. 
  In addition, $\tilde{x}_h$ has the closed-form formula:
  \[
    \tilde{x}_h
    = \left( \frac{1}{x_0^{p-1}} - (p - 1) h \right)^{-\frac{1}{p-1}}.
  \]
  Thus, 
  \begin{align*}
    \sum_{t=0}^{T-1} x_t 
    \le \sum_{t=0}^{T-1} \tilde{x}_{t\alpha}
    \le \alpha\inv \sum_{t=0}^{T-1} \int_{t\alpha}^{(t+1)\alpha} \tilde{x}_s \,\rd s
    &= \frac{1}{\alpha} \int_0^{T\alpha} \tilde{x}_h \,\rd h  \\
    &= \frac{1}{\alpha} \int_0^{T\alpha} \left( \frac{1}{x_0^{p-1}} - (p - 1) h \right)^{-\frac{1}{p-1}} \,\rd h.
  \end{align*}
  When $p = 2$, we have 
  \[
    \sum_{t=0}^{T-1} x_t 
    \le \frac{1}{\alpha} \int_0^{T\alpha} \left( \frac{1}{x_0} - h \right)\inv \,\rd h
    = \frac{1}{\alpha} \log\left( \frac{1}{1 - x_0 T \alpha} \right)
    \le \frac{1}{\alpha} \log\left( \frac{1}{ 1 - e^{2 \alpha} + x_0 e^{2 \alpha} } \right)
    \le \frac{2}{\alpha},
  \]
  as long as $\alpha \lesssim x_0$ so that $\tilde{x}_{T\alpha} = O(1)$. 
  When $p > 2$, to have $\tilde{x}_{T \alpha} \le 2$, it suffices to have 
  \[
    \frac{1}{x_0^{p-1}} - (p - 1) \alpha T  
    \ge \frac{1}{2^{p-1}}
    \quad\Leftarrow\quad
    e^{\alpha p}  
    - \frac{e^{\alpha p} - 1 }{x_0^{p-1}}
    \ge \frac{1}{2^{p-1}}
    \quad\Leftarrow\quad
    \alpha 
    \lesssim x_0^{p-1}/p.
  \]
  Let $\tilde{T}$ be the time $\tilde{x}_h$ reaches $2$. We have $\tilde{T} \le \frac{1}{(p - 1) x_0^{p-1}}$ and 
  \begin{align*}
    \sum_{t=0}^{T-1} x_t 
    &\le \frac{1}{\alpha} \int_0^{\tilde{T}} \left( \frac{1}{x_0^{p-1}} - (p - 1) h \right)^{-\frac{1}{p-1}} \,\rd h \\
    &= \frac{1}{\alpha} 
      \frac{1}{p - 2}
      \left( \frac{1}{x_0^{p-1}} - (p - 1) \tilde{T} \right)^{-\frac{1}{p-1}} \\
      &\qquad\times
      \left(
        (p - 1) \tilde{T}
        + \frac{1}{x_0^{p-1}} 
        \left(
          -1 
          + \left(\frac{1}{x_0^{p-1}}\right)^{-\frac{1}{p-1}}
           \left( \frac{1}{x_0^{p-1}} - (p - 1) T \right)^{\frac{1}{p-1}}
        \right)
      \right) \\
    &\lesssim 
      \frac{1}{(p - 2) \alpha} 
      \frac{1}{x_0^{p-2}}. 
  \end{align*}
\end{proof}

\section{Simulation}
\label{sec: simulation}

We include simulation results for Stage~1 in this section. The goal here is to provide empirical evidence that 
(i) if we have both the second- and $L$-th order terms, then the sample complexity of online SGD scales 
linearly with $d$ and (ii) without the higher-order terms, online SGD cannot recovery the exact directions. 

The setting is the same as the one we have described in Section~\ref{sec: setup}. We choose the hyperparameters 
roughly according to Theorem~\ref{thm: main}.
To reduce the demand of computational resources, we choose $m = \Theta(P^2)$ instead of $\tilde{\Omega}(P^8)$. Note 
that by the Coupon Collector problem, we need $m = \Omega(P \log P)$ to ensure that for each $p \in [P]$, there exists
at least one neuron $\v$ with $v_p^2 \ge \max_{q \le P} v_q^2$. Since we are mostly interested in the dependence
on $d$, for the learning rate, we choose $\eta = c / d$, where $c$ is a tunable constant that is independent of $d$ 
but can depend on everything else. $T$ is chosen according to Theorem~\ref{thm: main} and we early-stop the 
training when for all $p \in [P]$, there exists a neuron with $v_p^2 \ge 0.95$ (in the moving average sense). 

All experiments are performed on the authors' laptop without using GPUs, and it takes less than one day to complete
the experiments.

\begin{figure}[h]
  \centering
  \begin{subfigure}{.45\textwidth}
    \centering
    \includegraphics[width=.95\linewidth]{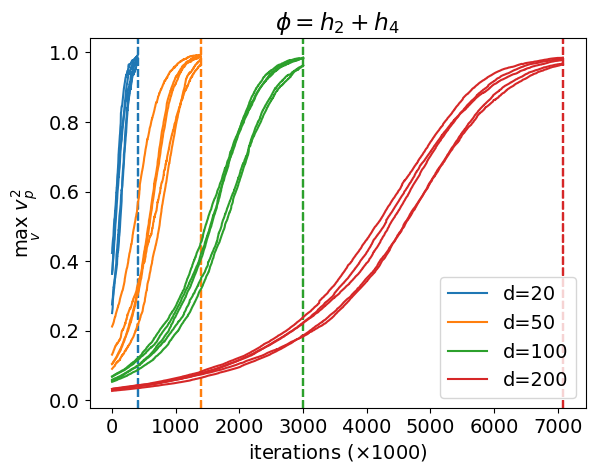}
  \end{subfigure}%
  \begin{subfigure}{.45\textwidth}
    \centering
    \includegraphics[width=.95\linewidth]{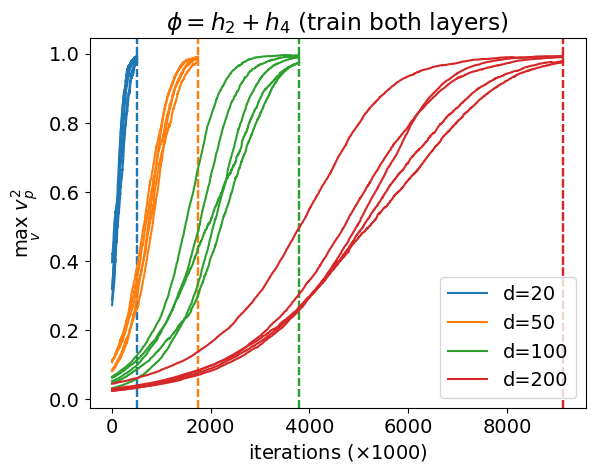}
  \end{subfigure} \\
  \begin{subfigure}{.45\textwidth}
    \centering
    \includegraphics[width=.95\linewidth]{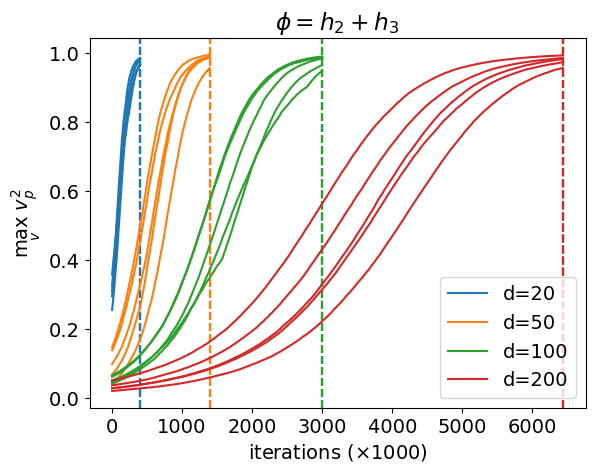}
  \end{subfigure}
  \begin{subfigure}{.45\textwidth}
    \centering
    \includegraphics[width=.95\linewidth]{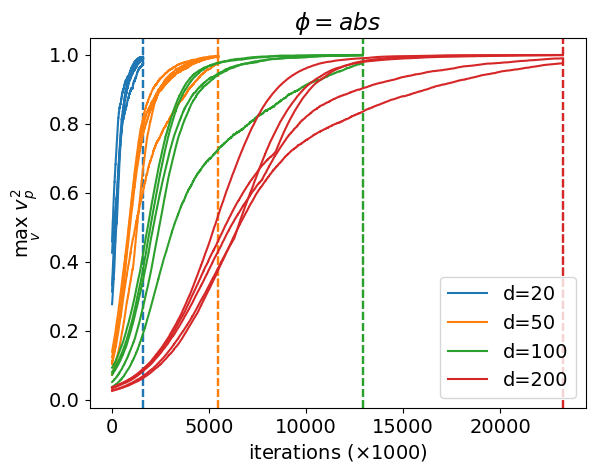}
  \end{subfigure} \\
  \caption{Recovery of directions. 
    The above plots show the evolution of the correlation with each of the ground-truth directions.
    We fix the relevant dimension $P = 5$ and vary the ambient dimension $d$. Different colors represent different 
    $d$. For each color, one curve represents $\max_{\v} v_p^2$ for one $p \in [P]$. 
    In the first row, the link function is $\phi = h_2 + h_4$.  
    In the left plot, we use the algorithm \eqref{eq: training algorithm}, while in the right plot, 
    we train both layers simultaneously. 
    The second row contains simulation results for other link functions. 
  }
  \label{fig: learning the directions (h2 + h4 and abs)}
  \end{figure}

\end{document}